\newtheorem{lemma}{Lemma}
\newcommand{\occ}{o}
\newcommand\blue[1]{\textcolor{blue}{#1}}
\newcommand\yellow[1]{\textcolor{orange}{#1}}
\ifcvprfinal\pagestyle{empty}\fi
\newcommand{\LwithF}{\mathcal{L}}
\newcommand{\LwithoutF}{\mathcal{L}{\setminus\!\{\!f\!\}}}
\begin{document}

\title{Semantic 3D Reconstruction with Continuous Regularization and Ray Potentials Using a Visibility Consistency Constraint}
\author{Nikolay Savinov, Christian H\"ane, \v Lubor Ladick\'y and Marc Pollefeys\\
ETH Z\"urich, Switzerland\\
{\tt\small \{nikolay.savinov,christian.haene,lubor.ladicky,marc.pollefeys\}@inf.ethz.ch}
}

\maketitle
\thispagestyle{empty}

\begin{abstract}
\vspace{-0.25cm}
We propose an approach for dense semantic 3D reconstruction which uses a data term that is defined as potentials over viewing rays, combined with continuous surface area penalization. Our formulation is a convex relaxation which we augment with a crucial non-convex constraint that ensures exact handling of visibility. To tackle the non-convex minimization problem, we propose a majorize-minimize type strategy which converges to a critical point. We demonstrate the benefits of using the non-convex constraint experimentally. For the geometry-only case, we set a new state of the art on two datasets of the commonly used Middlebury multi-view stereo benchmark. Moreover, our general-purpose formulation directly reconstructs thin objects, which are usually treated with specialized algorithms. A qualitative evaluation on the dense semantic 3D reconstruction task shows that we improve significantly over previous methods. Source code is available at \url{https://github.com/nsavinov/ray_potentials/}.
\end{abstract}

\vspace{-0.45cm}
\section{Introduction}
One of the major goals in computer vision is to compute dense 3D geometry from images.
Recently, also approaches that jointly reason about the geometry and semantic segmentation have emerged \cite{Hane13}.
Due to the noise in the input data often strong regularization has to be performed.
Optimizing jointly over 3D geometry and semantics has the advantage that the smoothness for a surface can be chosen depending on the involved semantic labels and the normal direction to the surface.
Eventually, this leads to more faithful reconstructions that directly include a semantic labeling.

Posing the reconstruction task as a volumetric segmentation problem \cite{curless1996volumetric} is a widely used approach. A volume gets segmented into occupied and free space. In case of dense semantic 3D reconstruction, the occupied space label is replaced by a set of semantic labels \cite{Hane13}. To get smooth, noise-free reconstructions, the final labeling is normally determined by energy minimization.
The formulation of the energy comes with the challenge that the observations are given in the image space but the reconstruction is volumetric. Therefore each pixel of an image contains information about a ray composed out of voxels. This naturally leads to energy formulations with potential functions that depend on the configuration of a whole ray. Including such potentials in a naive way leads to (on current hardware) infeasible optimization problems. Hence, many approaches try to approximate such a potential. One often utilized strategy is to derive a per-voxel unary potential (cost for assigning a specific label to a specific voxel). However, this is only possible in a restricted setting and under a set of assumptions that often do not hold in practice.
By modeling the true ray potential, more faithful reconstructions are obtained \cite{Savinov15}. Thus, efficient ways to minimize energies with ray potentials, while at the same time being able to benefit from the joint formulation of 3D modeling and semantic segmentation, are desired.
\def \wcl {0.24} 
\begin{figure}
  \centering
  \includegraphics[width=\wcl\linewidth]{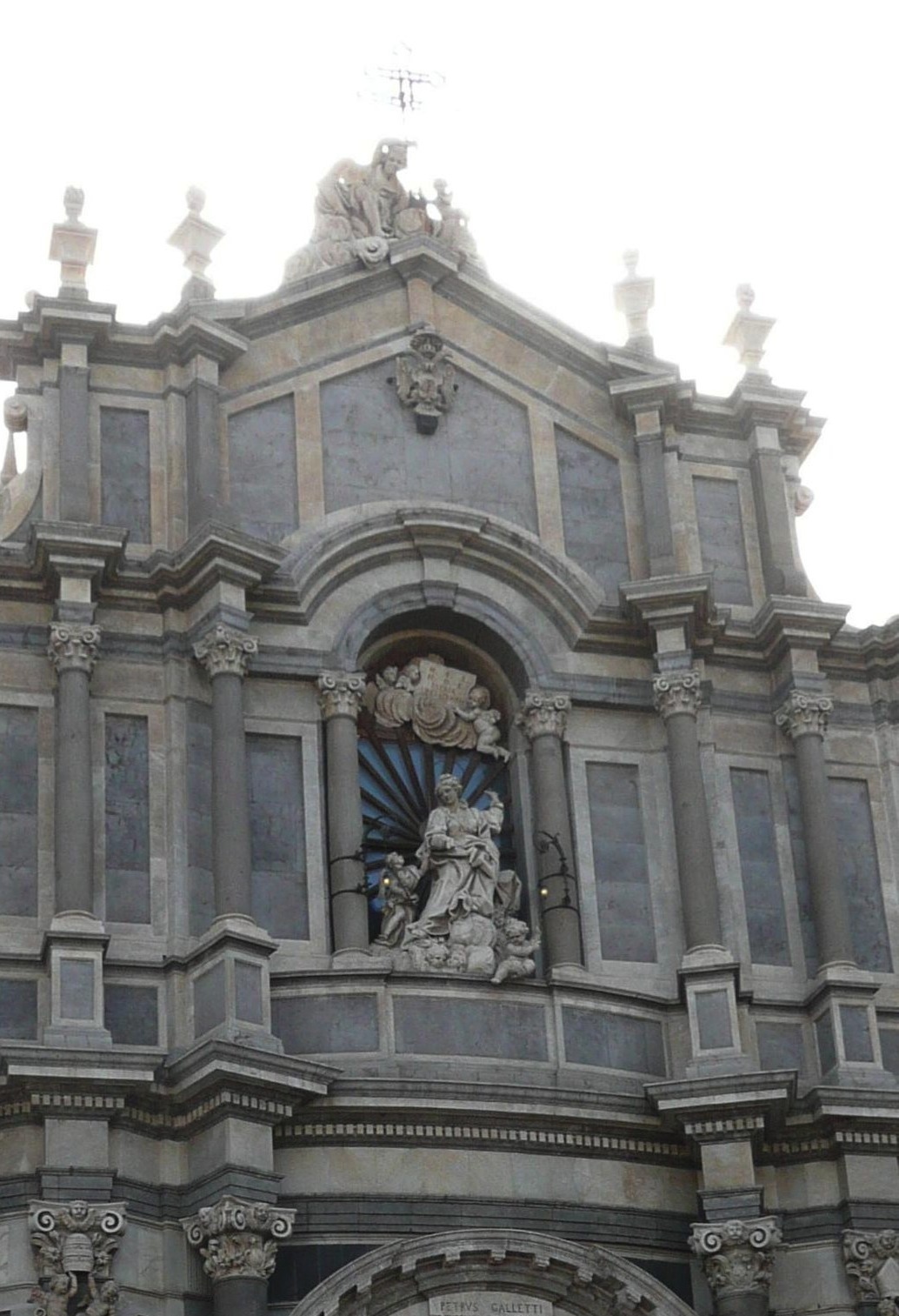}
  \includegraphics[width=\wcl\linewidth]{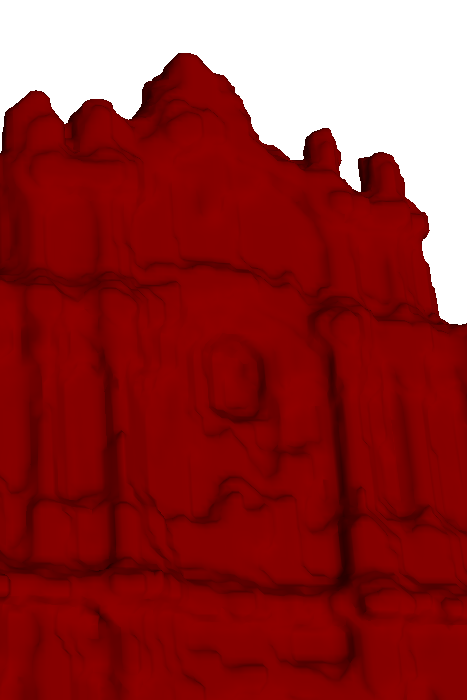}
  \includegraphics[width=\wcl\linewidth]{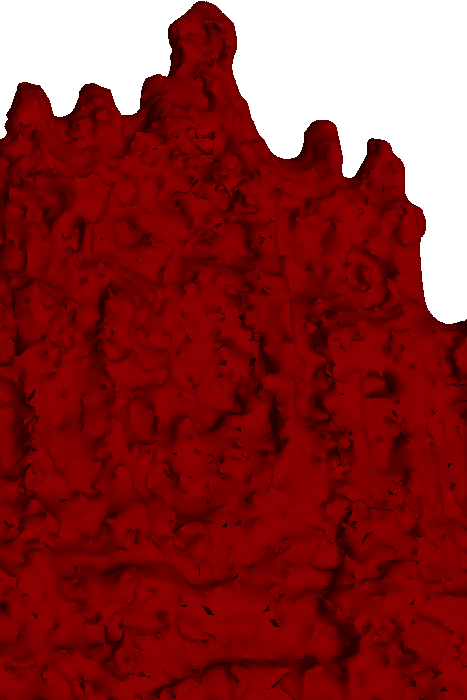}
    \includegraphics[width=\wcl\linewidth]{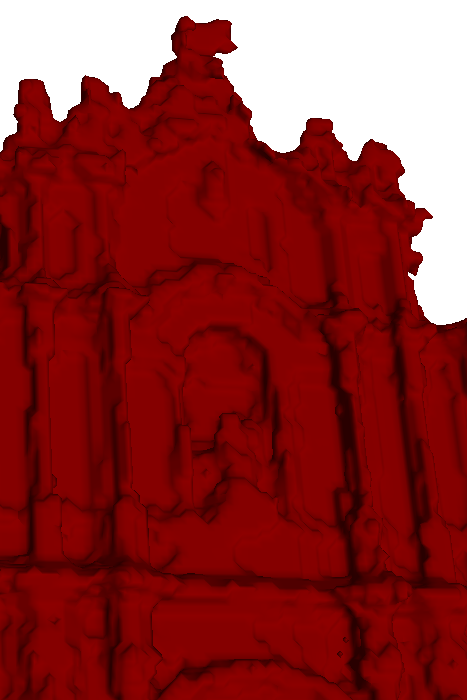}
   \caption{Left to right: example image, close-ups of \cite{Hane13}, \cite{Savinov15} and our proposed approach.}
\label{fig:closeUps}
\vspace{-0.25cm}
\end{figure}

In this work, we propose an energy minimization strategy for ray potentials that can be directly used together with continuously inspired surface regularization approaches and hence does not suffer from metrication artifacts~\cite{klodt2008experimental}, common to discrete formulations on grid graphs. When using our ray potential formulation for dense semantic 3D reconstruction, it additionally allows for the usage of class-specific anisotropic smoothness priors. Continuously inspired surface regularization approaches are formulated as convex optimization problems. We identify that a convex relaxation for the ray potential is weak and unusable in practice. We propose to add a non-convex term that handles visibility exactly and optimize the resulting energy by linearly majorizing the non-convex part. By regularly re-estimating the linear majorizer during the optimization, we devise an energy minimization algorithm with guaranteed convergence.

\subsection{Related Work}
Visibility relations in 3D reconstructions were studied for computing a single depth map out of multiple images \cite{kolmogorov2002multi, kolmogorov2003generalized}. To generate a full consistent 3D model from many depth maps, a popular approach is posing the problem in the volume \cite{curless1996volumetric}. To handle the noise in the input data a surface regularization term is added \cite{LempitskyB07,zach2007globally}. A discrete graph-based formulation is used in \cite{LempitskyB07} and continuous surface area penalization in \cite{zach2007globally}. One of the key questions is how to model the data term. Starting from depth maps, \cite{LempitskyB07,zach2007globally} model the 2.5D data as per-voxel unary potentials.
Such a modeling utilizes information contained in the depth map only partially.
Using a discrete graph formulation \cite{labatut2007delaunay,Vu2012} propose to model the free space between the camera center and the measured depth with pairwise potentials.

Another approach to modeling the data term is to directly use a photo-consistency-based smoothness term \cite{Sinha07,Hernandez07,kolev2008integration}. To resolve the visibility relations, image silhouettes are used. This is done either in the optimization as a constraint, meaning that the reconstruction needs to be consistent with the image silhouettes \cite{Sinha07, kolev2008integration}, or by deriving per-voxel occupancy probability \cite{Hernandez07}. Silhouette consistency is achieved through a discrete graph-cut optimization in \cite{Sinha07}, and with a convex-relaxation-based approach in the continuous domain in \cite{kolev2008integration}. The resulting relaxed problem in the latter case is not tight and hence does not generate binary solutions. Therefore, to guarantee silhouette consistency a special thresholding scheme is required. Handling visibility has also been done in mesh-based photo-consistency minimization \cite{delaunoy2011optimizing}.

To fully address the 2.5D nature of the input data, the true ray potential should be used, meaning the data cost depends on the first occupied voxel along the ray. What happens behind is unobserved and hence has no influence. This was formulated in \cite{pollard2007change, gargallo2007occupancy, liu2010ray, liu2014statistical, ulusoy2015rayPotentials} as a problem of finding a voxel labeling in terms of color and occupancy such that the first occupied voxel along a ray has a similar color as the pixel it projects to. One of the limitations all these works share is that they only compare colors of single pixels, which often does not give a strong enough signal to recover weakly textured areas. We use depth maps that are computed based on comparing image patches and interpret them as noisy input data containing outliers. We use regularization to handle the noise and outliers in the input data, but in contrast to other approaches with ray potentials that use a purely discrete graph-based formulation \cite{gargallo2007occupancy, liu2010ray, liu2014statistical} our proposed method is the first one that combines true ray potentials with a continuous surface regularization term. This allows us to set a new state of the art on two commonly used benchmark datasets. Unlike in any previous volumetric depth map fusion approach, thin surfaces do not pose problems in our formulation, due to an accurate representation of the input data.

Most earlier formulations of ray potentials are for purely geometry-based 3D reconstruction. Ours is more general and also allows to incorporate semantic labels.
\cite{Savinov15} shows that by using a discrete graph-based approach the true multi-label ray potential can be used as data term. Several artifacts present in the unary potential approximation \cite{Hane13} can be resolved using a formulation over rays. However, utilizing a discrete graph-based approach it is not directly possible to use the class-specific anisotropic regularization proposed in \cite{Hane13}. We bridge this gap and show how the full multi-label ray potential can be used together with continuously inspired anisotropic surface regularization \cite{chambolle2012convex, zach2014optimized}.

\section{Formulation}
In this section we will introduce the mathematical formulation that we are using to represent the dense semantic 3D reconstruction as an optimization problem. The problem is posed over a 3D voxel volume $\Omega \subset \mathbb{N}^3$. We denote the label $f=0$ as the free space label and introduce the set $\mathcal{L} = \{0, 1,\ldots,L\}$ of $L$ semantic labels, which represent the occupied space, plus the free space label. The final goal of our method is to assign a label $\ell \in \LwithF$ to each of the voxels. The label assignment is formalized using indicator variables $x_s^\ell \in \{0,1\}$ indicating if label $\ell$ is assigned at voxel $s \in \Omega$, ($x_s^\ell = 1$) or not.

We denote the vector of all per-voxel indicator variables as $\mathbf{x}$. Finally, the energy that we are minimizing in this paper has the form
\begin{align}
 E(\mathbf{x}) & = \psi_R(\mathbf{x}) + \psi_S(\mathbf{x}) \nonumber \\
  \mbox{subject to } & \sum_{\ell \in \LwithF} x_s^{\ell} = 1, \qquad x_s^{\ell} \in \{0,1\},
  \label{eq:energy}
\end{align}
\vskip-0.25cm
\noindent where $\sum_{\ell \in \mathcal{L}} x_s^\ell = 1$ guarantees that exactly one label is assigned to each voxel.
The objective contains two terms. The term $\psi_R(\mathbf{x})$, the ray potential, contributes the data to the optimization problem. This is in contrast to many other formulations where the data term is specified as local per-voxel preferences for the individual labels. The second term $\psi_S(\mathbf{x})$ is a smoothness term, which penalizes the surface area of the transitions between different labels. For this term, we utilize formulations originating from convex continuous multi-label segmentation \cite{chambolle2012convex}. As we will see in Sec.~\ref{sec:optimization} this smoothness term allows for class-specific anisotropic penalization of the interfaces between all labels. Due to the continuous nature of the regularization term, it does not suffer from metrication artifacts like most of the graph-based formulations.
The straightforward way to utilize such a smoothness term would be to use a convex relaxation of the ray potential. Unfortunately, convex relaxations of the ray potential do not seem to lead to strong formulations (\cf Fig.\ref{fig:slices}). In this paper we show how to resolve this problem by adding a non-convex constraint.
In Sec.~\ref{sec:rayPotential} we introduce the convex relaxation formulation of the ray potential and its non-convex extension. The regularization term and optimization strategy are discussed in Sec. \ref{sec:optimization}.

\begin{figure}
 \begin{minipage}{4cm}
 \centering
 \includegraphics[width=4cm]{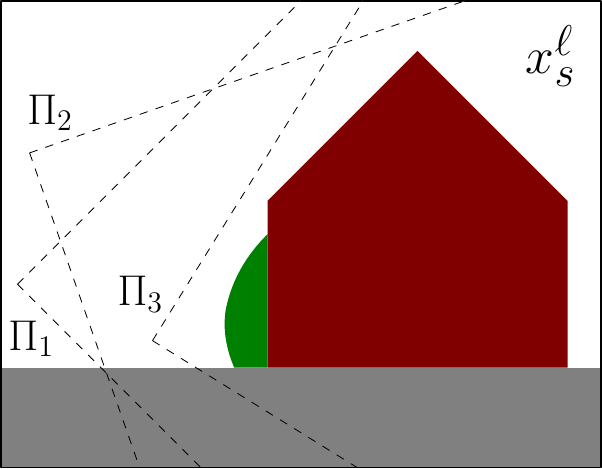} \\
 Global View
 \end{minipage}\hfill
 \begin{minipage}{4cm}
 \centering
 \includegraphics[width=4cm]{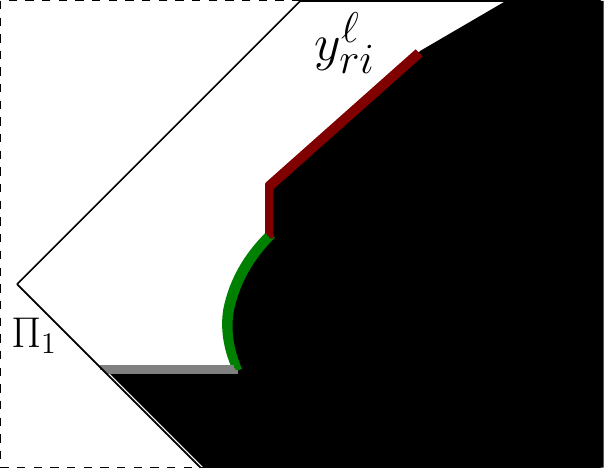} \hfill
 View from Camera $\Pi_1$
 \end{minipage}
 \caption{Variable Types: (left) the global $x_s^{\ell}$, indicate the label assigned to each voxel, (right) the per-ray variables $y_{ri}^{\ell}$ describe the visible surface.}
 \label{fig:variableTypes}
 \vspace{-0.25cm}
\end{figure}

\section{Ray Potential}
\label{sec:rayPotential}

The main idea of the ray potential \cite{Savinov15} is that for each ray, originating from an image pixel, a cost is induced that only depends on the position of the first non-free space label along the viewing ray or the ray is all free space. This means that the potential can take only linearly many (in the number of voxels along a ray) values, which is the reason why optimization of such potentials is tractable. Note that this is not a restriction we impose, it represents the fact that it is impossible to see behind occupied space. We denote the cost of having the first occupied space label at position $i$ with label $\ell \in \LwithF$ as $c_{ri}^{\ell}$ and the cost of having the whole ray free space as $c_r^{f}$.

The vector of indicator variables $x_s^{\ell}$ belonging to ray $r \in \mathcal{R}$ is denoted as $\mathbf{x}_r$. To index positions along a ray, we denote $s_{ri} \in \Omega$ as the positions of all the voxels belonging to ray $r \in \mathcal{R}$, where $i \in \{0,\cdots,N_r \}$ denotes the position index along the ray. Note that there exists only one $x_s^{\ell}$ variable per label for each voxel $s \in \Omega$, if $s_{ri}$ evaluates to the same position for different rays it refers to the same variable. Now we can state the ray potential part of the energy as a sum of potentials over rays
\begin{align}
 \psi_R(\mathbf{x}) &= \sum_{r \in \mathcal{R}} \psi_r(\mathbf{x}_r) \label{eq:rayPotential} \\
 \quad \psi_r(\mathbf{x}_r) & = \!\! \left( \! \sum_{\ell \in \LwithoutF} \sum_{i = 0}^{N_r} c_{ri}^{\ell} \!\! \left(\min_{j \leq i-1} \! x_{s_{rj}}^f \!\! \right) \! x_{s_{ri}}^\ell \!\! \right) \!\! + c_r^{f} \! \min_{j \leq N_r} \! x_{s_{rj}}^f \nonumber
\end{align}
with $\LwithoutF$ meaning the set of all labels excluding the free space label. The term $(\min_{j \leq i-1} x_{s_{ri}}^f)  x_{s_{ri}}^{\ell}$ is $1$ iff the first occupied label along the ray $r$ is $\ell$ at position $i$. Similarly, $\min_{j \leq N_r} x_{s_{rj}}^f$ equals $1$ iff the whole ray $r$ is free space. Thus, in Eq.~\ref{eq:rayPotential} only one term is non-zero, and its coefficient equals the desired cost of the ray configuration.

To make the derivations throughout the paper compact, we omit the last term without loss of generality by shifting the costs by a constant, $c_{ri}^\ell \leftarrow c_{ri}^\ell - c_r^f$ and $c_r^f \leftarrow 0$.

\subsection{Visibility Variables}

\begin{figure}
 \centering
 \includegraphics[width=0.8\linewidth]{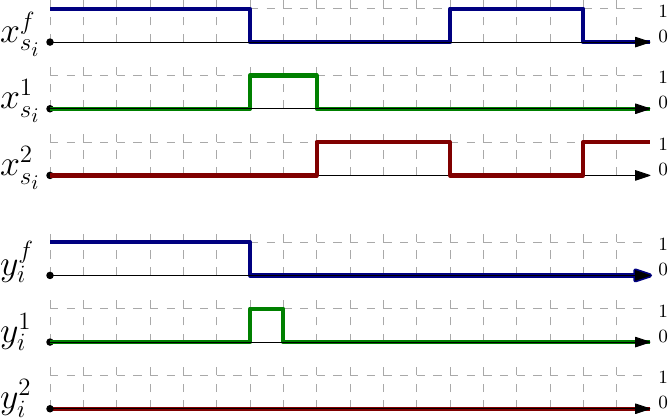}
 \caption{Example of variable assignments along a single viewing ray for a three-label problem.}
 \label{fig:exampleVariableAssignments}
 \vspace{-0.25cm}
\end{figure}

Before we state a convex relaxation of the ray potential, which we eventually augment with a non-convex constraint, we rewrite the potential using visibility variables. First, we introduce the visibility variables $y_{ri}^{\ell}$ indicating that the ray $r$ only contains free space up to the position $i-1$ and the label assigned at position $i$ is $\ell \in \LwithF$. 
\begin{align}
 y_{ri}^\ell & = \min(y_{r,i-1}^f, x_{s_{ri}}^\ell)
\label{eq:defVisVariables}
\end{align}
To anchor the definition we assume that the -1\textsuperscript{st} voxel of the ray has free space assigned, $y_{r,-1}^{f} = 1$. Note that if we insert all the nested definitions for a free space variable we get $y_{ri}^{f} = \min_{j \leq i} x_{s_{rj}}^f$.
Note that this variables are per-ray local variables and multiple ones can exist per voxel in case multiple rays cross that voxel in contrast to the global per-voxel variables $x_s$, which exist only once per voxel (\cf Fig.~\ref{fig:variableTypes}). In the remainder of Sec.~\ref{sec:rayPotential} we will drop the index $r$ at most places for better readability. We now state a reformulation of the ray potential as
\begin{align}
\psi_r({\bf x}_r) & = \sum_{\ell \in \LwithF} \sum_{i = 0}^{N} c_{i}^\ell y_{i}^\ell \label{eq:rayPotentialVisibility} \\
\mbox{subject to } \quad &  y_{i}^\ell = \min(y_{i-1}^f, x_{s_{i}}^\ell) \;\; \forall \ell \in \LwithF, \forall i, \nonumber
\end{align}
Here we introduced costs along the ray also for the free space label $c_i^f = 0, \, \forall i$. This does not change the potential but is required for our next step, where we reformulate the non-convex equality constraints as a series of inequality constraints. To make sure that the corresponding equalities are still satisfied in the optimum, we show that the costs $c_{i}^\ell$ can be replaced by non-positive ones without changing the minimizer of the energy. This means that the $y_{i}^\ell$ are bounded from above by the linear inequality constraints and are tight from below through the minimization of the cost function, so the resulting constraints model the same optimization problem. The inequality constraints read as follows
\begin{align}
 0 \leq y_{i}^\ell \leq y_{i-1}^{f}, \quad y_{i}^\ell \leq x_{s_{i}}^\ell \qquad \forall \ell \in \LwithF
\end{align}
To derive the transformation to non-positive costs, we first notice that after applying $\min(y_{i-1}^f, \cdot)$ to both sides of the constraint $\sum_{\ell \in \LwithF} x_{s_{i}}^\ell = 1$ from Eq.~\ref{eq:energy}, we can plug in the constraints of Eq.~\ref{eq:rayPotentialVisibility} to obtain 
\begin{equation}\label{tightness_condition}
y_{i-1}^f = \sum_{\ell \in \LwithF} y_{i}^\ell.
\end{equation}
Intuitively, this means if position $i-1$ is in the observed visible free space then the next position is either free space or one of the occupied space labels and if $i-1$ is in the occupied space then all the $y_i^{\ell}$ are $0$ (see Fig.~\ref{fig:exampleVariableAssignments}).
The cost transformation is done for every ray separately. Starting with the last position $i = N$, we add the following expression, which always evaluates to $0$, to the ray potential.
\begin{equation}
\left( \max_{\ell' \in \LwithF} c_{i}^{\ell'} \right) \left(y_{i-1}^f -\sum\limits_{\ell \in \LwithF} y_{i}^\ell\right) = 0
\end{equation}
This moves one non-negative term to the previous position and make all the $c_{i}^{\ell}$ for the current position non-positive.
\begin{align}
& c_{i-1}^f \leftarrow c_{i-1}^f + \max_{\ell' \in \LwithF} c_{i}^{\ell'} \nonumber \\
& c_{i}^\ell \leftarrow c_{i}^\ell - \max_{\ell' \in \LwithF} c_{i}^{\ell'} \quad \forall \ell \in \LwithF
\end{align}
This is done iteratively for all $i \in \{N,\ldots,0\}$, leaving just a constant, which can be omitted. 

\subsection{Convex Relaxation and Visibility Consistency} \label{sec:convex}
 \begin{figure} 
 \includegraphics[width=0.3\linewidth]{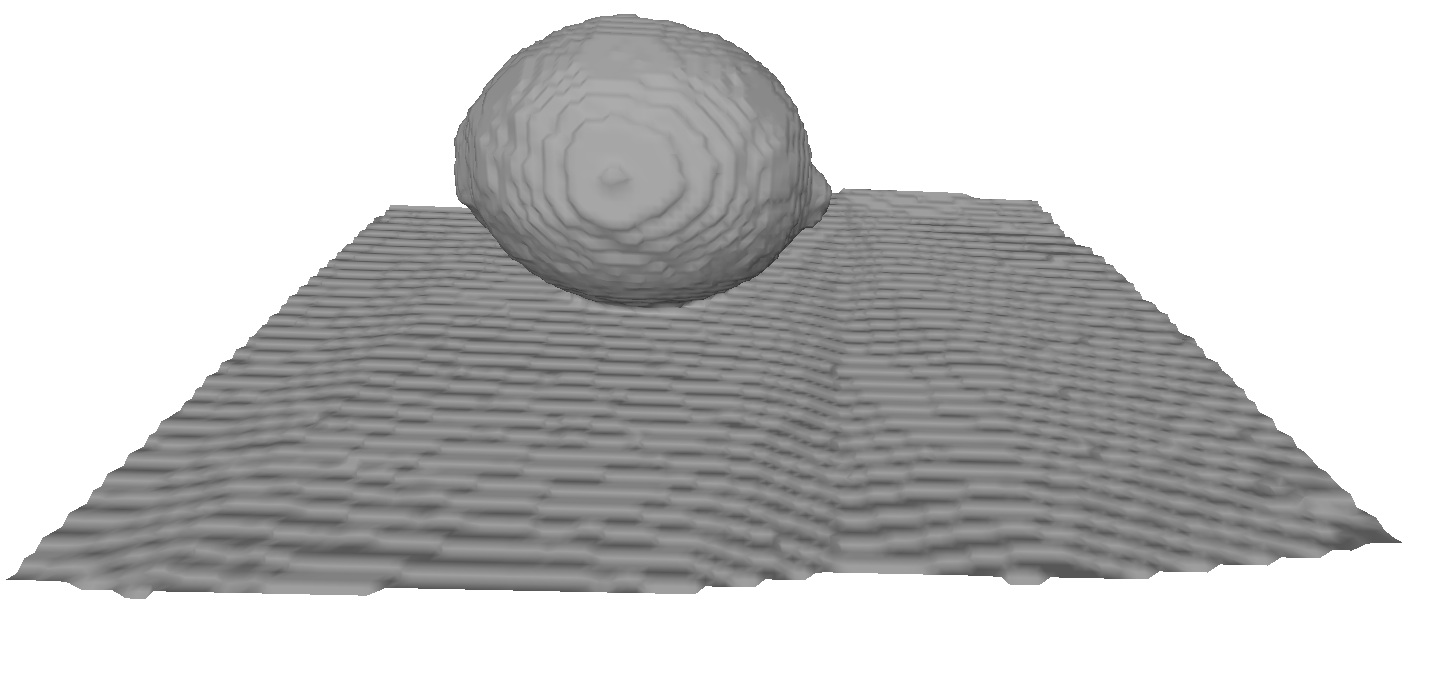} \hfill
 \includegraphics[width=0.3\linewidth, cfbox=blue 1pt 0pt]{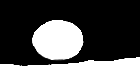} \hfill
 \includegraphics[width=0.3\linewidth, cfbox=blue 1pt 0pt]{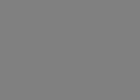}
 \caption{\label{fig:slices} Evaluation of the convex relaxation for two-label problem: \textbf{(left)} a reconstruction of the model obtained by our non-convex procedure, slices through the volume ($0$ black, $1$ white, $0.5$ grey) in the non-convex formulation \textbf{(middle)} and the convex formulation \textbf{(right)}.}
 \vspace{-0.25cm}
 \end{figure}

So far our derivation has been done using binary variables $x_s^{\ell} \in \{0,1\}$ and hence also all the $y_{i}^{\ell} \in \{0,1\}$. To minimize the energy, we relax this constraint by replacing $x_s^{\ell} \in \{0,1\}$ with $x_s^{\ell} \in [0,1]$ in Eq.~\ref{eq:energy}. This directly leads to a convex relaxation of the ray potential. Unfortunately, this relaxation is weak and therefore inapplicable in practice. In Fig.~\ref{fig:slices} we evaluate the convex relaxation on a two-label example (Lemon dataset), using surface area penalization via a total variation (TV) smoothness prior. The convex relaxation fails entirely, producing variable assignments to the $x_s^{\ell}$ that are $0.5$ up to machine precision and hence no meaningful solution can be extracted. A comparison of the energies reveals that there is a significant difference between the non-convex and the convex solution ($626614$ and $431893$, respectively), which indicates that the relaxed problem is far from the original binary one. Most importantly, our earlier convex formulation \cite{Savinov15} shares this behavior of not making a decision for any voxel, when run without initialization on a two-label problem. The aspects of initialization, heuristic assignment of unassigned variables, move making algorithm, and a coarse-to-fine scheme are essential elements of the algorithm in \cite{Savinov15}.

The reason for the weak relaxation is that Eq.~\ref{tightness_condition} is unsatisfied for the solution of the convex relaxation. This equation ensures that the per camera local view is consistent with the global model (\cf Fig.~\ref{fig:variableTypes}). Concretely, the equation states that the change in visibility is directly linked to the cost that can be taken by the potential \eg a surface can only be placed iff the occupancy along the ray changes. Hence we propose a formulation that directly enforces this constraint, which we will call visibility consistency constraint.
Eq.~\ref{tightness_condition} can be reformulated using the definition of $y_{i}^f$ as
\begin{multline}
\sum_{\ell \in \LwithoutF} y_{i}^\ell =  y_{i-1}^f - y_{i}^f = y_{i-1}^f - \min(y_{i-1}^f, x_{s_{i}}^f) \\ = \max(0, y_{i-1}^f - x_{s_{i}}^f). \label{eq:visibilityConsistency}
\end{multline}
This means that we can only have an occupied space label $\ell \in \LwithoutF$ assigned as the visible surface at position $i$, if position $i$ does not have free space assigned and $y_{i-1}^f = 1$ and hence the whole ray from the camera center to the position $i-1$ has free space assigned (see Fig.~\ref{fig:exampleVariableAssignments}).

Since we minimize the objective with non-positive $c_{i}^{\ell}$, the visibility consistency constraint is equivalent to the inequality
\begin{equation}
\sum\limits_{\ell \in \LwithoutF} y_{i}^\ell \leq \max(0, y_{i-1}^f - x_{s_{i}}^f).
\end{equation}

Our final formulation for the ray potential is
\begin{align}\label{eq:rayPotNonCvx}
 \psi_r({\bf x}_r) & = \sum_{\ell \in \LwithF} \sum_{i = 0}^{N} c_{i}^\ell y_{i}^\ell \\
\mbox{s.t. } &\; y_{i}^\ell \leq y_{i-1}^{f}, \; y_{i}^\ell \leq x_{s_{i}}^\ell, \; y_{i}^{\ell} \geq 0 \;\; \forall \ell \in \LwithF, \forall i \nonumber \\
 &  \sum\limits_{\ell \in \LwithoutF} y_{i}^\ell \leq \max(0, y_{i-1}^f - x_{s_{i}}^f)  \qquad \forall i \nonumber
\end{align}

The above potential is non-convex because of the non-convex inequality which describes visibility consistency.
We follow the strategy of using a surrogate convex constraint for the non-convex one that majorizes the objective of the non-convex program. The majorization, as we will see in Sec.~\ref{sec:optimization}, happens during the iterative optimization. Therefore, at each iteration, we have a current assignment to the variables, which we denote by $\mathbf{x}^{(n)}$ and $\mathbf{y}^{(n)}$. Here we introduced the notation that variable assignments at iteration $n$ are denoted with a superscript $(n)$. Replacing 
\begin{align}
 & \sum_{\ell \in \LwithoutF} y_{i}^\ell \leq \max \{ 0, y_{i-1}^f - x_{s_{i}}^f \} \nonumber \\ & \mbox{ by } \quad
 \sum_{\ell \in \LwithoutF} y_{i}^\ell \leq g(x_{s_{i}}^f,y_{i-1}^f | x_{s_{i}}^{f,(n)}, y_{i-1}^{f,(n)}) \label{eq:majorization}
\end{align}
with the linear majorizer,
\begin{multline}
  g(x_{s_{i}}^f,y_{i-1}^f | x_{s_{i}}^{f,(n)}, y_{i-1}^{f,(n)})  \\ =  \begin{cases}
   0 & \text{if } y_{i-1}^{f,(n)} \leq  x_{s_{i}}^{f,(n)} \\
   y_{i-1}^f - x_{s_{i}}^f & \text{if } y_{i-1}^{f,(n)} > x_{s_{i}}^{f,(n)}
 \end{cases}
 \label{eq:linearization}
\end{multline}
leads to a surrogate linear (and therefore convex) ray potential, which we will denote by $\psi_r^{(n)}(\mathbf{x},\mathbf{y} | \mathbf{x}^{(n)}, \mathbf{y}^{(n)})$. The variables $\mathbf{x}^{(n)}$ and $\mathbf{y}^{(n)}$ denote the position of the linearization. We handle the corner case where both branches are feasible to always take the first branch. In numerical experiments we observed that this choice is not critical, it makes no significant difference which branch is used in this case.

Next we state a Lemma that will be a crucial part of the optimization strategy detailed in Sec.~\ref{sec:optimization}. 

\begin{lemma}
 Given $\mathbf{x}^{(n)}$, with $\mathbf{x}^{(n)} \geq 0$ point-wise, we can find $\mathbf{\tilde{y}}^{(n)}$ such that all the constraints of the ray potential Eq.~\ref{eq:rayPotNonCvx} are fulfilled and the value of the potential is minimal. \label{lm:rayFeasiblity}
\end{lemma}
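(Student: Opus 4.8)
The plan is to exhibit $\mathbf{\tilde y}^{(n)}$ explicitly and then verify feasibility and optimality separately. The key preliminary observation is that for a fixed $\mathbf{x}^{(n)}$ the potential and all constraints of Eq.~\ref{eq:rayPotNonCvx} decouple over rays, so it suffices to treat one ray, and within that ray the only coupling between positions runs through the free-space visibility $y_{i-1}^f$: it is the upper bound on every $y_i^\ell$ and it sits inside the budget $\max(0,y_{i-1}^f-x_{s_i}^f)$ of the visibility-consistency constraint. By contrast, the occupied-label variables $y_i^\ell$ with $\ell\in\LwithoutF$ appear in no constraint belonging to another position. Making this decoupling precise is the crux, because it is exactly what collapses the non-convex program into a chain of trivial subproblems.

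First I would fix the free-space chain to its maximal admissible value, $\tilde y_i^f=\min(\tilde y_{i-1}^f,x_{s_i}^f)$ with $\tilde y_{-1}^f=1$, so that $\tilde y_i^f=\min_{j\le i}x_{s_j}^f$. I claim this is without loss of optimality: since $c_i^f=0$, the value $y_i^f$ contributes nothing directly to the objective, while increasing it only enlarges the upper bounds and the budget at position $i+1$ and enters no other constraint. Hence, starting from any feasible point and raising each $y_i^f$ to its cap in order of increasing $i$ preserves feasibility and can only enlarge the feasible set for the remaining variables, so there is an optimal solution whose $y^f$ chain is maximal. The assumption $\mathbf{x}^{(n)}\ge 0$ (point-wise) is used here to guarantee $\tilde y_i^f\ge 0$ along the recursion, which in turn keeps every budget $\max(0,\tilde y_{i-1}^f-x_{s_i}^f)\ge 0$.

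Once the $y^f$ chain is frozen, each position $i$ becomes an independent problem in the variables $y_i^\ell$, $\ell\in\LwithoutF$: minimize $\sum_{\ell\in\LwithoutF}c_i^\ell y_i^\ell$ subject to $0\le y_i^\ell\le \min(\tilde y_{i-1}^f,x_{s_i}^\ell)$ and $\sum_{\ell\in\LwithoutF}y_i^\ell\le \max(0,\tilde y_{i-1}^f-x_{s_i}^f)$. Because every $c_i^\ell\le 0$ (the cost transformation of Sec.~\ref{sec:rayPotential} guarantees this), this is a continuous knapsack: distribute the non-negative budget over the labels so as to favour the most negative costs, respecting the non-negative per-label caps. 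I would give the explicit greedy filling as the minimizer $\tilde y_i^\ell$ — order the labels by $c_i^\ell$ ascending and saturate caps until the budget is exhausted; the feasible set is non-empty (take $y_i^\ell=0$) and compact, so a minimizer exists, and greedy is optimal for this knapsack LP.

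Assembling the pieces, $\mathbf{\tilde y}^{(n)}$ satisfies the box constraints and the visibility-consistency budget at every position by construction, hence is feasible for Eq.~\ref{eq:rayPotNonCvx}; and since the objective separates over positions once $y^f$ is fixed, with each per-position knapsack minimized and the $y^f$ choice shown to be globally optimal, the total potential $\sum_\ell\sum_i c_i^\ell\tilde y_i^\ell$ is minimal. I expect the first step to be the main obstacle: the naive choice $y_i^\ell=\min(y_{i-1}^f,x_{s_i}^\ell)$ from Eq.~\ref{eq:defVisVariables} is tempting, but without the simplex constraint $\sum_\ell x_s^\ell=1$ — which the lemma deliberately omits, assuming only $\mathbf{x}^{(n)}\ge 0$ — it can overshoot the budget. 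The genuine work is recognizing that the non-convexity is carried entirely by the variable $y^f$ inside the $\max$, and that committing $y^f$ to its maximal recursive value simultaneously removes that non-convexity and is provably optimal, leaving a chain of independent knapsacks.
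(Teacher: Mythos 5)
Your proposal is correct and follows essentially the same construction as the paper's proof: set the free-space chain to $\tilde y_i^{f}=\min_{j\le i}x_{s_j}^{f}$ and then fill the occupied-label variables greedily in increasing order of $c_i^{\ell}$, with your exchange argument and knapsack analysis supplying the optimality justification that the paper compresses into ``holds by construction.'' One small quibble: after the cost-shifting step of Sec.~\ref{sec:rayPotential} the free-space costs $c_i^{f}$ are merely non-positive rather than zero, but this only strengthens your argument that raising $y_i^{f}$ to its cap is without loss of optimality.
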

\vspace{-0.3cm}
Intuitively, the lemma states that given the global per-voxel variable assignments $x_s^{\ell}$, an assignment to the per-ray variables $y_{i}^{\ell}$ can be found. This is not surprising given that the whole information about the scene is contained in the variables $x_s^{\ell}$ (\cf Fig.~\ref{fig:variableTypes}). We prove the lemma by giving a construction.
\begin{proof}
We provide an algorithm that computes $\mathbf{\tilde{y}}^{(n)}$ for each ray individually.
First we set $\tilde{y}^{f,(n)}_{i} = \min_{j \leq i} x_{s_{j}}^{f,(n)}$, which satisfies $\tilde{y}^{f,(n)}_{i} \leq \tilde{y}^{f,(n)}_{i-1}$, $\tilde{y}^{(n)}_{i\ell} \geq 0$.
Now we iteratively increase $\tilde{y}_{i}^{\ell,(n)}$ such that  $\sum_{\ell \in \LwithoutF} \tilde{y}_{i}^{\ell,(n)} \leq \max(0,\tilde{y}_{i-1}^{f,(n)} - x_{s_{i}}^f)$ and $\tilde{y}_{i}^{\ell,(n)} \leq x_{s_{i}}^{\ell}$. For an optimal assignment we do this procedure in an increasing order of $c_{i}^\ell$. The observation holds by construction.
\end{proof}

\section{Energy Minimization Strategy}
\label{sec:optimization}

Before we discuss the proposed energy minimization, we complete the formulation by including the regularization term.

\subsection{Regularization Term}
There are several choices of regularization terms for continuously inspired multi-label segmentation that can be inserted into our formulation \cite{zach2008labeling, chambolle2012convex, strekalovskiy2011generalized, zach2014optimized}. They are all convex relaxations and are originally posed in the continuum and discretized for numerical optimization. The main differences are the strength of relaxation and generality of the allowed smoothness priors. We directly describe the strongest, most general version, which allows for non-metric and anisotropic smoothness \cite{zach2014optimized}. We only state the smoothness term and explain the meaning of the individual variables. For a thorough mathematical derivation we refer the reader to the original publications \cite{zach2014optimized, Hane13}.
\begin{align}
 \psi_S(\mathbf{x}, \mathbf{z}) &= \sum_{s \in \Omega} \psi_s(\mathbf{x}, \mathbf{z}) \quad \mathrm{with} \label{eq:regularizationTerm} \\
 \psi_s(\mathbf{x}, \mathbf{z}) &= \sum_{\ell,m: \ell < m} \phi_s^{\ell m} (z_s^{\ell m} - z_s^{m \ell}) \nonumber \\
 \mbox{s.t. } x_s^\ell &= \sum_m \left(z_s^{\ell m}\right)_k, x_{s}^\ell = \sum_m \left( z_{s-e_k}^{m \ell} \right)_k, \forall k, z_s^{\ell m} \geq 0. \nonumber
\end{align}
The variables $z_s^{\ell m} \in \mathbb{R}^3$ describe the transitions between the assigned labels. They indicate how much change there is from label $\ell$ to label $m$ along the direction they point to and are hence called label transition gradients. For example, if there is a change from label $\ell$ to label $m$ at voxel $s$ along the first canonical direction, the corresponding $z_s^{\ell m}$ is $[1, 0, 0]^T$. The $z_s^{\ell m}$ need to be non-negative in order to allow for general, non-metric smoothness priors \cite{zach2014optimized}. Therefore the difference $z_s^{\ell m} - z_s^{m \ell}$ is used to allow for arbitrary transition directions. The variable $e_k$ denotes the canonical basis vector for the $k$-th component, \ie $e_1 = [1, 0, 0]^T$. $\phi_s^{\ell m} : \mathbb{R}^3 \rightarrow \mathbb{R}_0^+$ are convex positively $1$-homogeneous functions that act as anisotropic regularization of a surface between label $\ell$ and $m$. Note that the regularization term takes into account label combinations. This enables us to select class-specific smoothness priors, which depend on the surface direction and the involved labels and are inferred from training data \cite{Hane13}. For example, a surface between ground and building is treated differently from a transition between free space and building.
The following lemma will be necessary for our optimization strategy.
\begin{lemma}
 Given $\mathbf{x}^{(n)}$, $\mathbf{z}^{(n)}$, with $x_s^{\ell,(n)} \geq 0$ $\forall s,\ell$ an assignment $\mathbf{\tilde{z}}^{(n)}$ can be determined that fulfills the constraints of the regularization term.
 \label{lm:regularizerFeasibility}
\end{lemma}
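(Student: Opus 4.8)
The plan is to prove feasibility by an explicit construction, after first reducing the question to a family of independent small feasibility problems. The crucial observation is that the constraints of Eq.~\ref{eq:regularizationTerm} decouple over the three canonical directions and over voxels: for a fixed direction $k$ only the $k$-th components $(z_s^{\ell m})_k$ enter, and each such scalar appears in exactly two constraints, namely the outflow constraint $x_s^\ell = \sum_{m \in \LwithF} (z_s^{\ell m})_k$ at voxel $s$ and the inflow constraint $x_{s+e_k}^m = \sum_{\ell \in \LwithF} (z_s^{\ell m})_k$ at the neighbour $s+e_k$ (the latter being the constraint $x_{s'}^m = \sum_\ell (z_{s'-e_k}^{\ell m})_k$ written with $s' = s+e_k$). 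Hence, collecting the scalars $(z_s^{\ell m})_k$ into a matrix $W^{(s,k)}$ indexed by label pairs $(\ell,m)$, the entire constraint set is exactly the union, over all voxels $s$ and directions $k$, of the requirement that $W^{(s,k)} \geq 0$ have prescribed row marginals $x_s^\ell$ and column marginals $x_{s+e_k}^m$.

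First I would identify this as a transportation-polytope feasibility problem: a nonnegative matrix with given row and column marginals exists if and only if the two marginals share the same total mass, \ie $\sum_{\ell \in \LwithF} x_s^\ell = \sum_{m \in \LwithF} x_{s+e_k}^m$. This is the point where the stated hypothesis $x_s^{\ell,(n)} \geq 0$ must be combined with the simplex constraint $\sum_{\ell \in \LwithF} x_s^\ell = 1$ of Eq.~\ref{eq:energy}, which is maintained for every feasible iterate; together they force both marginals to sum to $1$, so each individual transportation problem is feasible. (Pure nonnegativity does not suffice, since unequal total masses at neighbouring voxels would make the marginals inconsistent.)

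Then I would exhibit an explicit admissible point rather than merely invoke existence, since the lemma asks only for \emph{some} feasible $\mathbf{\tilde{z}}^{(n)}$, with no optimality required, in contrast to Lemma~\ref{lm:rayFeasiblity}. The independence coupling $(\tilde z_s^{\ell m})_k = x_s^{\ell,(n)}\, x_{s+e_k}^{m,(n)}$ does the job: it is nonnegative, its sum over $m$ equals $x_s^{\ell,(n)} \sum_{m} x_{s+e_k}^{m,(n)} = x_s^{\ell,(n)}$, and its sum over $\ell$ equals $x_{s+e_k}^{m,(n)}$, both by the simplex constraint. I would note that this construction uses only $\mathbf{x}^{(n)}$ and is independent of the given $\mathbf{z}^{(n)}$, and that the diagonal entries $(\tilde z_s^{\ell\ell})_k$ are harmless because they never enter the objective $\sum_{\ell < m} \phi_s^{\ell m}(z_s^{\ell m} - z_s^{m\ell})$.

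The only genuinely delicate point, and the step I expect to require the most care, is the boundary of $\Omega$: at voxels where $s+e_k \notin \Omega$ (or $s-e_k \notin \Omega$) the neighbouring marginal is undefined. I would dispose of this with the standard Neumann boundary handling of these continuous multi-label formulations \cite{zach2014optimized, chambolle2012convex}, under which the flux across the volume boundary vanishes and the corresponding marginal constraint is simply dropped; the product construction then localizes to the interior faces and remains feasible. Everything beyond this is a routine verification of marginals and nonnegativity.
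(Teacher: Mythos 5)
Your proof is correct, but it takes a genuinely different route from the paper's. Both arguments start from the same decomposition: for fixed $s$ and $k$ the components $(z_s^{\ell m})_k$ form a matrix whose row sums must equal $x_s^\ell$ and whose column sums must equal $x_{s+e_k}^m$. From there the paper does \emph{not} write down a closed-form feasible point; it projects the given $\mathbf{z}^{(n)}$ onto the affine subspace defined by these marginal equations and then repairs any resulting negative entries by an iterative four-entry exchange (adding $\epsilon$ to $\tilde z_s^{\ell',m'}$ and $\tilde z_s^{\ell'',m''}$ while subtracting it from $\tilde z_s^{\ell',m''}$ and $\tilde z_s^{\ell'',m'}$), terminating in $O(L^2)$ steps. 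You instead recognize the block as a transportation-polytope feasibility problem and exhibit the independence coupling $(\tilde z_s^{\ell m})_k = x_s^{\ell,(n)} x_{s+e_k}^{m,(n)}$, which is simpler, non-iterative, and verifiably feasible in one line. What your version gives up is any dependence on $\mathbf{z}^{(n)}$: the paper's construction is a warm start that keeps $\tilde{\mathbf{z}}^{(n)}$ close to the current iterate, which is irrelevant to the lemma as stated but matters in the majorization step of Lemma~\ref{lm:feasibility}, where the repaired point is only accepted if its energy does not exceed the last known energy --- the product coupling would typically be rejected there. On the other hand, you are more careful than the paper on one point: both constructions silently require $\sum_{\ell} x_s^{\ell} = 1$ (the paper's affine subspace is empty, and your transportation problem infeasible, if neighbouring marginals have unequal mass), and you state explicitly that this comes from the simplex projection preceding the application of the lemma, whereas the paper leaves it implicit. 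Your remarks on the diagonal entries and on the boundary of $\Omega$ are correct and harmless additions.
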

For the full proof of the lemma we refer the reader to the supplementary material, here we only state the main idea of the proof. In a first step we project our current solution onto the space spanned by the equality constraints. This leads to an initialization of the $\mathbf{\tilde{z}}^{(n)}$ which fulfills the equality constraints but might lead to negative assignments to the $z_s^{\ell m}$. To get a non-negative solution, we notice that as long as there is a $z_s^{\ell',m'}$ which is negative we can find $\ell''$ and $m''$ such that we can increase $z_s^{\ell',m'}$ by $\epsilon$ along with changing $z_s^{\ell'',m'}, z_s^{\ell',m''}, z_s^{\ell'',m''}$ by the same $\epsilon$ in order not to affect the equality constraints.

\subsection{Optimization}
The goal of this section is to minimize the proposed energy using the non-convex ray potential Eq.~\ref{eq:rayPotNonCvx}. Optimizing non-convex functionals is an inherently difficult task. One often successfully utilized strategy is the so called majorize-minimize strategy (for example \cite{lange2000optimization}). The idea is to majorize the non-convex functional in some way with a surrogate convex one. Alternating between minimizing the surrogate convex energy, which we will call the minimization step in the following, and recomputing the surrogate convex majorizer, which we will denote the majorization step, leads to an algorithm that decreases the energy at each step and hence converges.

Note that we already discussed the majorization step of the ray potential in Sec.~\ref{sec:rayPotential}, Eq.~\ref{eq:linearization}. Together with the regularizer we end up with a surrogate convex but non-smooth program.
\begin{align}
 E^{(n)}(\mathbf{x},\mathbf{y},\mathbf{z}) & = \psi^{(n)}_R(\mathbf{x}, \mathbf{y} | \mathbf{x}^{(n)},\mathbf{y}^{(n)}) + \psi_S(\mathbf{x},\mathbf{z})  \label{eq:finalSurrogateConvex} \\
  \mbox{s.t. } \sum_{\ell \in \LwithF} x_s^{\ell} & \!= \!1 \; \forall s, \;\; x_s^{\ell} \in [0,1] \; \forall s \in \Omega,\; \forall \ell \in \LwithF. \nonumber
\end{align}
This energy can be globally minimized using the iterative first order primal-dual algorithm \cite{pock2011diagonal}.  However, there is no guarantee that the energy during the iterative minimization decreases monotonically nor that the constraints are fulfilled before convergence. One solution is to run the convex optimization until convergence however in practice this leads to slow convergence. Therefore, we follow a different strategy where we regularly run the majorization step during the optimization of the energy. Before we can state the final algorithm we present the following lemma.
\begin{lemma}
 Given $\mathbf{x}^{(n)},\mathbf{y}^{(n)},\mathbf{z}^{(n)}$, in the optimization problem Eq.~\ref{eq:finalSurrogateConvex}, which do not necessarily fulfill the constraints. A feasible solution $\mathbf{\tilde{x}}^{(n)},\mathbf{\tilde{y}}^{(n)},\mathbf{\tilde{z}}^{(n)}$ to the ray potential Eq.~\ref{eq:rayPotNonCvx} and the regularization term Eq.~\ref{eq:regularizationTerm} can be constructed in a finite number of steps.
 \label{lm:feasibility}
\end{lemma}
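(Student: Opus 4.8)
The plan is to exploit the fact that the two preceding lemmas already supply feasible reconstructions of the per-ray visibility variables and of the label-transition gradients, provided the global variables are non-negative pointwise. The key structural observation is that the only coupling between the ray constraints of Eq.~\ref{eq:rayPotNonCvx} and the regularization constraints of Eq.~\ref{eq:regularizationTerm} is through the shared $\mathbf{x}$: no constraint in Eq.~\ref{eq:rayPotNonCvx} mentions $\mathbf{z}$, and no constraint in Eq.~\ref{eq:regularizationTerm} mentions $\mathbf{y}$. Hence, once a feasible $\mathbf{\tilde{x}}^{(n)}$ is fixed, the constructions of $\mathbf{\tilde{y}}^{(n)}$ and $\mathbf{\tilde{z}}^{(n)}$ decouple and can be carried out independently.

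First I would repair the global variables. For each voxel $s \in \Omega$ separately, I project the vector $(x_s^{\ell,(n)})_{\ell \in \LwithF}$ onto the probability simplex $\{ x : \sum_{\ell \in \LwithF} x^\ell = 1, \ x^\ell \geq 0 \}$ in the Euclidean norm, and call the result $\mathbf{\tilde{x}}^{(n)}$. This step simultaneously restores the simplex constraint of Eq.~\ref{eq:finalSurrogateConvex} and, crucially, guarantees $\mathbf{\tilde{x}}^{(n)} \geq 0$ pointwise, which is exactly the hypothesis required by both Lemma~\ref{lm:rayFeasiblity} and Lemma~\ref{lm:regularizerFeasibility}. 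Since $\Omega$ and $\LwithF$ are finite and the Euclidean projection onto a simplex is a finite combinatorial procedure (for instance via sorting and thresholding the coordinates), this step terminates in a finite number of operations.

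Next I would invoke the two earlier lemmas on the repaired $\mathbf{\tilde{x}}^{(n)}$. Applying Lemma~\ref{lm:rayFeasiblity} ray by ray yields $\mathbf{\tilde{y}}^{(n)}$ satisfying every constraint of Eq.~\ref{eq:rayPotNonCvx}; its construction sweeps over the finitely many positions of each of the finitely many rays and is therefore finite. Independently, applying Lemma~\ref{lm:regularizerFeasibility} to $\mathbf{\tilde{x}}^{(n)}$ and $\mathbf{z}^{(n)}$ yields $\mathbf{\tilde{z}}^{(n)}$ satisfying the regularization constraints of Eq.~\ref{eq:regularizationTerm}; that construction consists of a projection onto the linear equality-constraint subspace followed by the finite elimination of negative $z_s^{\ell m}$ entries described in the sketch of Lemma~\ref{lm:regularizerFeasibility}, and is likewise finite. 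Concatenating the three steps produces $(\mathbf{\tilde{x}}^{(n)}, \mathbf{\tilde{y}}^{(n)}, \mathbf{\tilde{z}}^{(n)})$ feasible for both Eq.~\ref{eq:rayPotNonCvx} and Eq.~\ref{eq:regularizationTerm} in a finite number of steps.

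The heavy lifting is done by the two preceding lemmas, so I do not expect a deep obstacle here; the points that need care are (i) checking that the simplex projection really delivers the pointwise non-negativity on which both lemmas rest, and (ii) confirming the decoupling claim, namely that fixing $\mathbf{\tilde{x}}^{(n)}$ leaves no conflicting demand between the $\mathbf{y}$- and $\mathbf{z}$-constructions. The only genuinely new ingredient is the finiteness of the simplex projection, which is worth stating explicitly since a \emph{finite number of steps} is precisely what the lemma asserts.
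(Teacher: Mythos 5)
Your proposal is correct and matches the paper's own proof essentially verbatim: the paper likewise projects $\mathbf{x}^{(n)}$ per voxel onto the unit probability simplex and then applies Lemma~\ref{lm:rayFeasiblity} and Lemma~\ref{lm:regularizerFeasibility}. Your added remarks on the decoupling of the $\mathbf{y}$- and $\mathbf{z}$-constructions and on the finiteness of the simplex projection are sound elaborations of the same argument.
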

\begin{proof}
 To fulfill the constraints $\sum_{\ell \in \LwithF} x_s^{\ell} = 1$ and $x_s^{\ell} \in [0,1]$ we project the variables $\mathbf{x}^{(n)}$ individually per voxel $s$ to the unit probability simplex \cite{duchi2008efficient}. Subsequent application of Lemma \ref{lm:rayFeasiblity} and \ref{lm:regularizerFeasibility} leads to the desired result.
\end{proof}  
\begin{figure*}
 \begin{center}
 \begin{minipage}{4.5cm}
 \includegraphics[width=0.49\linewidth]{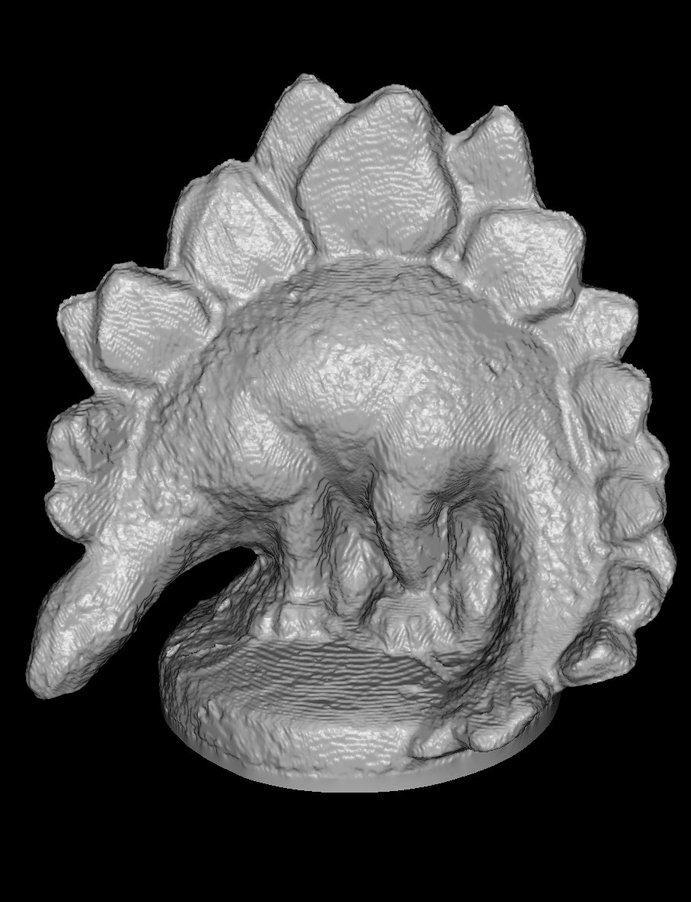} 
 \includegraphics[width=0.49\linewidth]{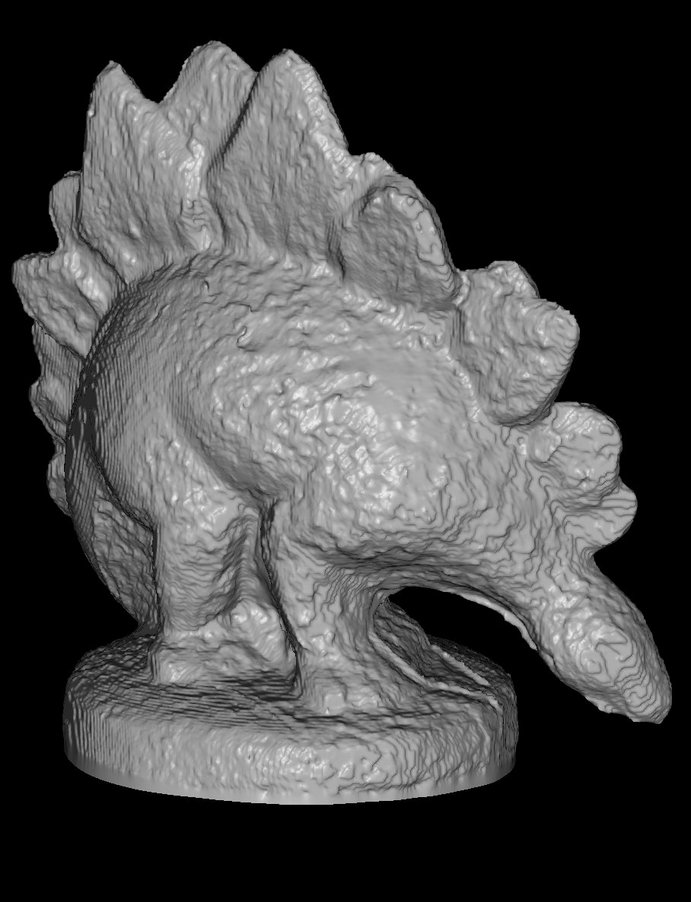} \vspace*{-0.35cm} \\
 \includegraphics[width=0.49\linewidth]{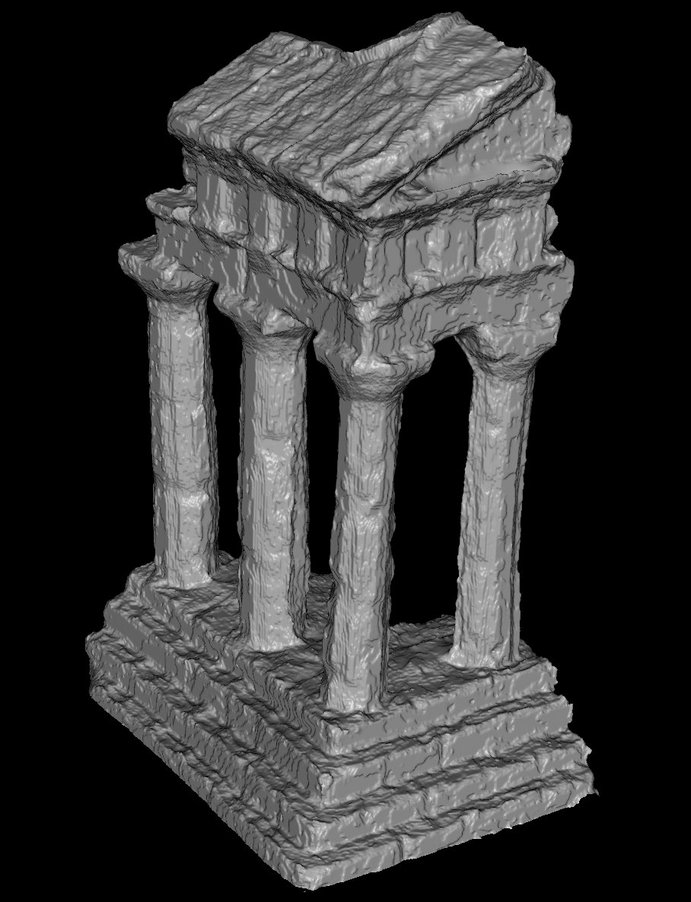} 
 \includegraphics[width=0.49\linewidth]{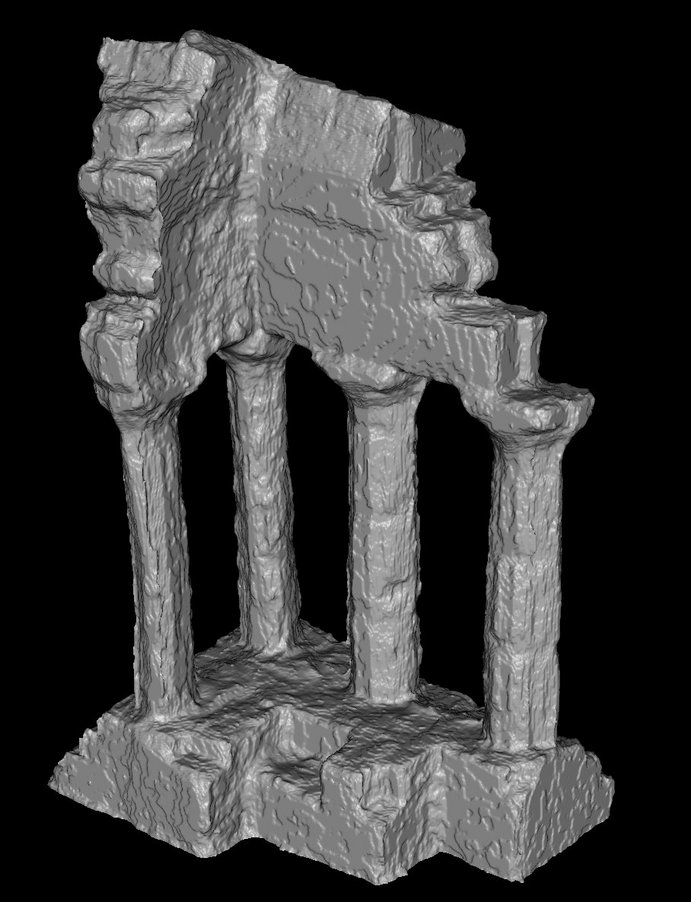} \vspace*{-0.3cm}
 \end{minipage}
 \begin{minipage}{12.5cm}
  \scriptsize
  \begin{tabular}{|l|c|c|c|c|c|c|}
\hline
                                & \begin{tabular}[c]{@{}c@{}}Temple Full\\ Acc / Comp\end{tabular} & \begin{tabular}[c]{@{}c@{}}Temple Ring\\ Acc / Comp\end{tabular} & \begin{tabular}[c]{@{}c@{}}Temple S. Ring\\ Acc / Comp\end{tabular} & \begin{tabular}[c]{@{}c@{}}Dino Full\\ Acc / Comp\end{tabular} & \begin{tabular}[c]{@{}c@{}}Dino Ring\\ Acc / Comp\end{tabular} & \begin{tabular}[c]{@{}c@{}}Dino S. Ring\\ Acc / Comp\end{tabular} \\ \hline
Our Method                       & 0.41 / 99.7                                                      & 0.5 / 99.5                                                       & 0.69 / 97.8                                                             & \textbf{0.26} / 99.8                                                    & \textbf{0.25} / 99.9                                                    & 0.34 / 99.7                                                           \\ \hline
Galliani et al.\cite{Galliani}                  & 0.39 / 99.2                                                      & 0.48 / 99.1                                                      & 0.53 / 97.0                                                             & 0.31 / 99.9                                                    & 0.3 / 99.4                                                     & 0.38 / 98.6                                                           \\ \hline
Zhu et al.\cite{Zhu}                       &                                                                  & \textbf{0.4} / 99.2                                                       & \textbf{0.45} / 95.7                                                             &                                                                & 0.38 / 98.3                                                    & 0.48 / 95.4                                                           \\ \hline
Li et al.\cite{DCV}             &                                                                  & 0.73 / 98.2                                                      & 0.66 / 97.3                                                             &                                                                & 0.28 / 100                                                     & \textbf{0.3} / 100                                                             \\ \hline
Wei et al. \cite{BMVC2014_183}   & \textbf{0.34} / 99.4                                                      &                                                                  &                                                                         & 0.42 / 98.1                                                    &                                                                &                                                                       \\ \hline
Xue et al.&                                                                  &                                                                  &                                                                         &                                                                &                                                                & \textbf{0.3} / 99.1                                                            \\ \hline
Furukawa et al. \cite{pmvs} & 0.49 / 99.6                                                      & 0.47 / 99.6                                                      & 0.63 / 99.3                                                             & 0.33 / 99.8                                                    & 0.28 / 99.8                                                    & 0.37 / 99.2                                                           \\ \hline
\end{tabular} \\
 \hspace*{1cm} \includegraphics[width=0.35\linewidth]{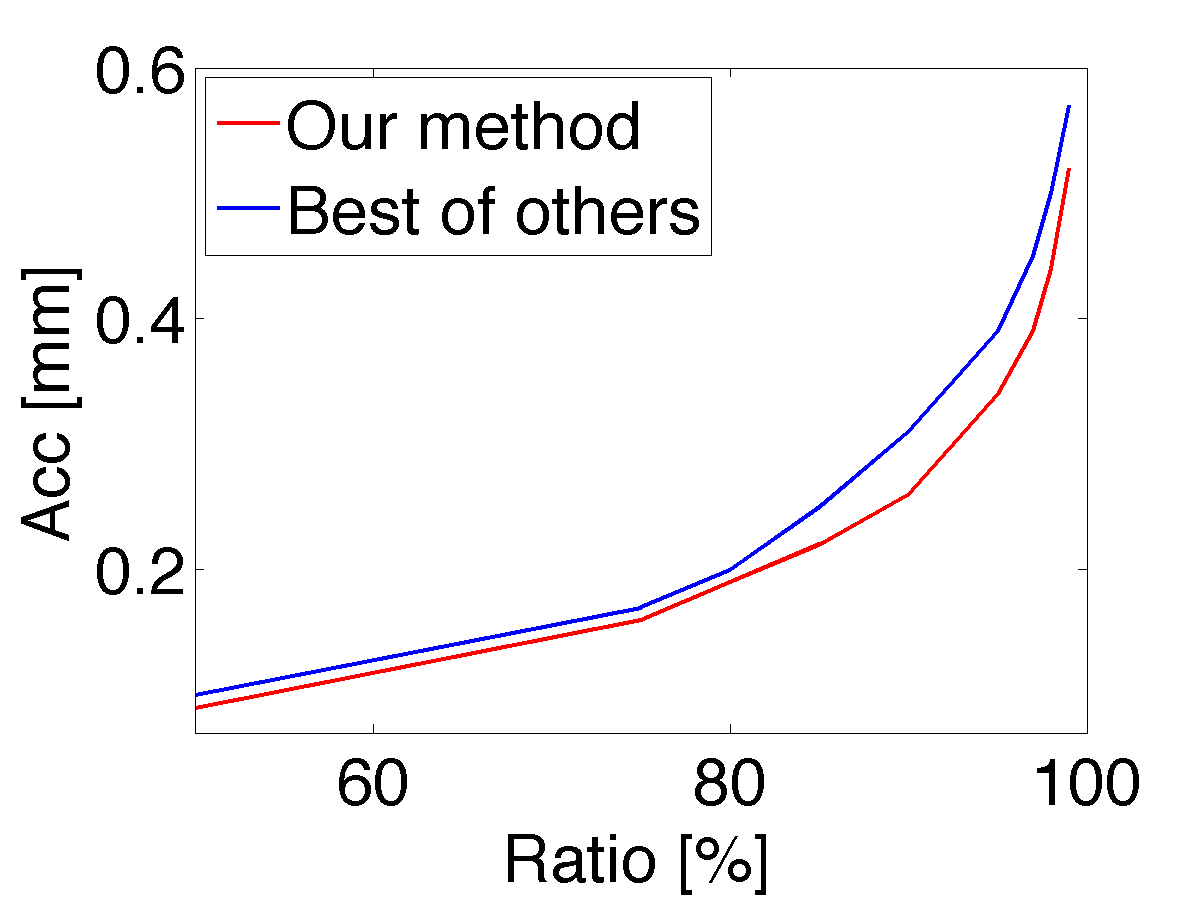} \hspace{1cm} \includegraphics[width=0.35\linewidth]{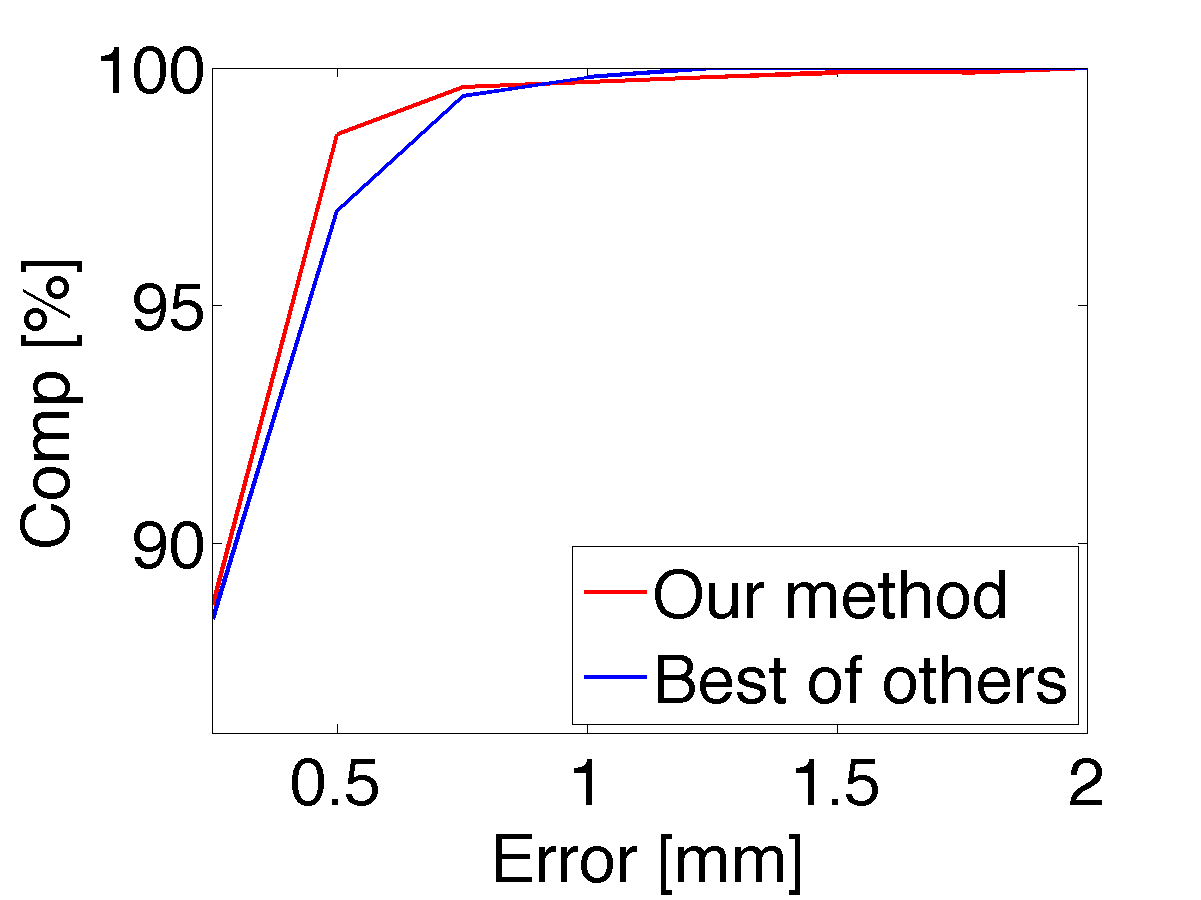}
 \end{minipage}
 \caption{\label{fig:middlebury_reconstructions} Middlebury Multi-View Stereo Benchmark: \textbf{(left)} Reconstructions of the Dino Ring and Temple Ring datasets computed by our algorithm, (left-most) view 1, (right-most) view 2, \textbf{(right, top)} benchmark's competitive methods in inverse chronological order (smaller Acc and higher Comp numbers are better), \textbf{(right, bottom)} Acc vs. Ratio (lower curve better) and Comp vs. Error (higher curve better) plots for the Dino Full dataset (for details on these plots see \cite{middlebury}).} 
 \end{center}
 \end{figure*}

Our final majorize-minimize optimization strategy can now be stated as follows.
\begin{description}
 \item [Majorization step] Using the current variables $\mathbf{x}^{(n)},\mathbf{y}^{(n)},\mathbf{z}^{(n)}$ and Lemma~\ref{lm:feasibility} a feasible solution $\mathbf{\tilde{x}}^{(n)},\mathbf{\tilde{y}}^{(n)},\mathbf{\tilde{z}}^{(n)}$ can be found. If the new energy is lower or equal than the last known energy the linearization Eq.~\ref{eq:linearization} is applied and the new optimization problem is passed to the minimization step. Otherwise the whole majorization step is skipped and the old variables $\mathbf{x}^{(n)},\mathbf{y}^{(n)},\mathbf{z}^{(n)}$ are passed to the minimization step.
 \item [Minimization step] The primal-dual algorithm \cite{pock2011diagonal} is run on the surrogate convex program for a fixed number of $p$ iterations. For guaranteed convergence, the primal dual gap $\eta$ can be evaluated and the minimization step can be restarted until we have $\eta \leq f(n)$ with a function $f(n) \rightarrow 0$ for $n \rightarrow \infty$. In practice, we get a good convergence behavior without a restart.
\end{description}
The majorization step either does no changes to the optimization state or finds a better or equal solution with a new linearization of the non-convex part because the linearization does not change the energy. If no changes are done this can be due to two reasons. Either the current solution was worse than the last known one, or the majorization stayed the same. In the latter case the primal-dual gap could reveal convergence and as a consequence we know that we arrived at a critical point or corner point of the original non-convex energy. In any other case the minimization step is run again and due to the convexity of the surrogate convex function a better solution or the optimality certificate will be found. The above procedure could stop at a non-critical point\footnote{similar to block-coordinate descent based message passing algorithms} due to the kink in the maximum in Eq.~\ref{eq:majorization}. To guarantee convergence to a critical point, the visibility consistency constraint Eq.~\ref{tightness_condition} can be smoothed slightly \eg using \cite{nesterov2005smooth}. Again, in our experiments this was unnecessary to achieve good convergence behavior.

\section{Experiments}
\label{sec:experiments}

Before we discuss the experiments, we describe the input data and state the costs $c_{ri}^{\ell}$ used for the ray potentials.

\subsection{Input Data}
We are using our approach for two different tasks: standard dense 3D reconstruction and dense semantic 3D reconstruction. In both cases, the initial input is a set of images with associated camera poses. Those camera poses are either provided with the dataset (as in the Middlebury Benchmark \cite{middlebury} or in the Thin Road Sign dataset \cite{UB13}) or computed via structure from motion algorithm \cite{Cohen12} (as in the semantic reconstruction experiments). We computed the depth maps using plane sweeping stereo for Middlebury Benchmark and semantic reconstruction datasets, while utilizing those already provided with the dataset for the experiment with Thin Road Sign. The patch similarity measure for stereo matching was zero-mean normalized cross correlation. For the dense semantic 3D reconstruction experiments, we computed per-pixel semantic labels using \cite{hierarchicalcrf}, trained on the datasets from \cite{BrostowSFC:eccv08}, \cite{ShottonWRC06} and \cite{Hane13}. 

\subsection{Ray Potential Costs}
In case of a two-label problem, there exists only one single label $\ell \in \mathcal{L}$. This allows us to directly insert the visibility consistency, Eq.~\ref{eq:visibilityConsistency}, into the objective. In this case the majorization can directly be done on the objective instead of the visibility constraint, leading to a more compact optimization problem with a smaller memory footprint. Like \cite{Hane13}, we assume exponential noise on the depth maps and define the assignments to the costs $c_{ri}^{\ell}$, given the position of the depth measurement along the ray $r$ as $i'$ as
\begin{equation}
 c_{ri}^{\ell} :=  \min\{0, \lambda|i - i'| - K\}.
\end{equation}
The parameters $\lambda \geq 0$ and $K$ are chosen such that the potential captures the uncertainty of the depth measurement.

For the multi-class case we also assume exponential noise on the depth data and independence between the depth measurement and the semantic measurement. Therefore the combined costs read as
\begin{equation}
 c_{ri}^{\ell} :=  \min\{0, \lambda|i - i'| - K\} + \sigma^{\ell},
\end{equation}
with $\sigma^{\ell}$ being the response of the semantic classifier for the respective pixel. This is the same potential that \cite{Hane13} approximates with unary potentials. 

\def \wrs {0.17}

   \begin{figure*}
   \centering
 \begin{tabular}{ccccc}
 \includegraphics[width=0.06\linewidth]{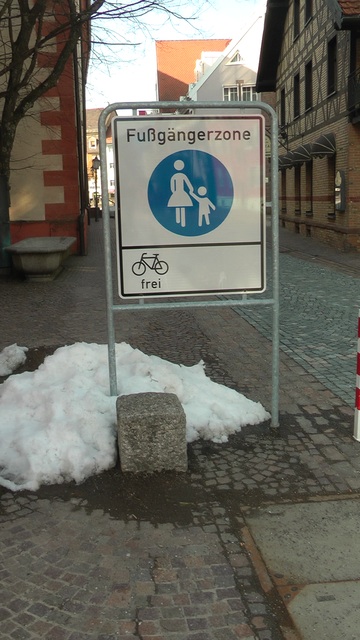} &
 \includegraphics[width=\wrs\linewidth]{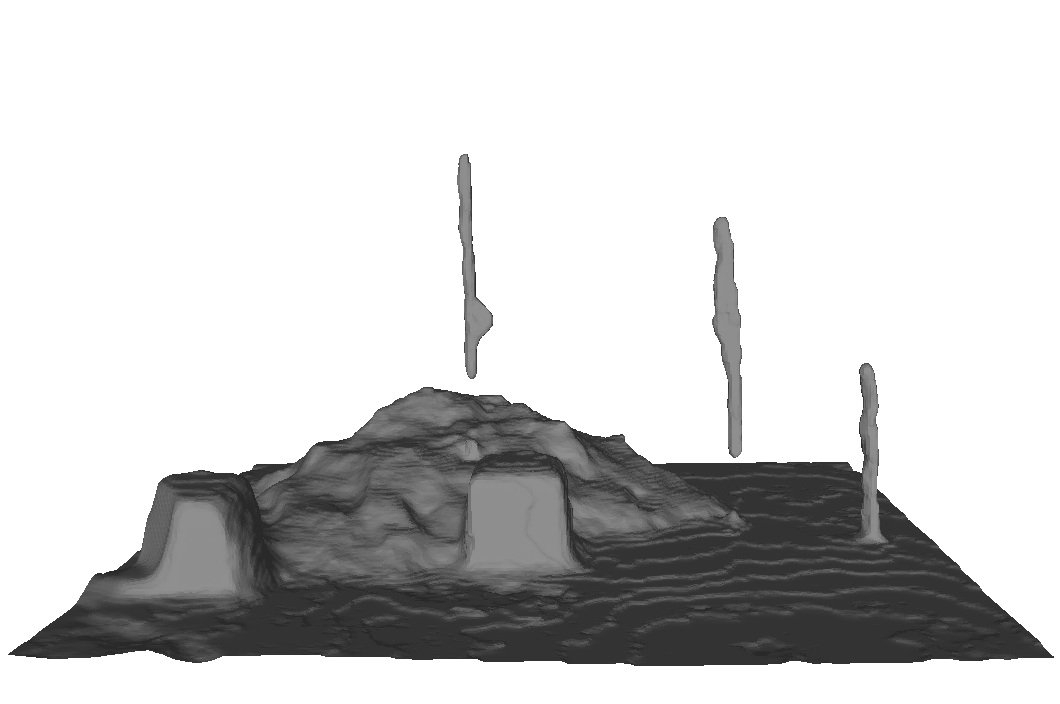} &
 \includegraphics[width=\wrs\linewidth]{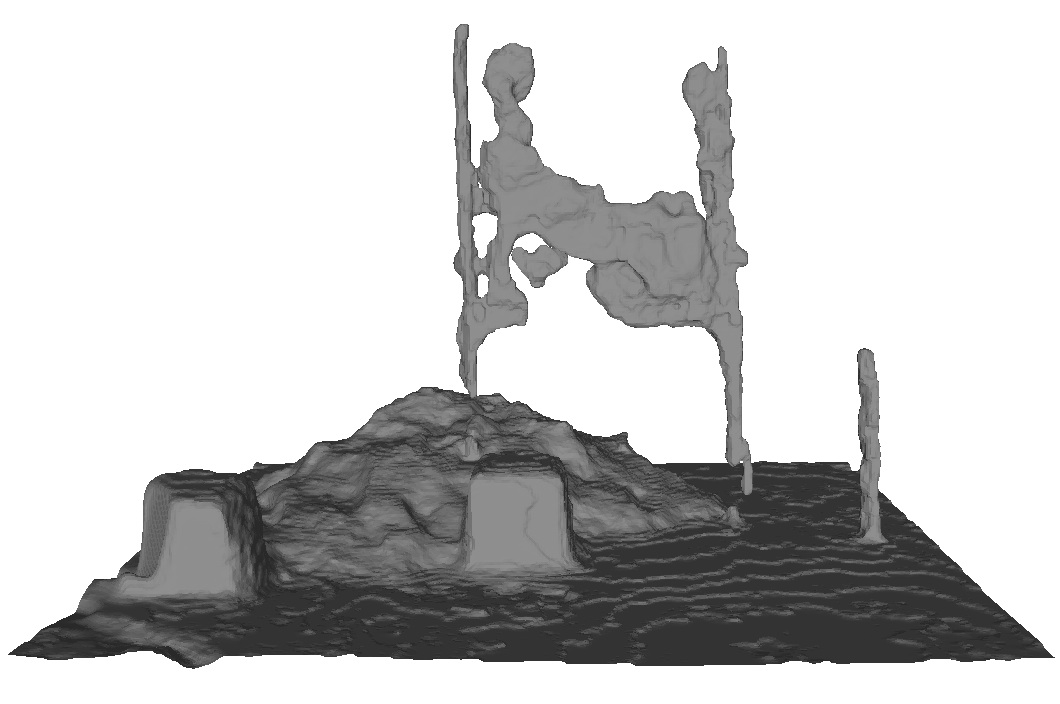} &
 \includegraphics[width=\wrs\linewidth]{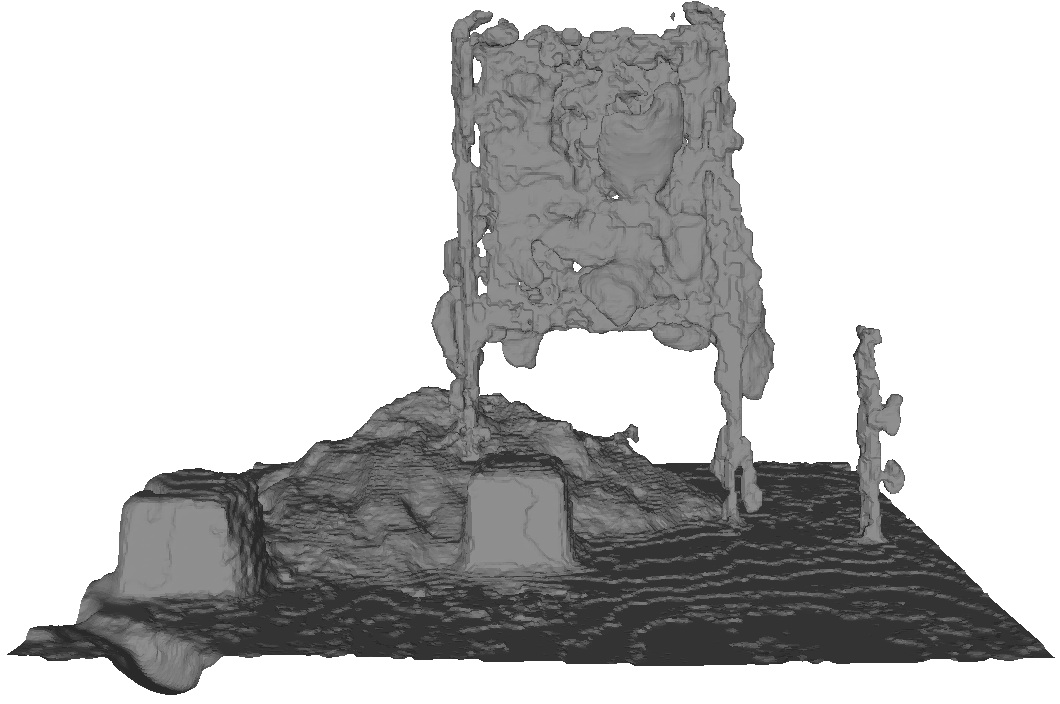} &
 \includegraphics[width=\wrs\linewidth]{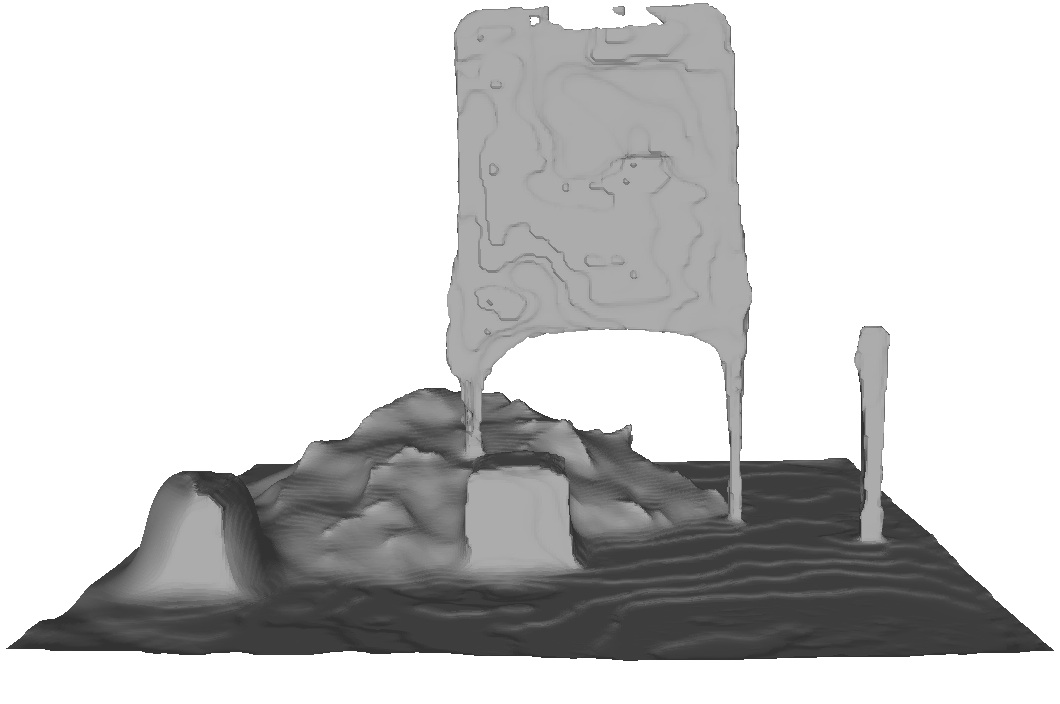} \vspace{-0.20cm} \\
 \includegraphics[width=0.06\linewidth]{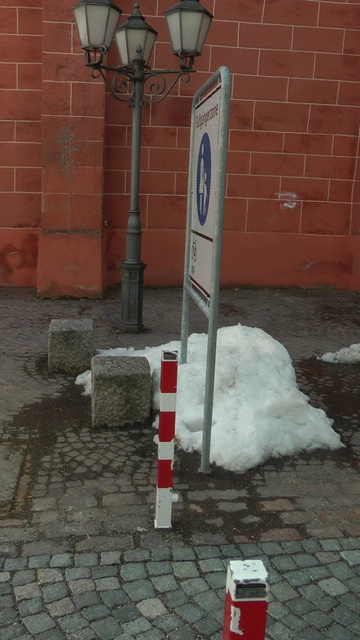} &
 \includegraphics[width=\wrs\linewidth]{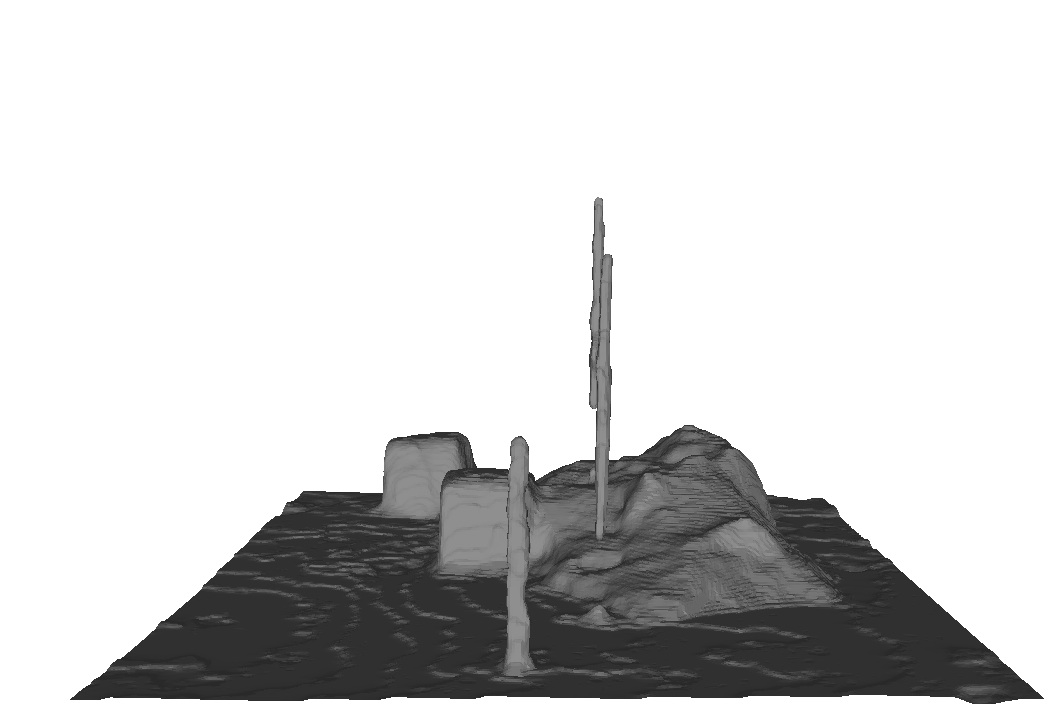} &
 \includegraphics[width=\wrs\linewidth]{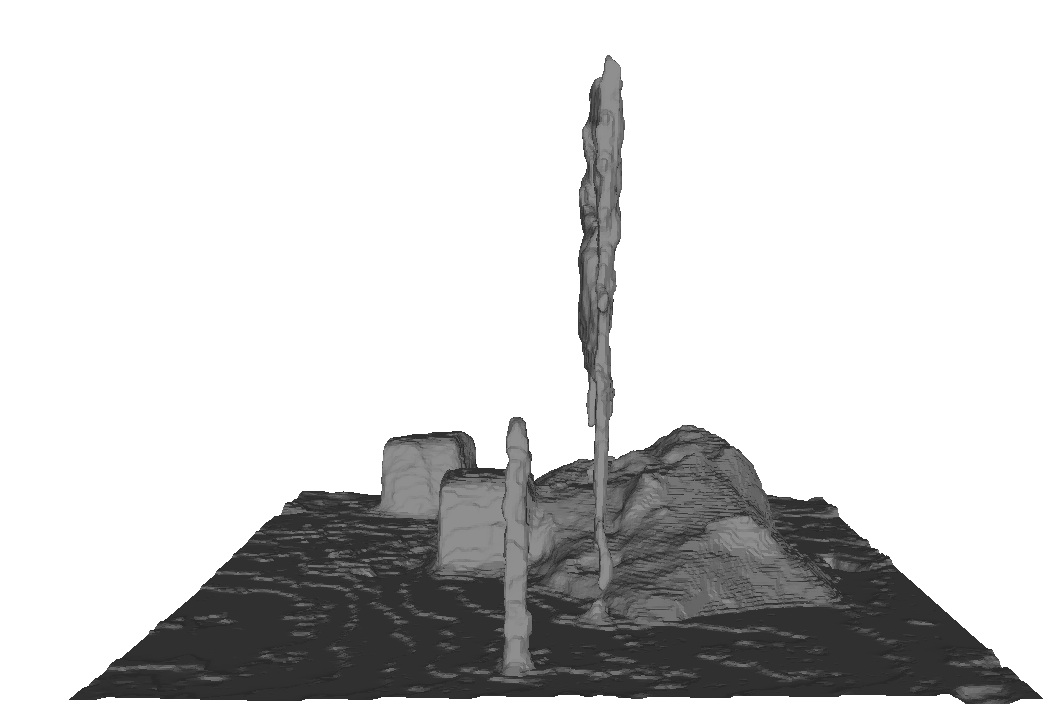} &
 \includegraphics[width=\wrs\linewidth]{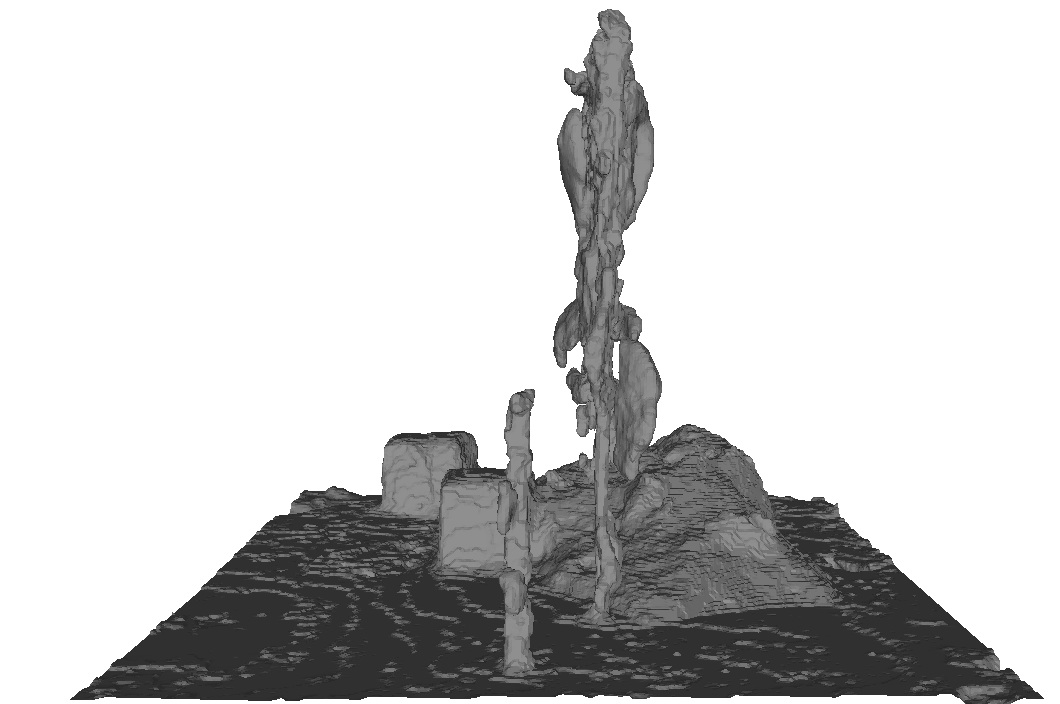} &
 \includegraphics[width=\wrs\linewidth]{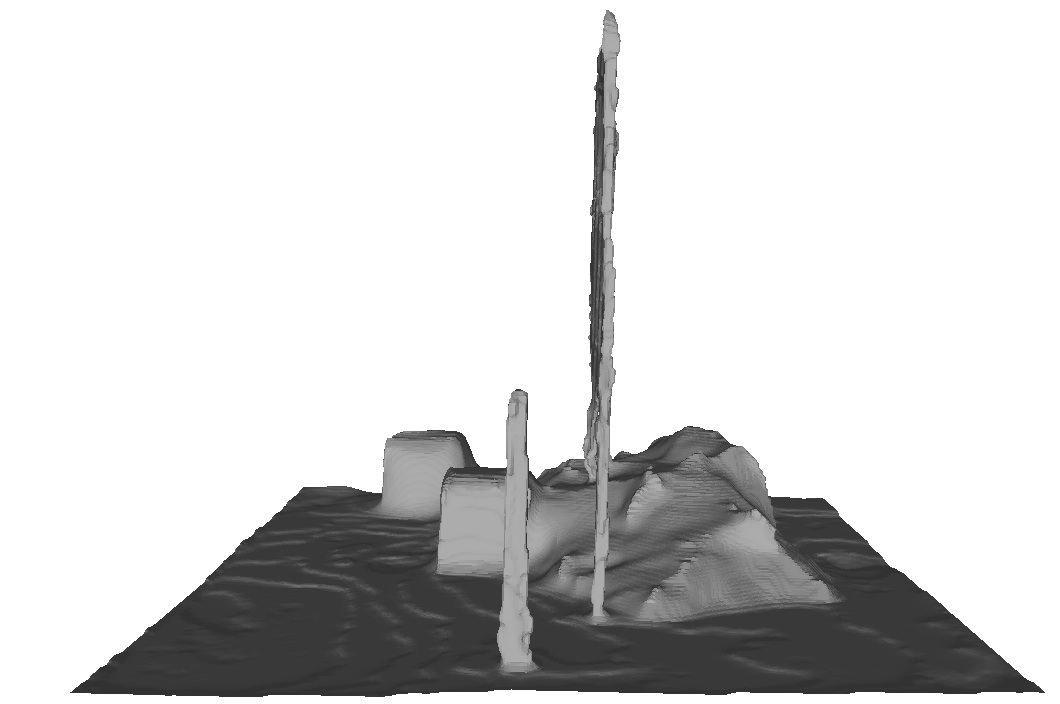}  \\ 
Example Images & TV-Flux (high) & TV-Flux (medium) & TV-Flux (low) & Our Method 
 \end{tabular}
 \vspace*{-0.125cm}
 \caption{\label{fig:road_sign_reconstructions} Reconstructions of the street sign dataset from \cite{UB13} using the TV-Flux fusion from \cite{zach2008fast} with three different smoothness settings (high/medium/low), and our proposed method.
 }
 \end{figure*}
 \begin{figure*}
 \centering
 \begin{tabular}{cccc}
   \includegraphics[width=0.15\linewidth]{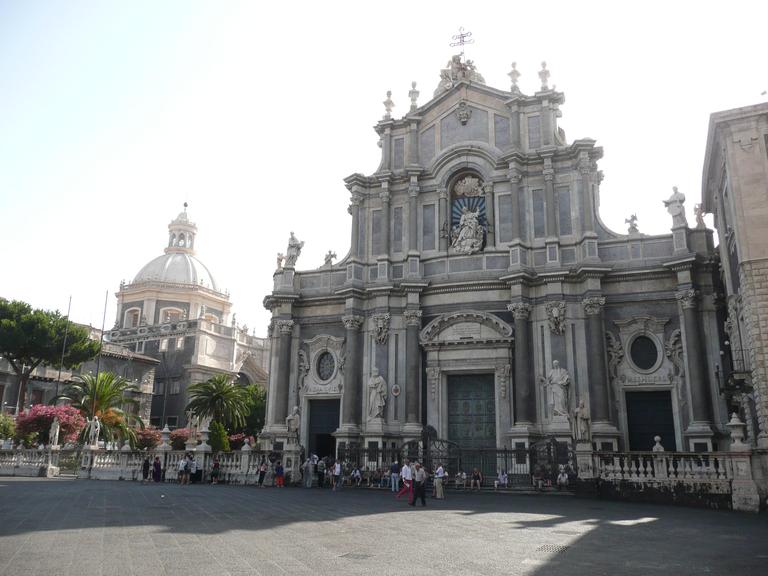} &
 \includegraphics[width=0.25\linewidth]{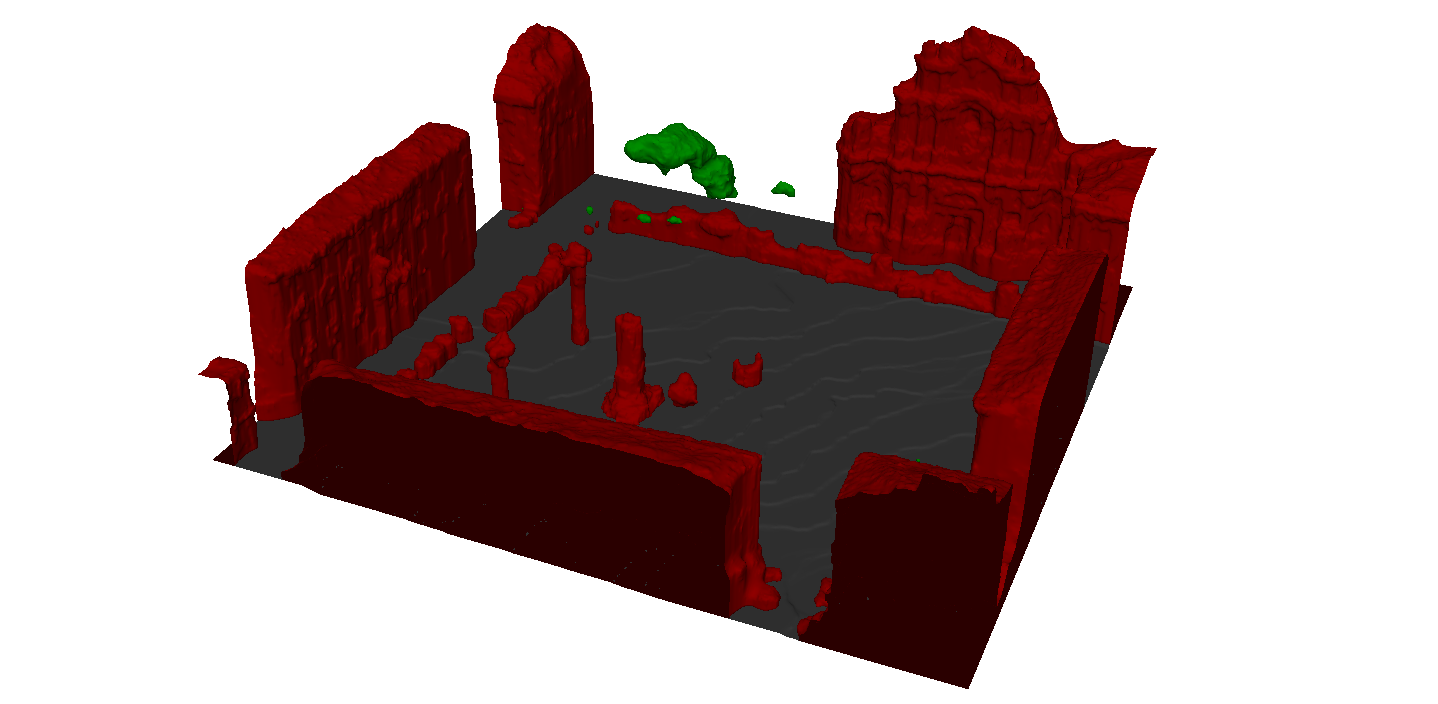} &
  \includegraphics[width=0.25\linewidth]{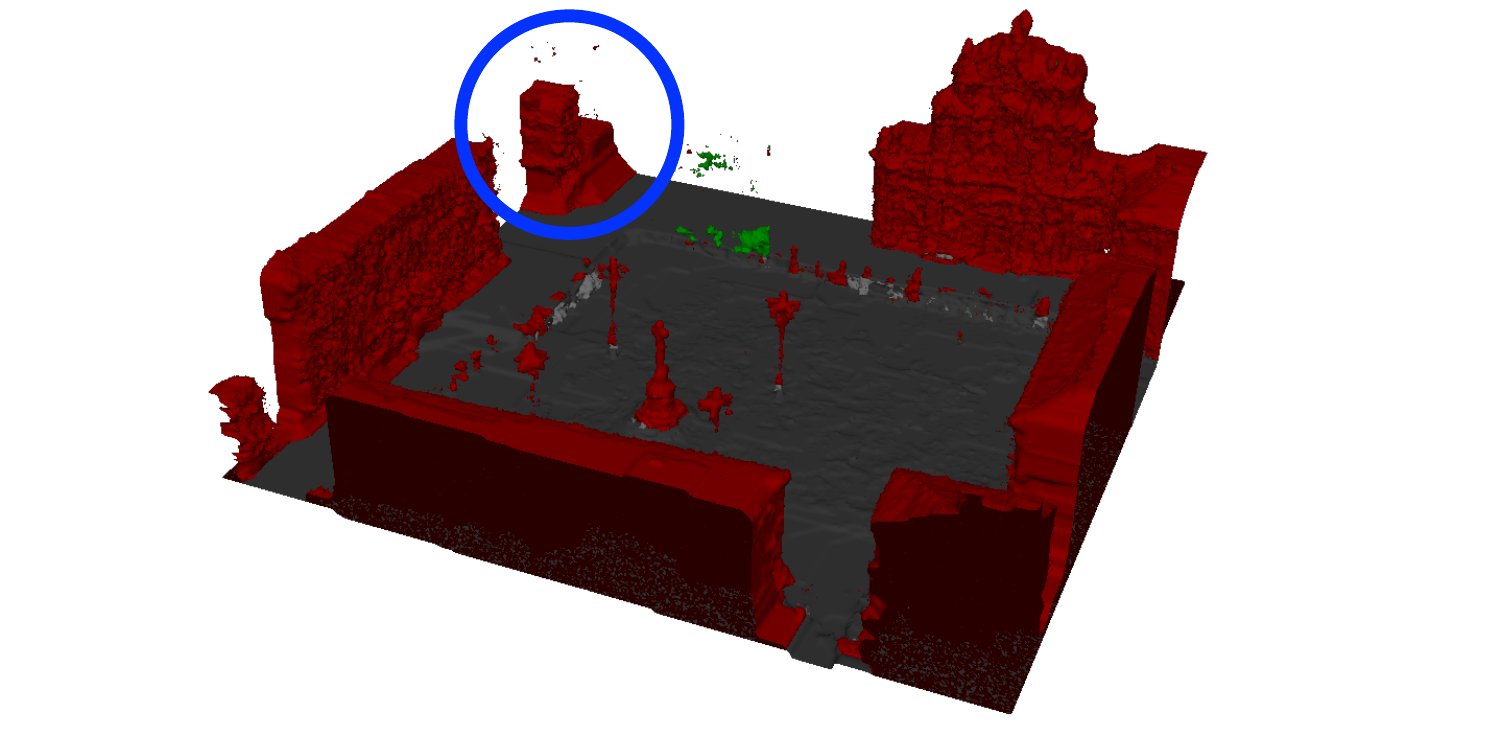} &
   \includegraphics[width=0.25\linewidth]{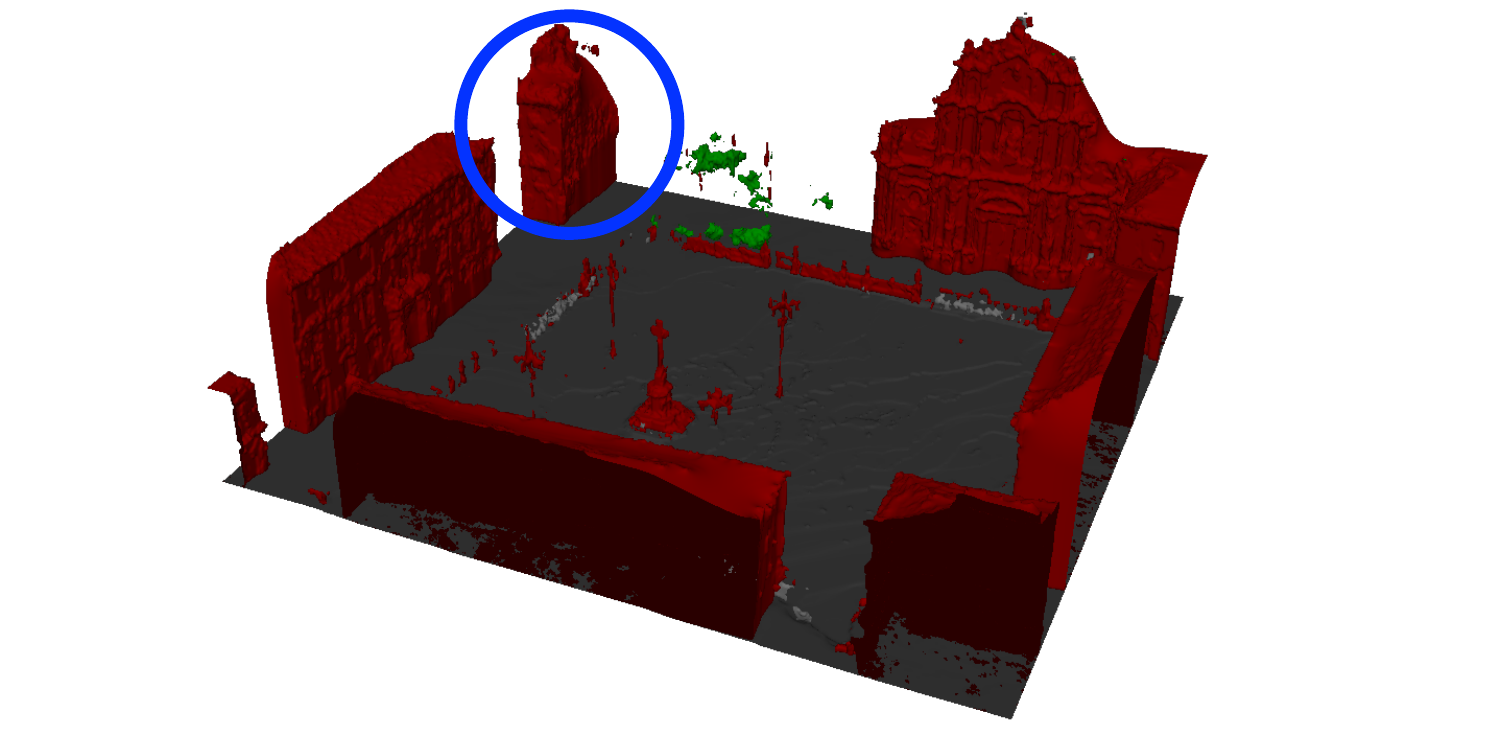} \vspace{-0.25cm} \\
   \includegraphics[width=0.15\linewidth,height=1.8cm]{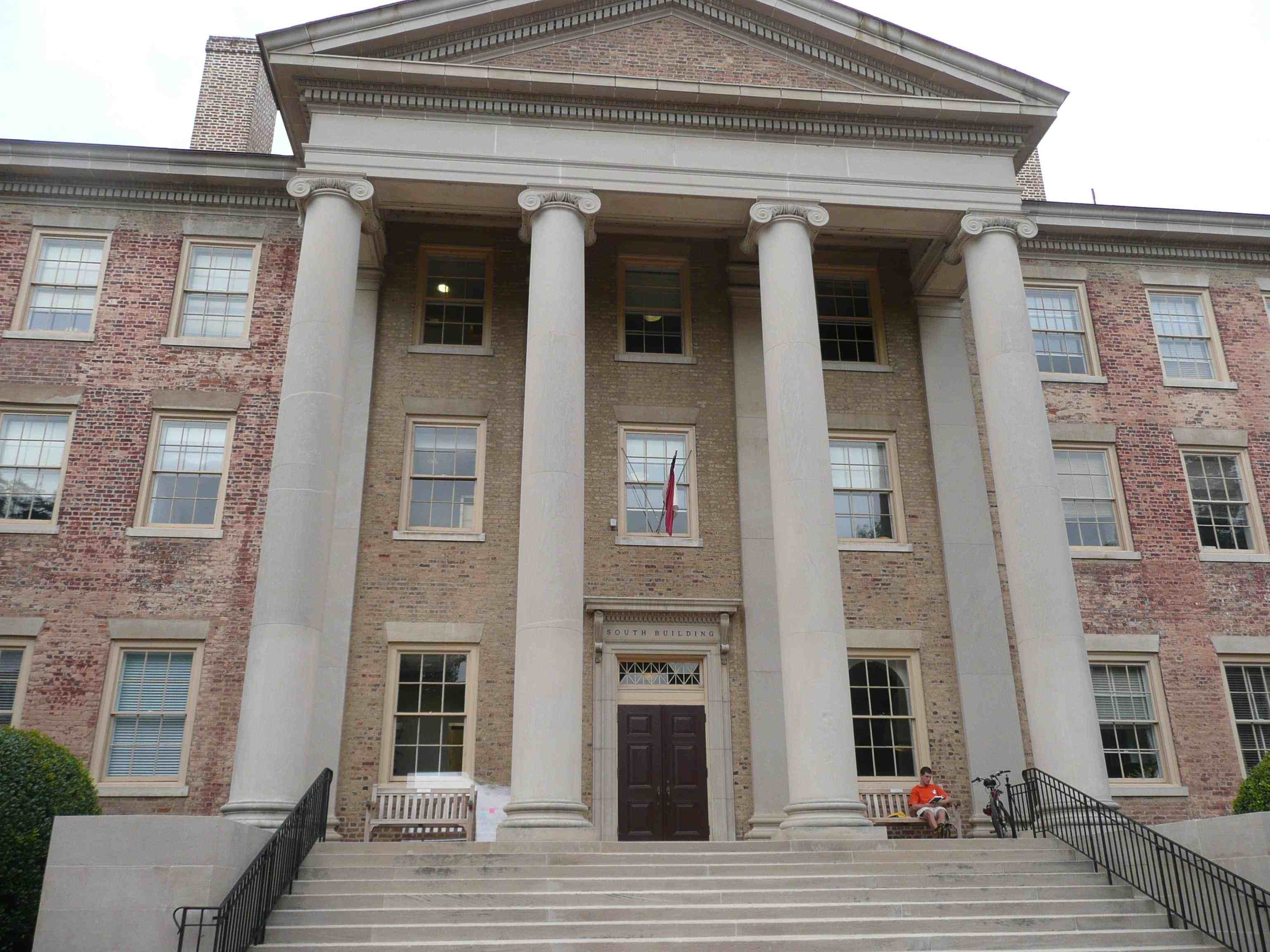} &
 \includegraphics[width=0.25\linewidth]{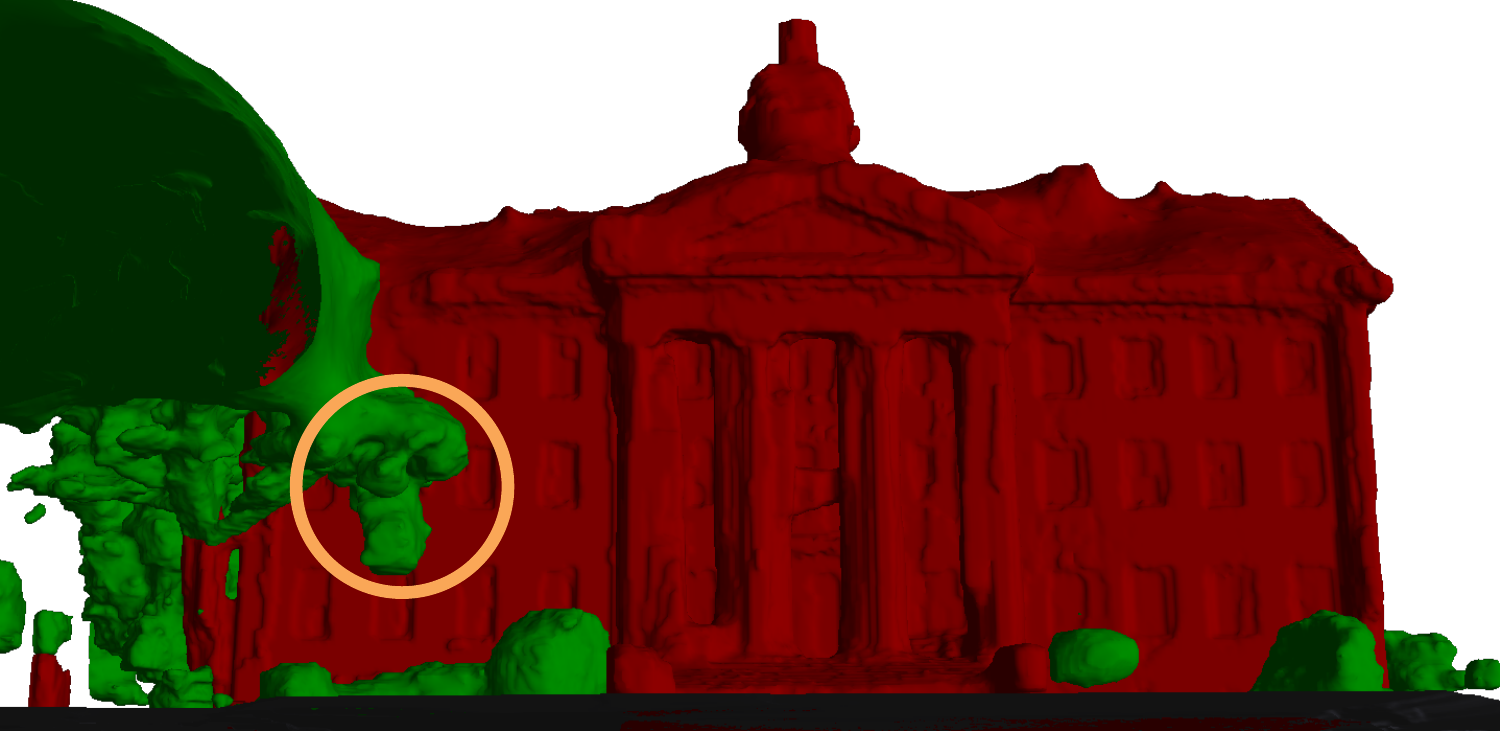} &
  \includegraphics[width=0.25\linewidth]{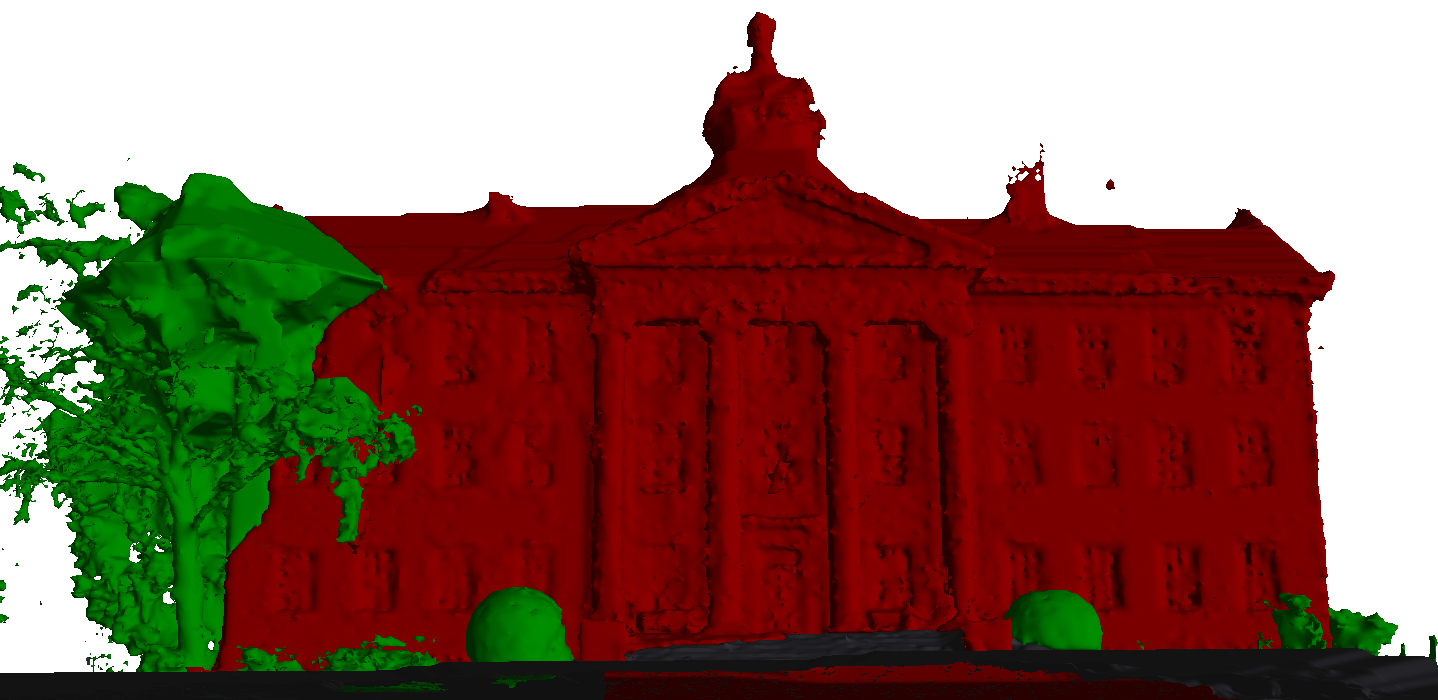} &
   \includegraphics[width=0.25\linewidth]{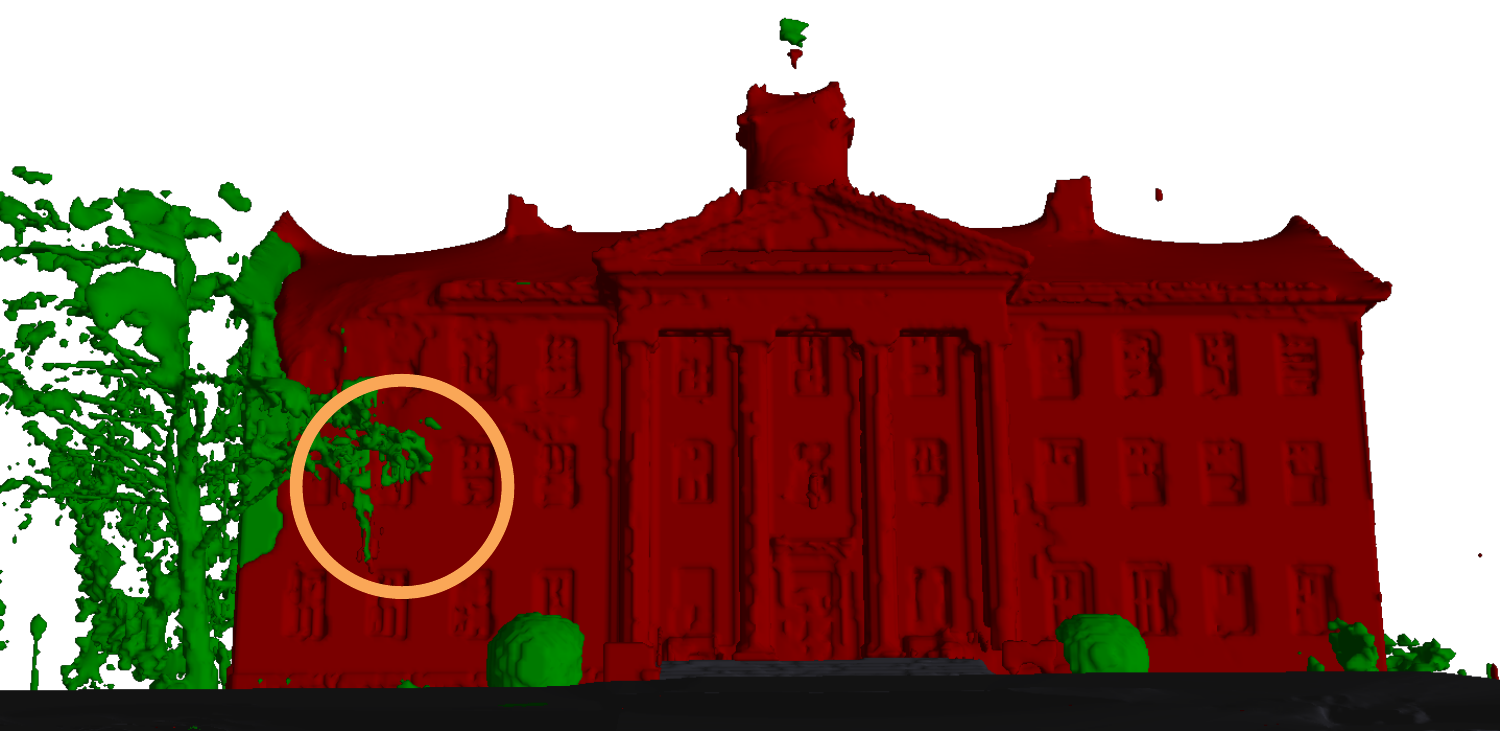} \vspace{-0.05cm} \\
   \includegraphics[width=0.15\linewidth,height=1.8cm]{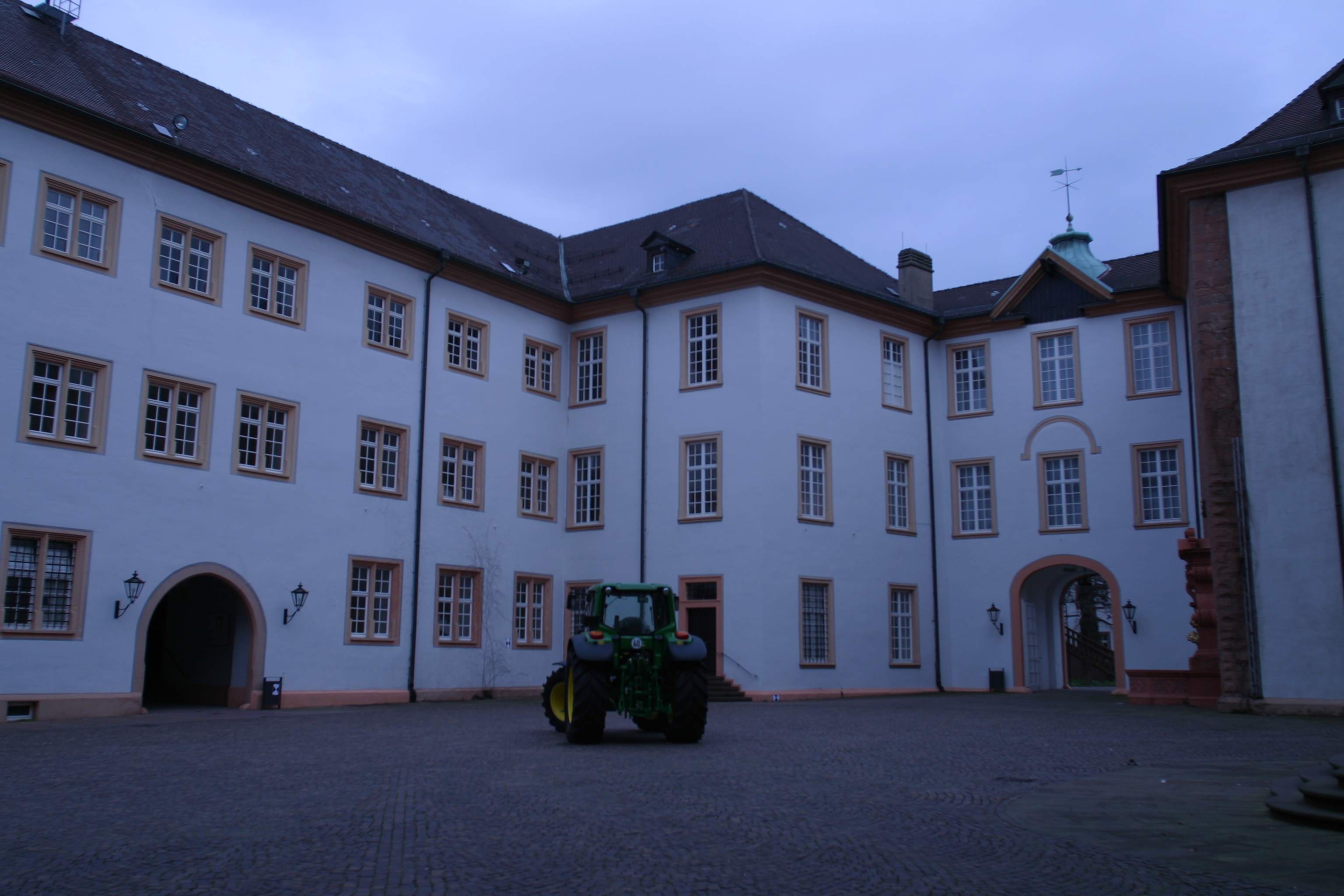} &
 \includegraphics[width=0.25\linewidth]{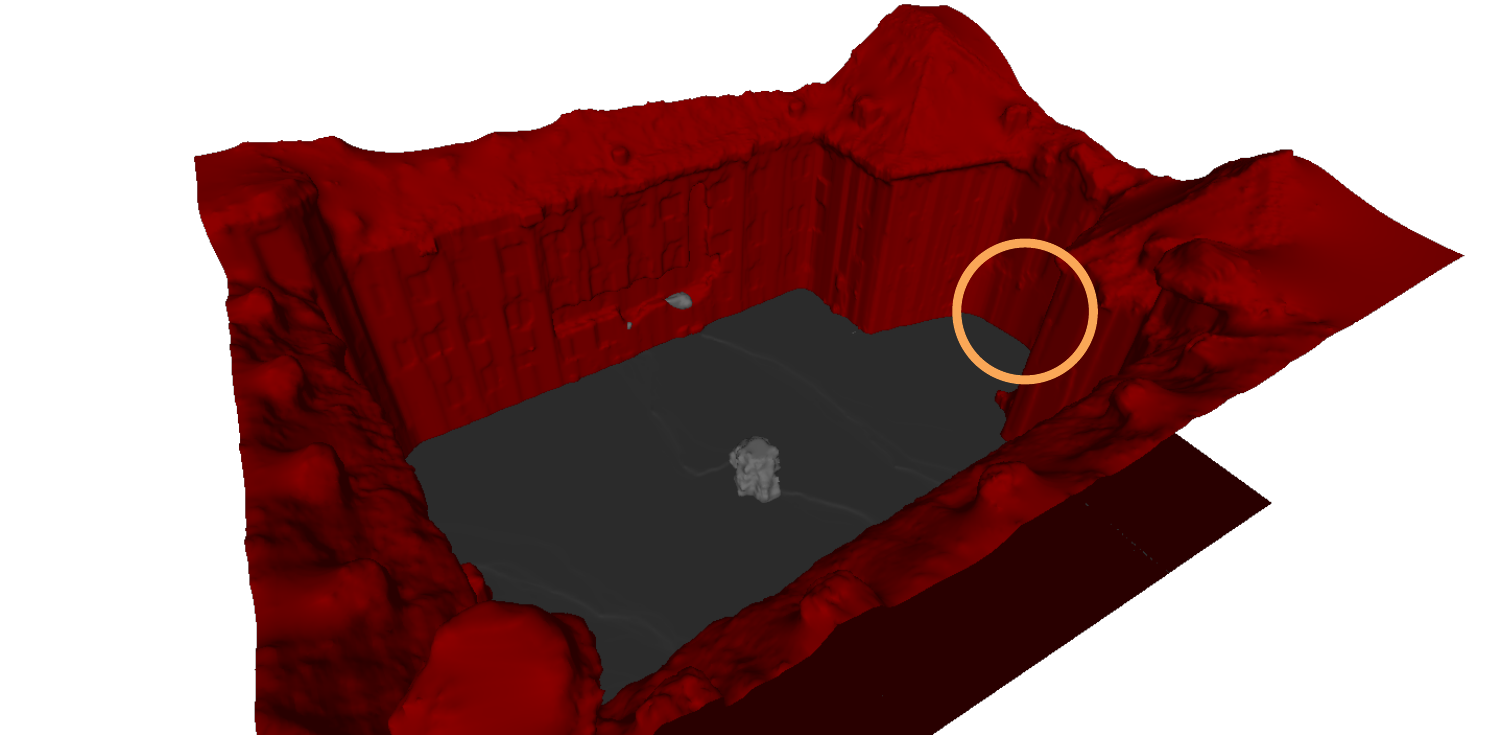} &
  \includegraphics[width=0.25\linewidth]{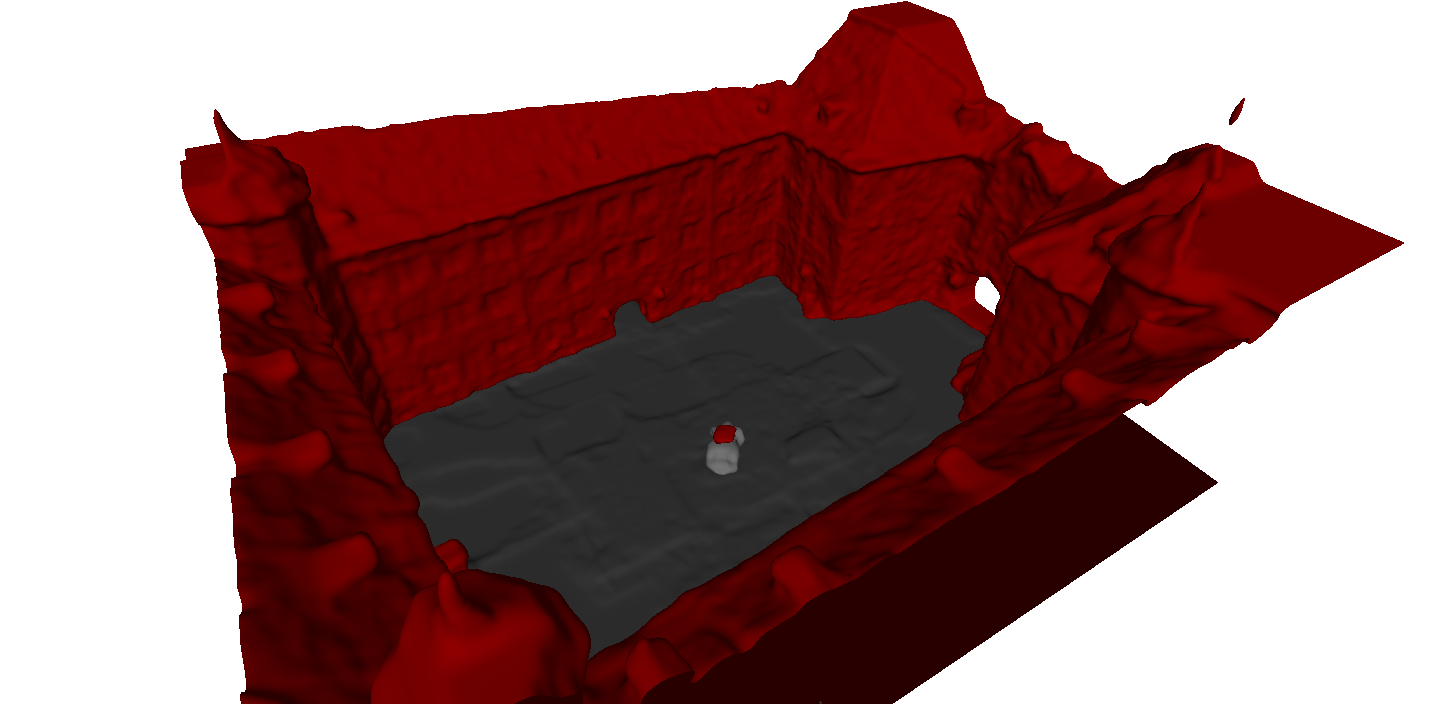} &
   \includegraphics[width=0.25\linewidth]{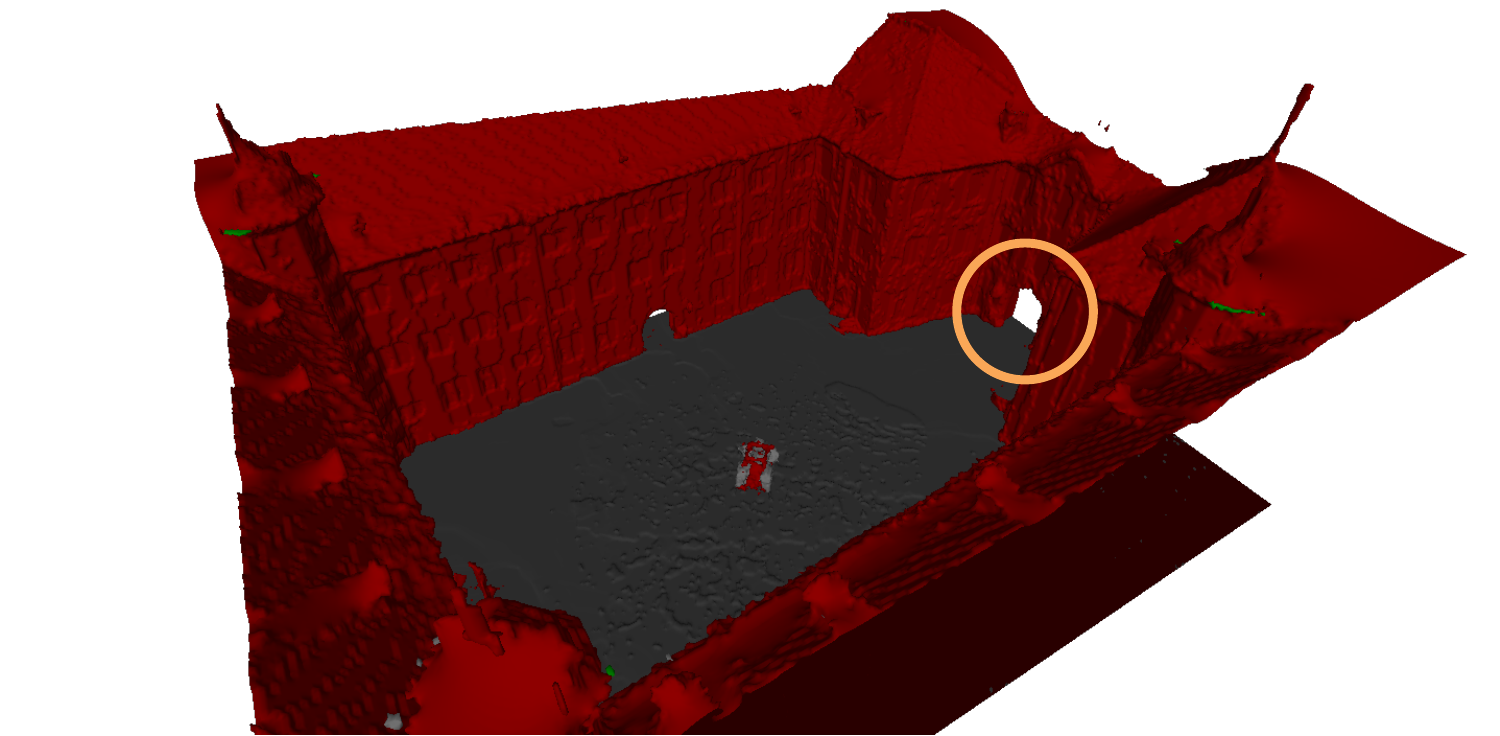} \vspace{0.025cm} \\
 Input Image & H{\"a}ne et al.\ 2013 \cite{Hane13} & Savinov et al. 2015 \cite{Savinov15} & Proposed Method   
 \end{tabular}
 \vspace*{-0.125cm}
 \caption{\label{fig:reconstructions} Semantic 3D Reconstructions: we improve in \blue{weakly observed areas} and resolve \yellow{unary potential artifacts} at the same time. Five semantic labels are used: ground, building, vegetation, clutter, free space.  \vspace*{-0.15cm}}
 \end{figure*}

\subsection{Middlebury Multi-View Stereo Benchmark}
We evaluate our method for dense 3D reconstuction on the Middlebury benchmark~\cite{middlebury}. We ran our algorithm on all 6 datasets (using the same parameters). Two quantitative measures are defined in this benchmark paper: accuracy (Acc) and completeness (Comp). In terms of accuracy our algorithm sets a new state-of-the-art for the Dino Full and Dino Ring datasets (\cf Fig.~\ref{fig:middlebury_reconstructions}). An actual ranking of the benchmark is difficult because there is no default, commonly accepted, way to combine the two measures. Taking into account both measures we are close to the state-of-the-art on all datasets (results can be found online~\footnote{\url{http://vision.middlebury.edu/mview/eval/}}).

\subsection{Street Sign Dataset}
A challenging case for volumetric 3D reconstruction are thin objects. When approximating the data term, which is naturally given as a ray potential in the 2.5D input data, by unary or pairwise potentials the data terms from both sides are prone to cancel out. Similarly, when using visual hulls a slight misalignment of the two sides might generate an empty visual hull. These are the reasons why thin objects are considered to be a hard case in volumetric 3D reconstruction. We evaluate the performance of our algorithm for such objects on the street sign dataset from \cite{UB13}. The dataset consists $50$ images of a street sign with corresponding depth maps. As depicted in Fig~\ref{fig:road_sign_reconstructions} the thin surface does not pose any problem to our method, thanks to an accurate representation of the input data in the optimization problem. To illustrate the result obtained with a standard volumetric 3D reconstruction algorithm we ran our implementation of the TV-Flux fusion from \cite{zach2008fast} on the same data. Note that this dataset is particularly hard because the two sides actually interpenetrate as detailed in \cite{UB13}.

\subsection{Multi-Label Experiments}
\vspace*{-0.125cm}
We evaluate our formulation for dense semantic 3D reconstruction on several real-world datasets. We show our results side-by-side with the method of \cite{Hane13} and \cite{Savinov15} in Figs.~\ref{fig:reconstructions} and \ref{fig:closeUps}. Our method uses the same smoothness prior as \cite{Hane13}. For all the datasets we observe that the approximation of the data cost with a unary potential in \cite{Hane13} artificially fattens corners and thin objects (\eg pillars or tree branches). In the close-ups (\cf Fig.~\ref{fig:closeUps}) we see that such a data term recovers significantly less surface detail with respect to our proposed method. This problem has been addressed in \cite{Savinov15}, but their discrete graph-based approach suffers from metrication artifacts, cannot be combined with the class-specific anisotropic smoothness prior and does not lead to smooth surfaces (\cf Fig.~\ref{fig:closeUps}). Moreover, their coarse-to-fine scheme produces artifacts in the reconstructions. Our approach takes the best of both worlds, the ray potential part ensures an accurate position of the observed surfaces, while the anisotropic smoothness prior faithfully handles weakly observed areas.

\section{Conclusion}
\vspace*{-0.125cm}
In this paper we proposed an approach for using ray potentials together with continuously inspired surface regularization. We demonstrated that a direct convex relaxation is too weak to be used in practice. We resolved this issue by adding a non-convex constraint to the formulation. Further, we detailed an optimization strategy and gave an extensive evaluation on two-label and multi-label datasets. Our algorithm allows for a general multi-label ray potential, at the same time it achieves volumetric 3D reconstruction with high accuracy. In semantic 3D reconstruction we are able to overcome limitations of earlier methods.

\textbf{Acknowledgements:}
We thank Ian Cherabier for providing code for Lemma \ref{lm:regularizerFeasibility}. This work is partially funded by the Swiss National Science Foundation projects 157101 and 163910. \v Lubor Ladick\'y is funded by the Max Planck Center for Learning Systems Fellowship.

{\small
\bibliographystyle{ieee}
\bibliography{ray_cvpr2016}
}

\clearpage
\begin{appendices}
\section{Supplementary Material}
First, we provide the proof of Lemma~\ref{lm:regularizerFeasibility} from the main text. Then we give additional experimental evaluation which did not fit into the main text of the paper: the additional experiments are shown for semantic 3D reconstruction as well as for classic 3D reconstruction. Afterwards, we give an intuition why the convex formulation, which was introduced in Section~\ref{sec:convex} of the main text, provides a very weak solution. Eventually, we show convergence experiments for our algorithm. 

\subsection{Proof of Lemma 2}
\begin{proof}
 For readability we drop the iteration index $(n)$. First we note the following. If we fix $k$ and $s$ each $(z_{s}^{\ell m})_k$ only appears in two linear equation systems with $L$ equations.
 \begin{align}
  x_s^{\ell}  =  \sum_m \!\left(z_s^{\ell m}\right)_k, \quad x_{s+e_k}^{\ell} =  \sum_m \!\left( z_{s}^{m \ell} \right)_k \quad \forall \ell \! \in \! \LwithF
 \end{align}
 Hence, this constraints can be written in the form $A\mathbf{w_s^k} = b$ for each $k$ and $s$, where $\mathbf{w_s^k}$ is a vector containing the variables $(z_{s}^{m\ell})_k$ $\forall \ell,m$. $b$ contains the values of $x_s^\ell$ and $x_{s+e_k}^\ell$. The variables $\mathbf{\tilde{z}}$ are initialized by projecting the variables $\mathbf{z}$ to the affine space defined by the equation system $A\mathbf{w_s^k} = b$ for each $s$,$k$ combination individually. To also ensure that the non-negativity constraints on the $z_s^{\ell m}$ are fulfilled, the following substitution is applied until there are no more negative $\tilde{z}_s^{\ell m}$. Assuming $\tilde{z}_s^{\ell',m'} < 0$, from $x_s^{\ell} \geq 0$ it follows that there are $\tilde{z}_s^{\ell',m''} > 0$ and $\tilde{z}_s^{\ell'',m'} > 0$. Hence, we update
 \vspace{-0.15cm}
 \begin{align}
\vspace{-0.1cm}
  \tilde{z}_s^{\ell',m'} \leftarrow \tilde{z}_s^{\ell',m'} + \epsilon \quad & \quad \tilde{z}_s^{\ell'',m''} \leftarrow \tilde{z}_s^{\ell'',m''} + \epsilon \\
  \tilde{z}_s^{\ell',m''} \leftarrow \tilde{z}_s^{\ell',m''} - \epsilon \quad & \quad \tilde{z}_s^{\ell'',m'} \leftarrow \tilde{z}_s^{\ell'',m'} - \epsilon.
 \end{align}
 Note that this substitution does not affect the original constraints if we choose $\epsilon$ such that non-negative variables stay non-negative. The above substitution is iteratively applied until no more non-negative variables are left. By always choosing $\epsilon$ as big as possible, meaning such that either the non-positive variable $\tilde{z}_s^{\ell',m'}$ or one of the positive variables gets $0$, the number of iterations of the algorithm is bounded by $O(L^2)$. This holds because for each negative variable there is a maximum of $O(L)$ steps that can be made to increase it.
\end{proof}

\subsection{Semantic 3D Reconstruction: Additional Results}
Additional reconstructions are shown in Fig.~\ref{fig:add_reconstructions}.
We refer the reader to the supplementary video where renderings of our models can be found.

\subsection{Dataset "Head"}
We test our algorithm on a challenging specular "Head" dataset from \cite{cremers2011multiview}. It was shown in that paper that the results of traditional dense 3D reconstruction methods can be improved by utilizing the silhouette information. This information was included in their formulation as energy over rays. We show even more improvement by using our non-convex ray potential formulation in Fig.~\ref{fig:head}.

\subsection{Middlebury: Additional Analysis}
We provide accuracy (Acc) and completeness (Comp) plots for Dino Ring dataset in Fig.~\ref{fig:ring_accuracy_completeness}. We also show additional renderings of reconstructions in Fig.~\ref{fig:additional_middlebury}.

Overall, besides being accurate (as shown in the paper), our algorithm produces reconstructions with very high completeness: for $5$ out of $6$ datasets our reconstructions have completeness above $99.5\%$.

\subsection{Why is Convex Formulation so Weak?}
In this section we give a small intuitive example why the convex relaxation gives a solution which is far from binary. We give this example for a $2$-label problem without regularization and use the following notation for the labels: $\occ$ means occupied, $f$ means free-space. Consider one ray of the length $N=3$ with costs $c_0^\occ = -2, \, c_1^\occ = -3, \, c_2^\occ = -2$ and the rest of the costs are $0$. This is a realistic example since it corresponds to allowing the uncertainty around the estimated depth position $i=1$ (for example, camera sees the wall and stereo matching provides an estimate of depth, but this estimate is noisy in practice, so the uncertainty window along the ray is  very desirable). Since we only consider single ray, the ray index $r$ is omitted and the voxel space indexing function $s_i$ simplifies to just position $i$ along the ray. The exact problem, which we are solving, would be (as a reminder, $y_{-1}^f$ is always set to be $1$):
\begin{align}\label{eq:simple_example}
 \psi = & -2 y_{0}^\occ - 3 y_{1}^\occ - 2 y_{2}^\occ \rightarrow \min\limits_{\bf x, \, y} \\
 \mbox{s.t. } \; & y_{i}^\occ \leq y_{i-1}^{f}, \; y_{i}^f \leq y_{i-1}^{f}, \nonumber \\ 
 & y_{i}^\occ \leq x_{i}^\occ, \; y_{i}^f \leq 1 - x_{i}^\occ, \nonumber \\
 & x_{i}^\occ \in [0, 1], \;\; \forall i. \nonumber
\end{align}

The desired solution to this problem would be
\begin{align}\label{eq:binary_solution}
 & x_{0}^\occ = 0, \; x_{1}^\occ = 1, \; x_{2}^\occ = 0, \\
 & y_{0}^\occ = 0, \; y_{1}^\occ = 1, \; y_{2}^\occ = 0, \nonumber \\
 & y_{0}^f = 1, \; y_{1}^f = 0, \; y_{2}^f = 0. \nonumber
\end{align}
This means taking the best position in the uncertainty window. This solution has the cost $c_{binary} = -3$. Unfortunately, the solution where all the variables above take value $0.5$ has a better cost: $c_{0.5} = -3.5$.

Our preliminary investigations indicate that the "all-$0.5$" solution will always be the optimal solution to the convex relaxation as long as the best cost $c_{\min}^\occ = \min\limits_i c_i^\occ$ is larger than the sum of other occupied costs (as it is the case in the example above, $-3$ versus $-4$).

\subsection{Convergence Analysis}
In this section we analyze the convergence behavior of our method.

First, we evaluate how fast the algorithm converges using different minimization intervals in between the majorization steps. In Fig.~\ref{fig:switches} we can see that a frequent execution of the majorization step has a very beneficial effect on the convergence. Additionally, we see that for a broad range of values we reach similar (in energy) critical points of our cost function. This is a strong indication that our method is robust against bad solutions.

Second, we analyze tie handling in Eq.~\ref{eq:linearization} of the main text. As a reminder, this equation describes linear majorizer as
\begin{multline}
  g(x_{s_{i}}^f,y_{i-1}^f | x_{s_{i}}^{f,(n)}, y_{i-1}^{f,(n)})  \\ =  \begin{cases}
   0 & \text{if } y_{i-1}^{f,(n)} \leq  x_{s_{i}}^{f,(n)} \\
   y_{i-1}^f - x_{s_{i}}^f & \text{if } y_{i-1}^{f,(n)} > x_{s_{i}}^{f,(n)} 
 \end{cases}
 \label{eq:again_linearization}
\end{multline}
In that equation the tie case is $y_{i-1}^{f,(n)} =  x_{s_{i}}^{f,(n)}$ and it is possible to choose any of the two branches in this case: $0$ or $y_{i-1}^f - x_{s_{i}}^f$. Our experiment in Fig.~\ref{fig:tie} shows that the difference in final energies between these two choices is very small, $0.25\%$ of their values.

\begin{figure*}
 \centering
 \begin{tabular}{cccc}
 \includegraphics[width=0.15\linewidth]{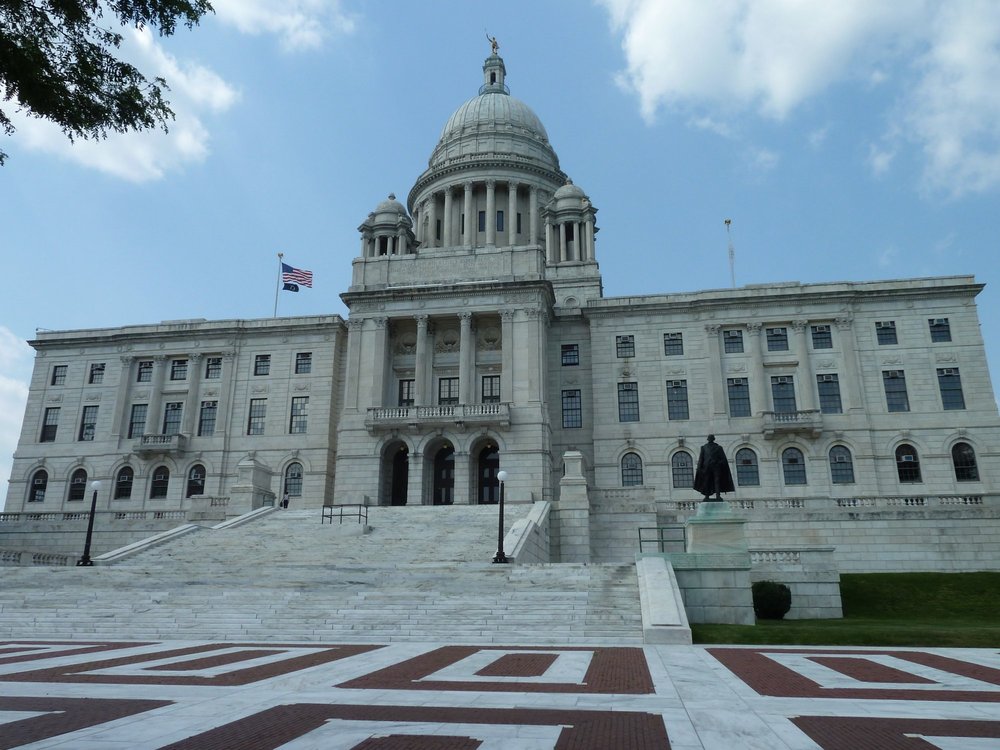} &
 \includegraphics[width=0.25\linewidth]{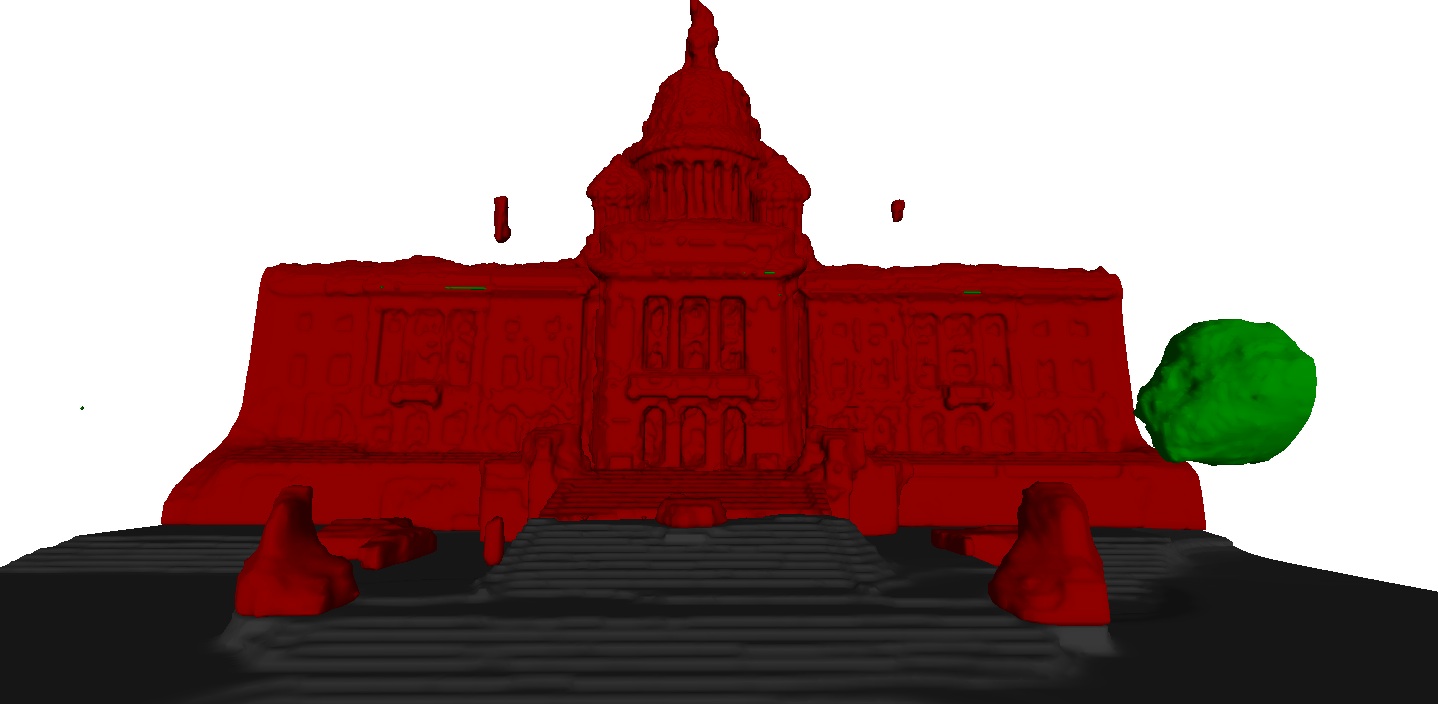} &
  \includegraphics[width=0.25\linewidth]{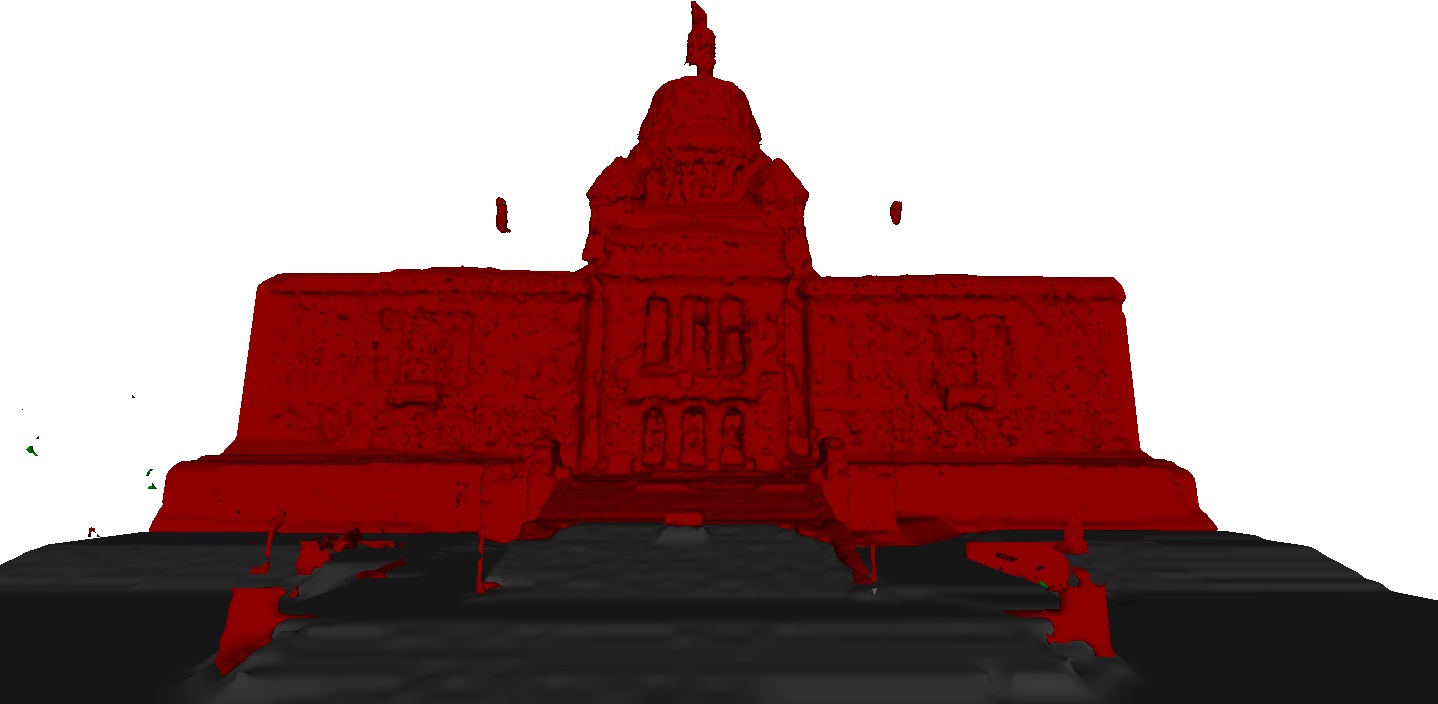} &
   \includegraphics[width=0.25\linewidth]{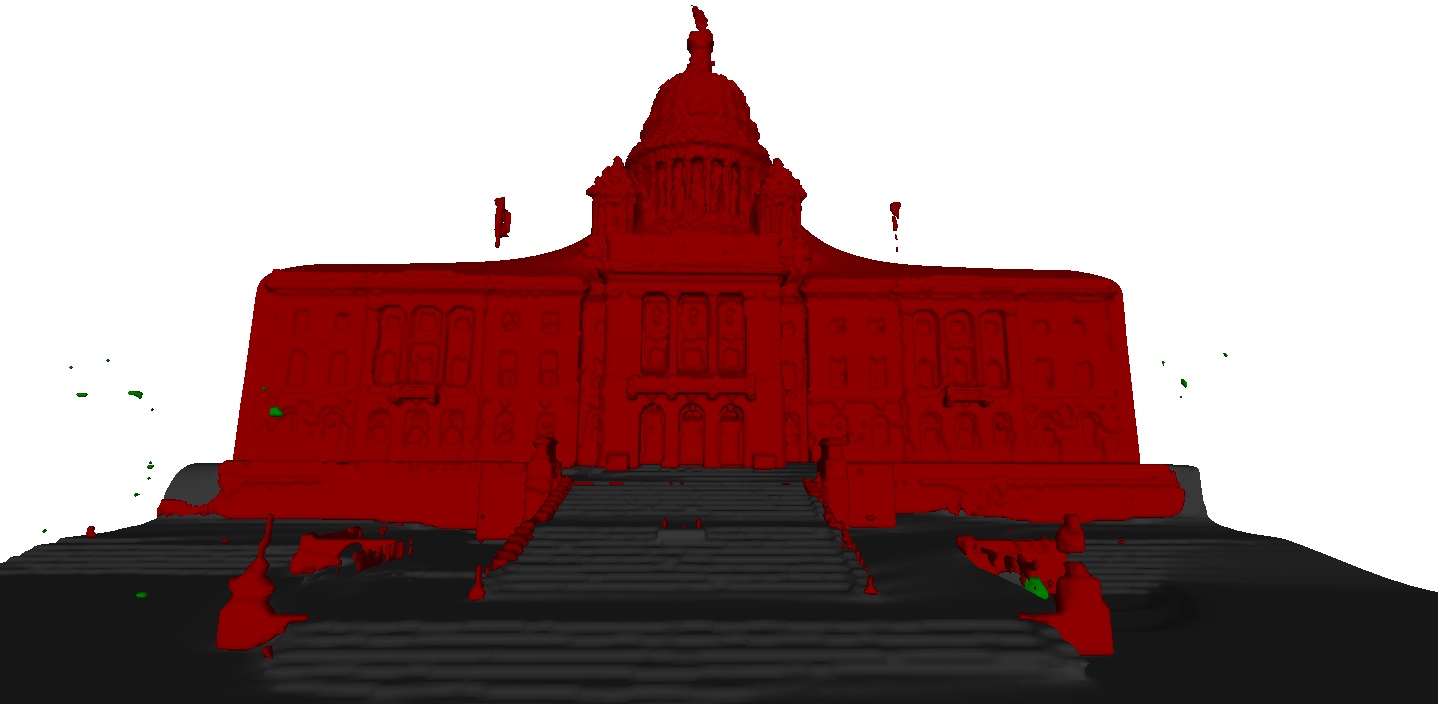} \vspace{0.00cm} \\
  Input Image & H{\"a}ne et al.\ 2013 \cite{Hane13} & Savinov et al. 2015 \cite{Savinov15} & Proposed Method
  \end{tabular}
 \caption{\label{fig:add_reconstructions} Semantic 3D Reconstructions.}
\end{figure*}

\begin{figure*}
 \begin{center}
    \begin{tabular}{ccccc}
    Original Images& \cite{vogiatzis2005multi} & \cite{Hernandez07}, \cite{Kolev07propagatedphotoconsistency} & \cite{cremers2011multiview} & Our Method \\
 \includegraphics[width=0.1\linewidth]{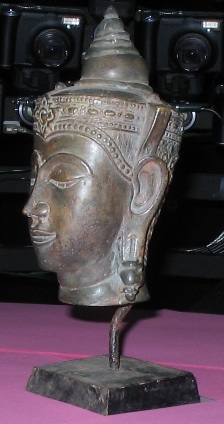} &
  \includegraphics[width=0.1\linewidth]{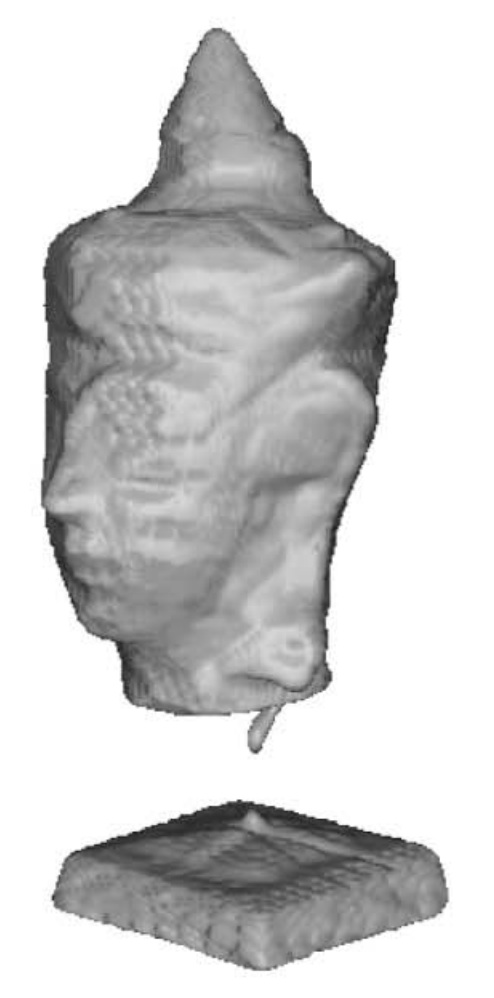} &
    \includegraphics[width=0.1\linewidth]{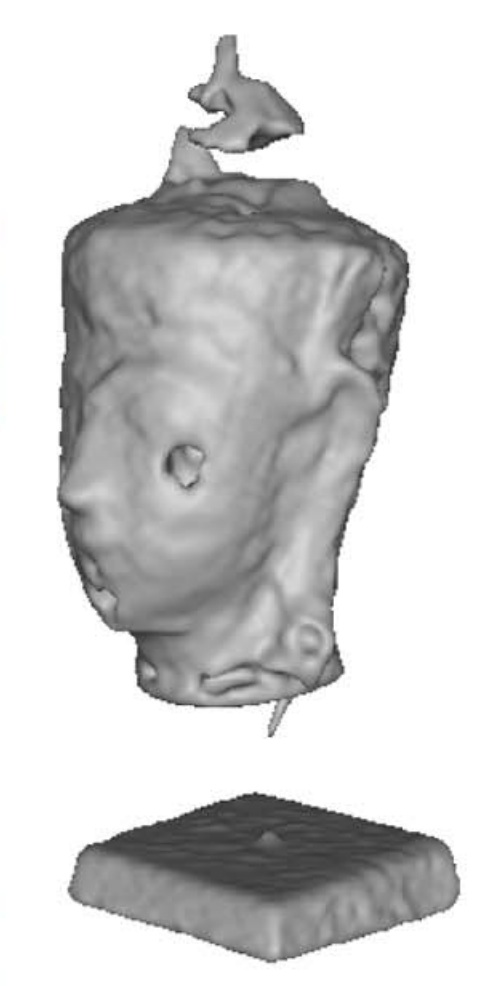} &
     \includegraphics[width=0.1\linewidth]{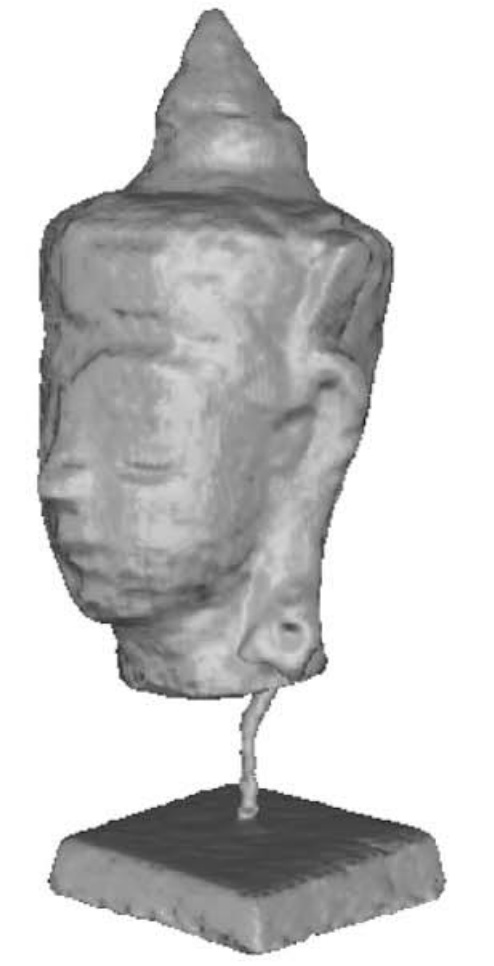} &
      \includegraphics[width=0.1\linewidth]{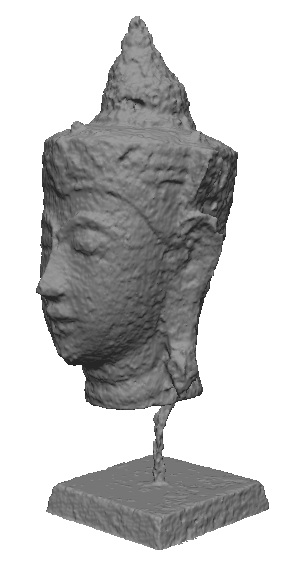} \\
 \includegraphics[width=0.1\linewidth]{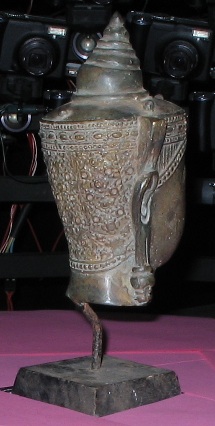} &
  \includegraphics[width=0.1\linewidth]{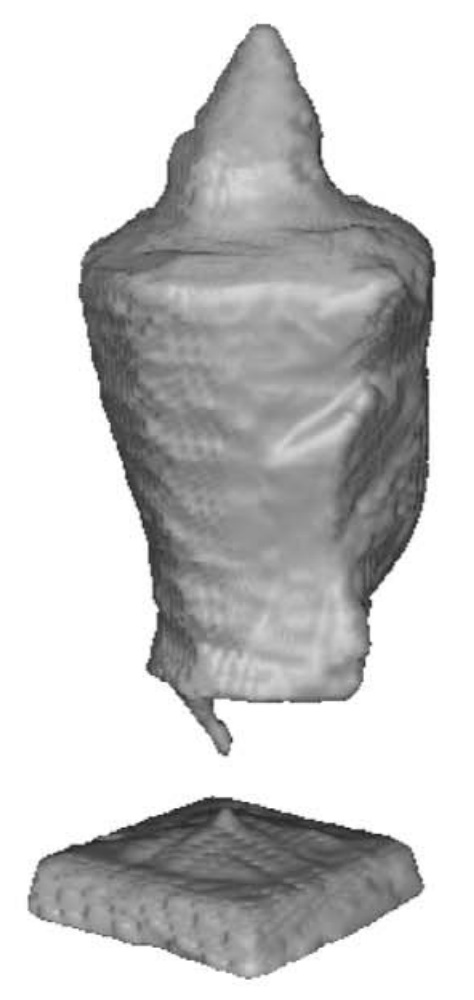} &
   \includegraphics[width=0.1\linewidth]{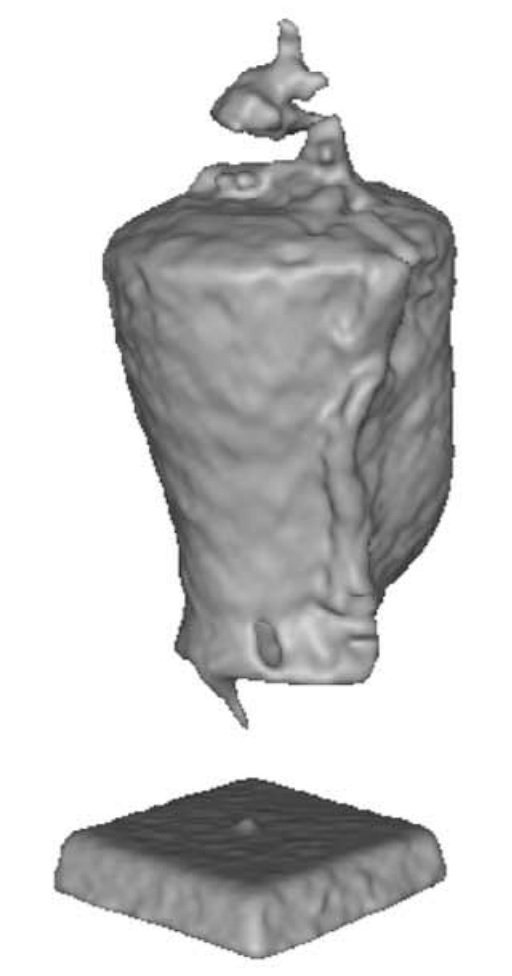} &
    \includegraphics[width=0.1\linewidth]{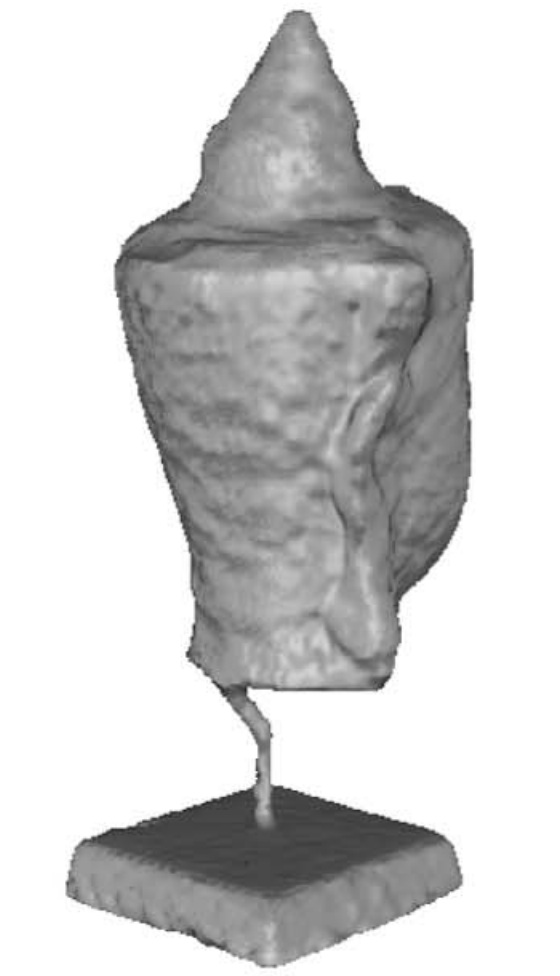} &
     \includegraphics[width=0.1\linewidth]{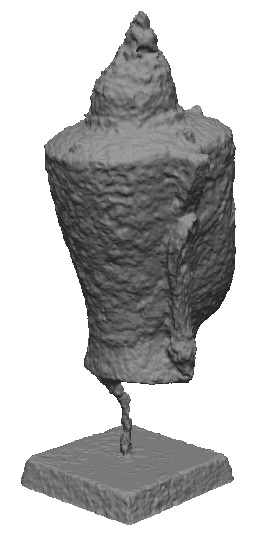} \\
 \includegraphics[width=0.1\linewidth]{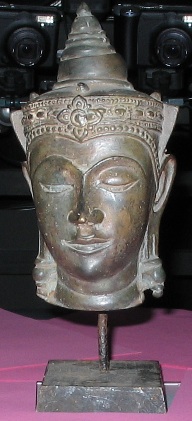} &
  \includegraphics[width=0.1\linewidth]{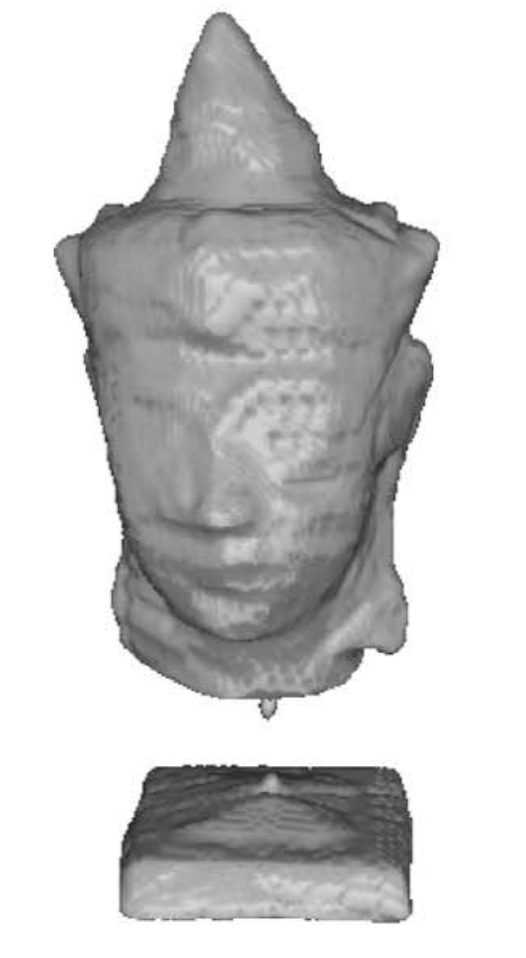} &
   \includegraphics[width=0.1\linewidth]{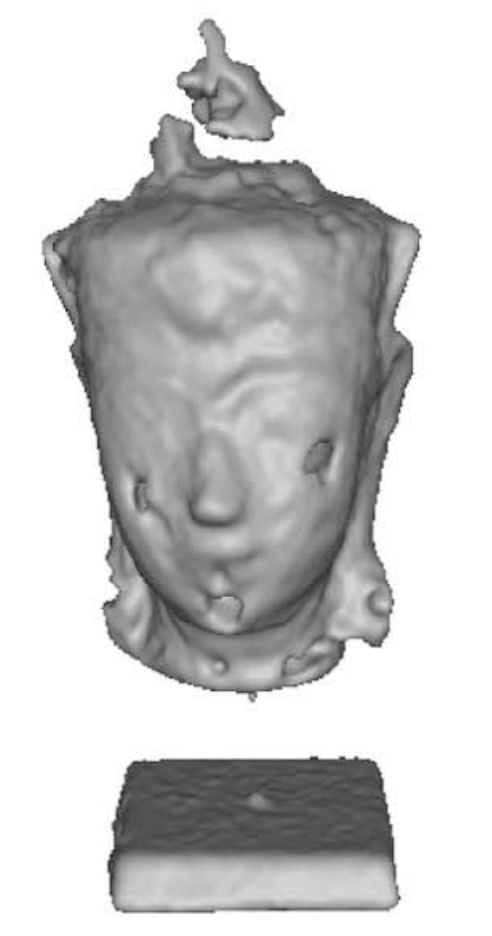} &
    \includegraphics[width=0.1\linewidth]{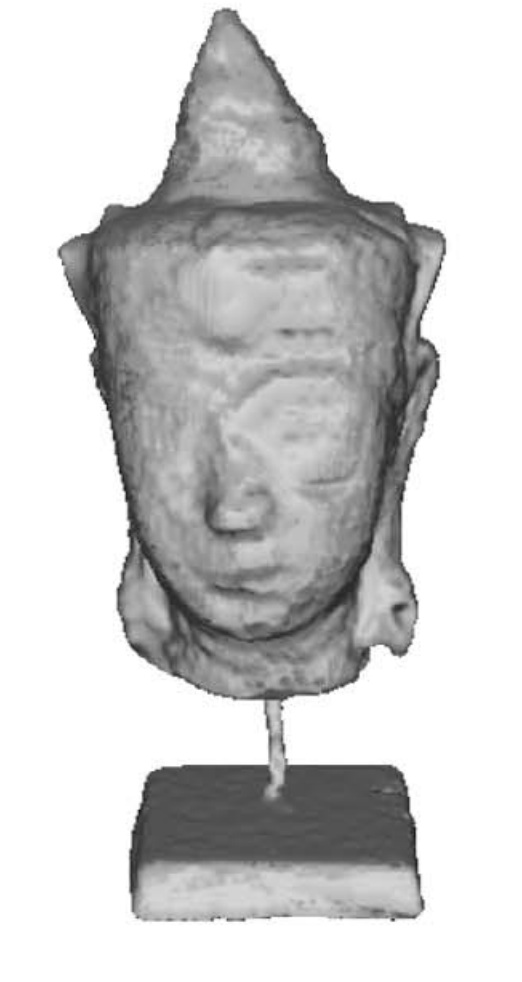} &
     \includegraphics[width=0.1\linewidth]{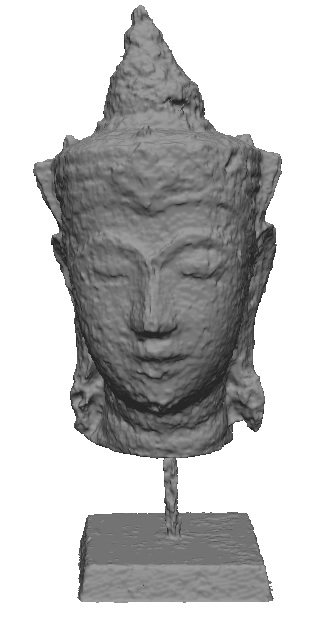} \\
 \includegraphics[width=0.1\linewidth]{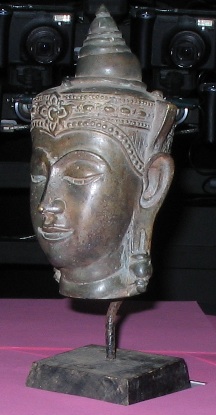} &
  \includegraphics[width=0.1\linewidth]{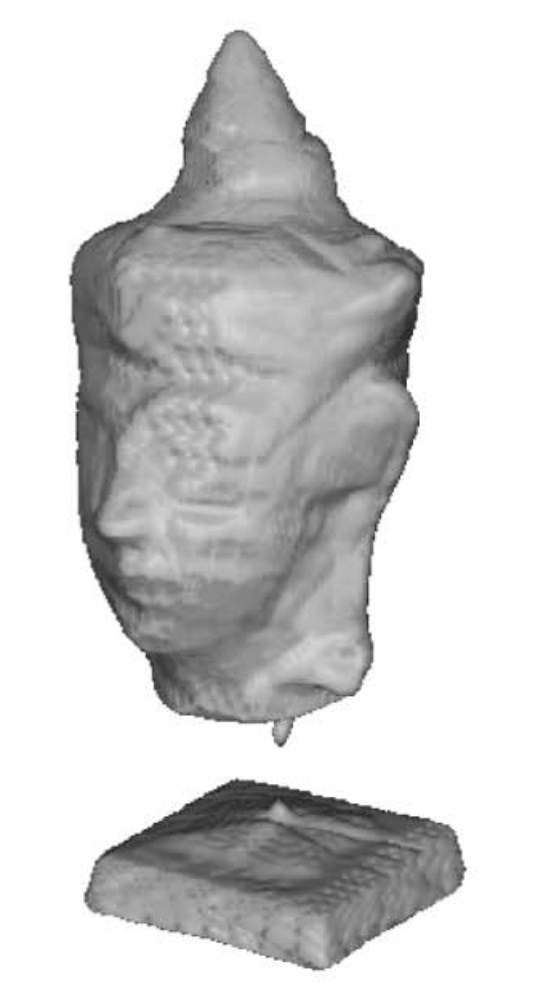} &
   \includegraphics[width=0.1\linewidth]{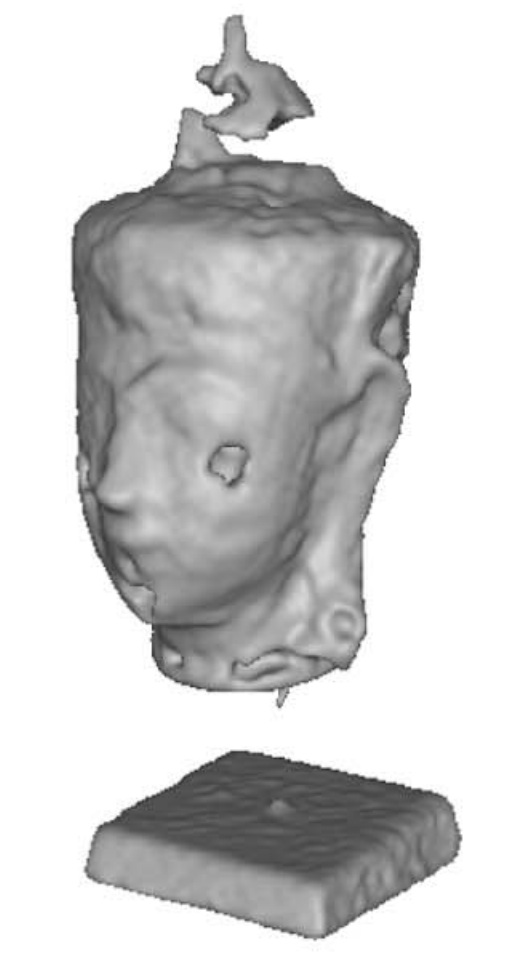} &
    \includegraphics[width=0.1\linewidth]{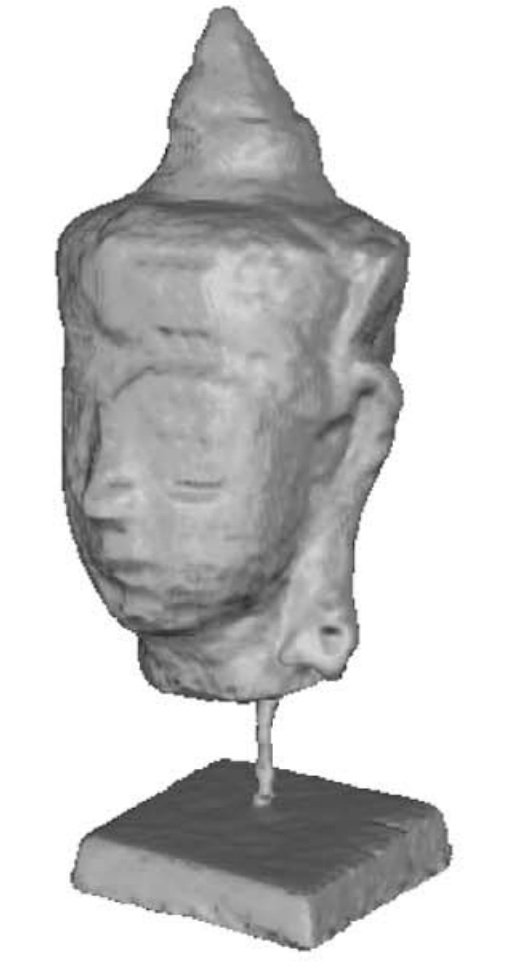} &
     \includegraphics[width=0.1\linewidth]{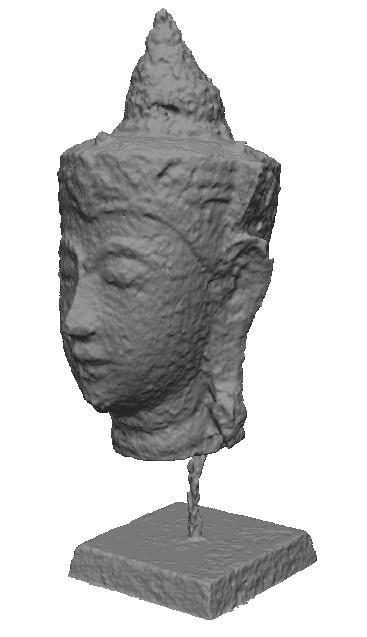}
    \end{tabular}
     \caption{\label{fig:head} Rendering of the results on the "Head" dataset. The columns from two to four are reported by \cite{cremers2011multiview}. It has been shown in \cite{cremers2011multiview} that ray information can help in reconstructing the thin pole on which the head is mounted. Our algorithm successfully reconstructs this pole as well.}
 \end{center}
 \end{figure*}

\begin{figure*}
\begin{center}
 \begin{tabular}{cc}
  \includegraphics[width=0.4\linewidth]{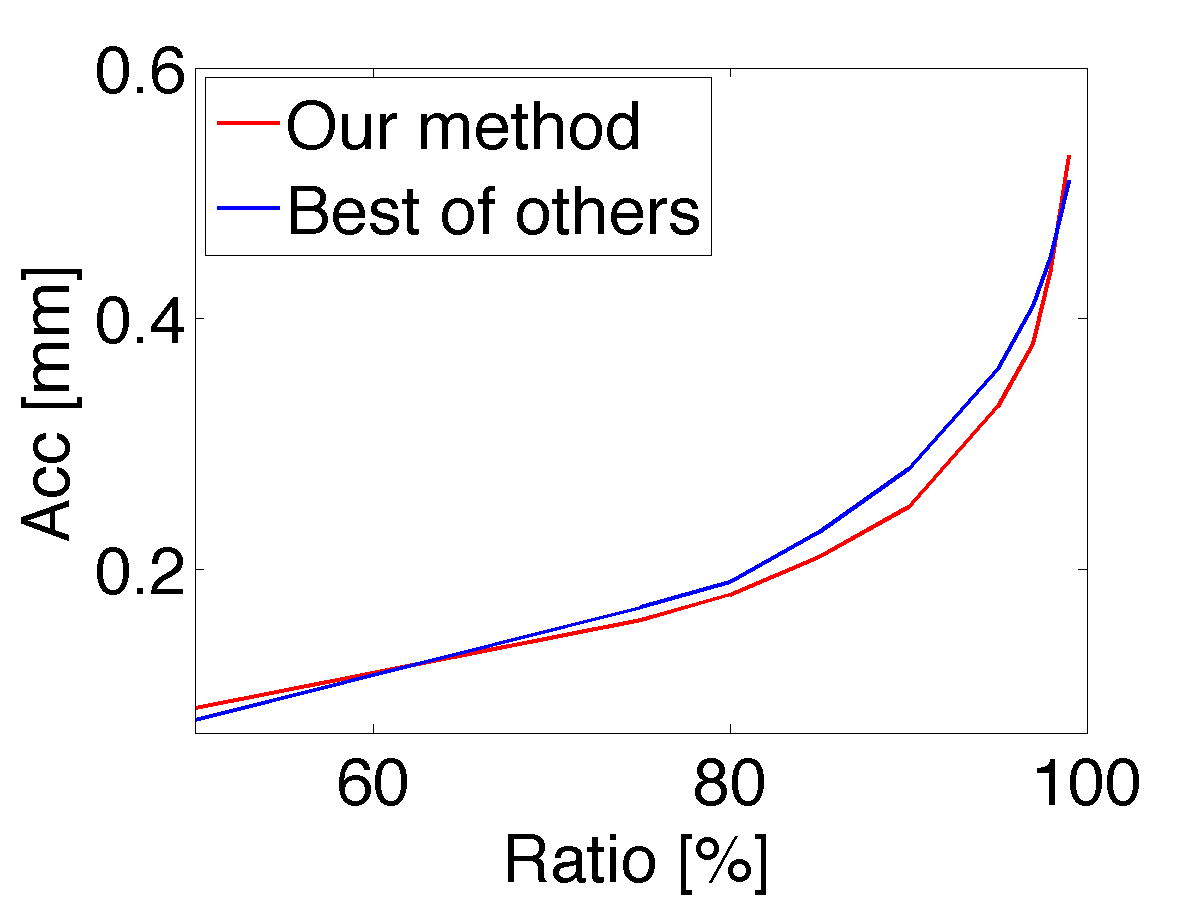} &
  \includegraphics[width=0.4\linewidth]{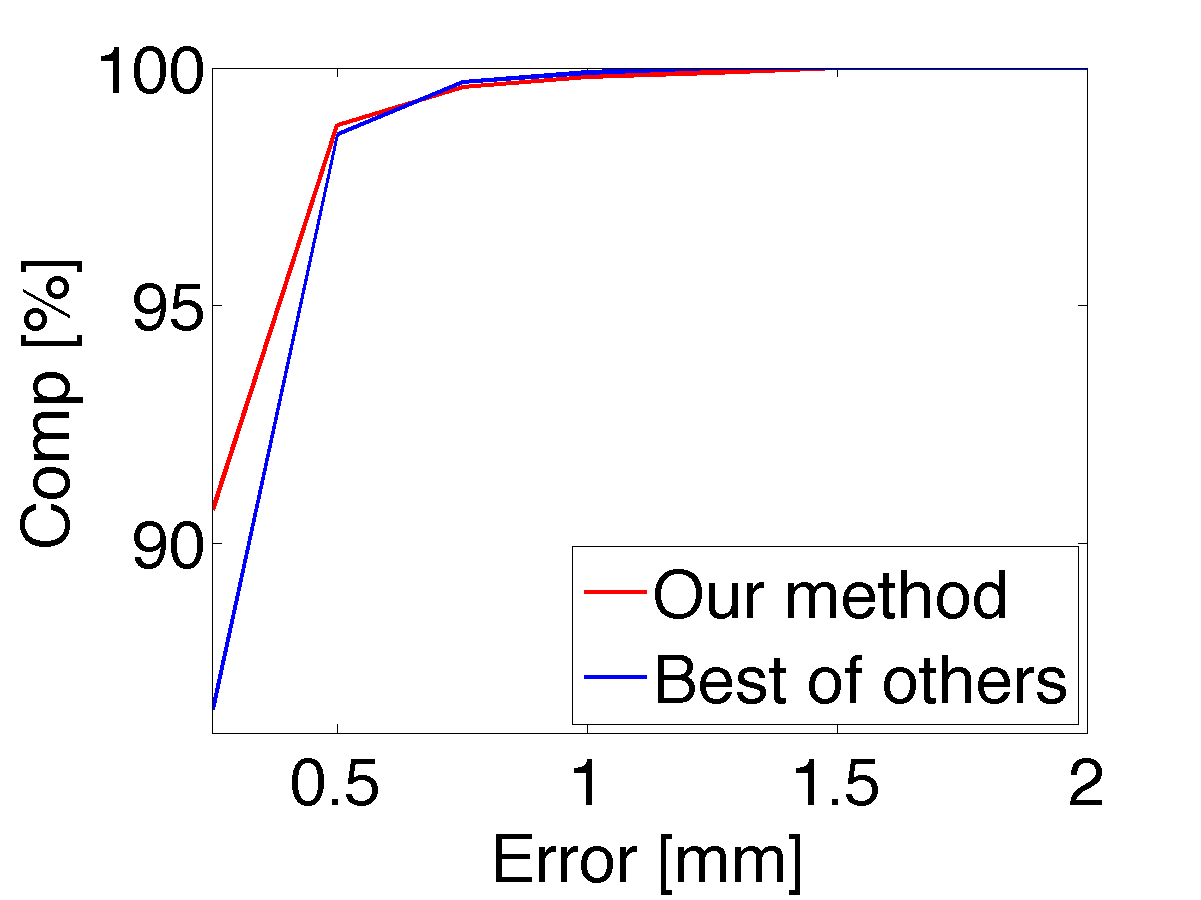}
  \end{tabular}
  \vspace{2mm}
  \caption{\label{fig:ring_accuracy_completeness} Acc vs. Ratio (lower curve better) and Comp vs. Error (higher curve better) plots for the Dino Ring dataset of the Middlebury benchmark (for details on these plots see \cite{middlebury}).}
  \end{center}
  \end{figure*}
  
\begin{figure*}
 \begin{center}
    \begin{tabular}{cccccc}
 Dino Full & Dino Sparse & Temple Full & Temple Sparse \\
 \includegraphics[width=0.13\linewidth]{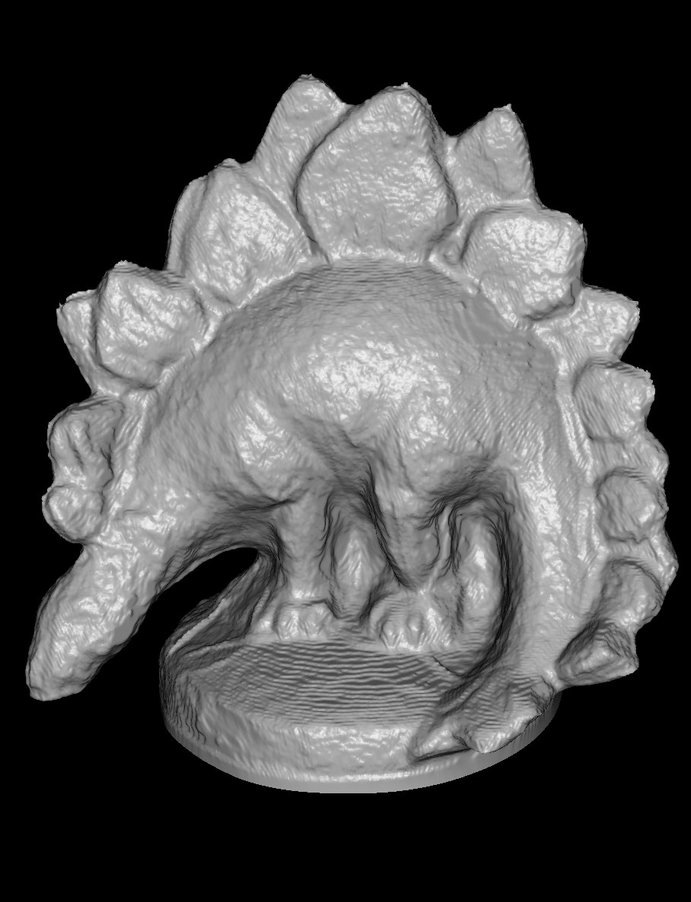} &
 \includegraphics[width=0.13\linewidth]{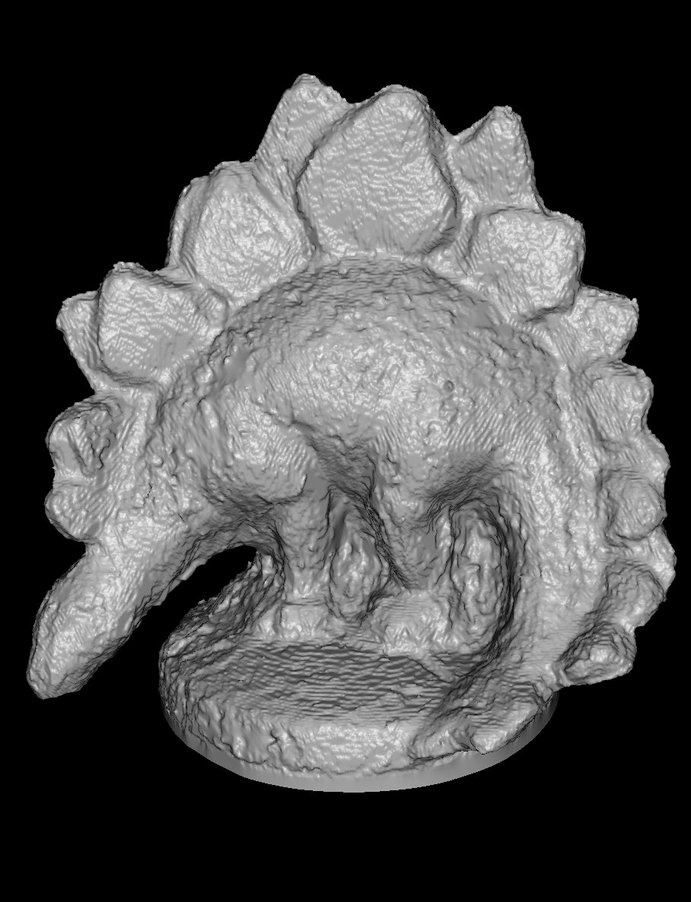} &
 \includegraphics[width=0.13\linewidth]{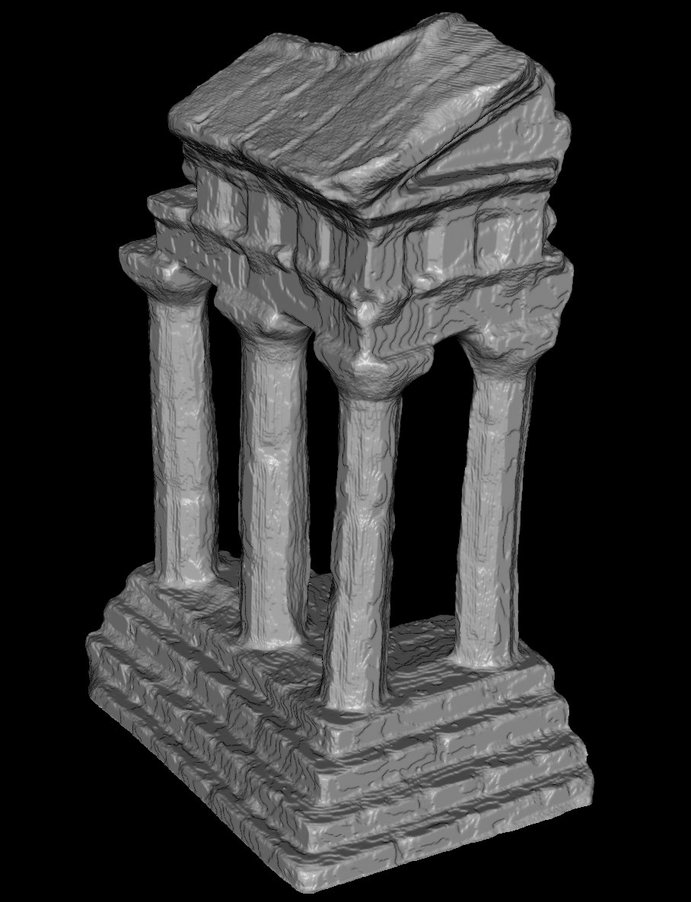} &
 \includegraphics[width=0.13\linewidth]{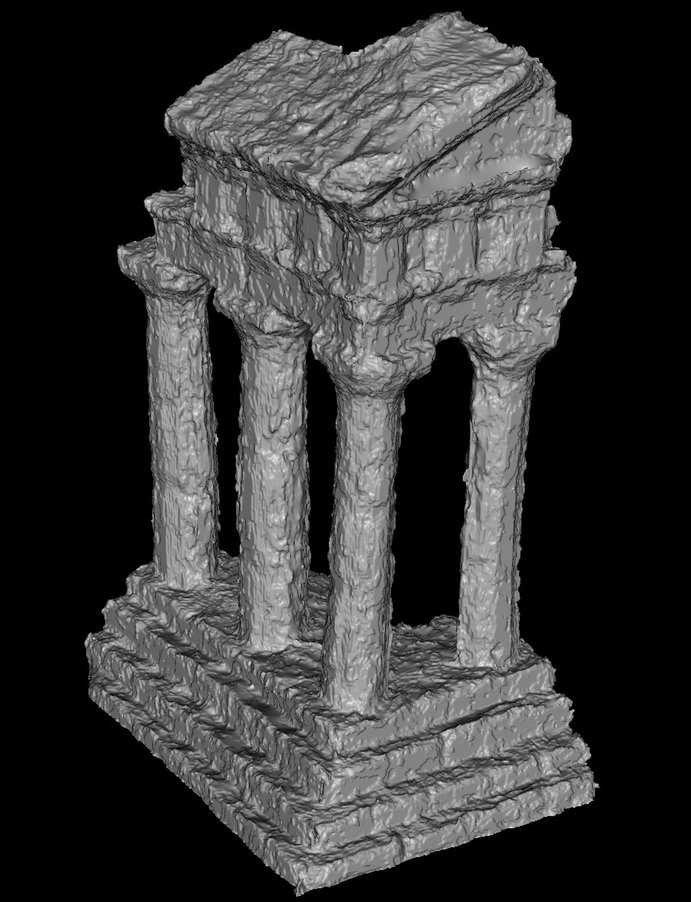} \\
 \includegraphics[width=0.13\linewidth]{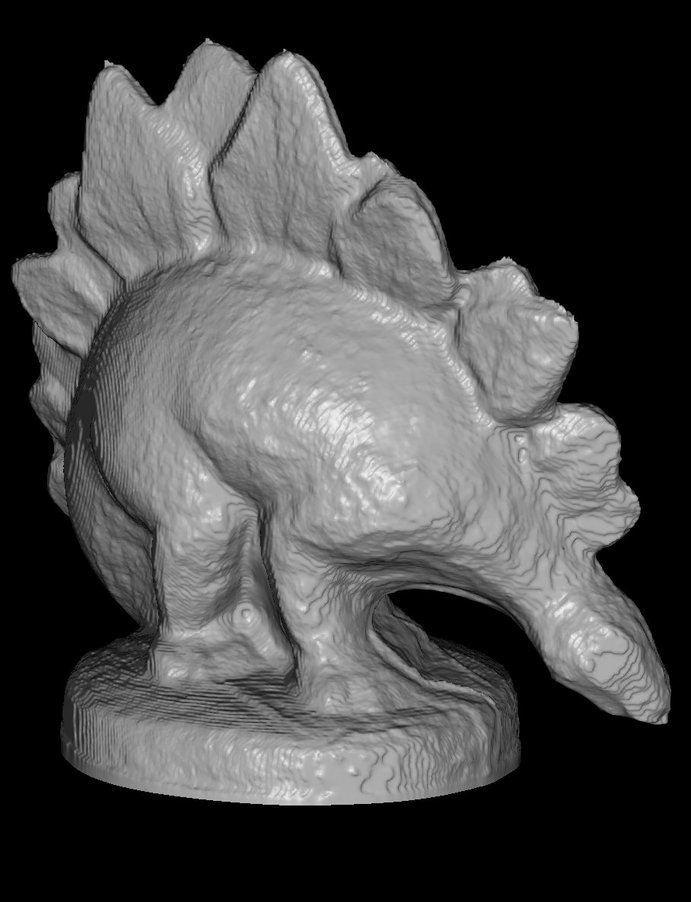} &
 \includegraphics[width=0.13\linewidth]{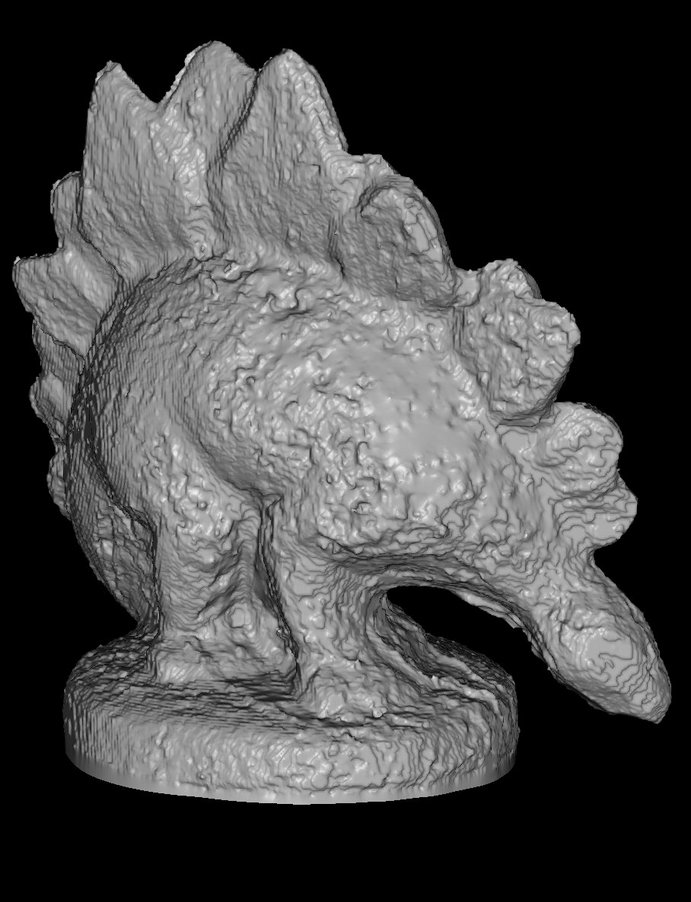} & 
  \includegraphics[width=0.13\linewidth]{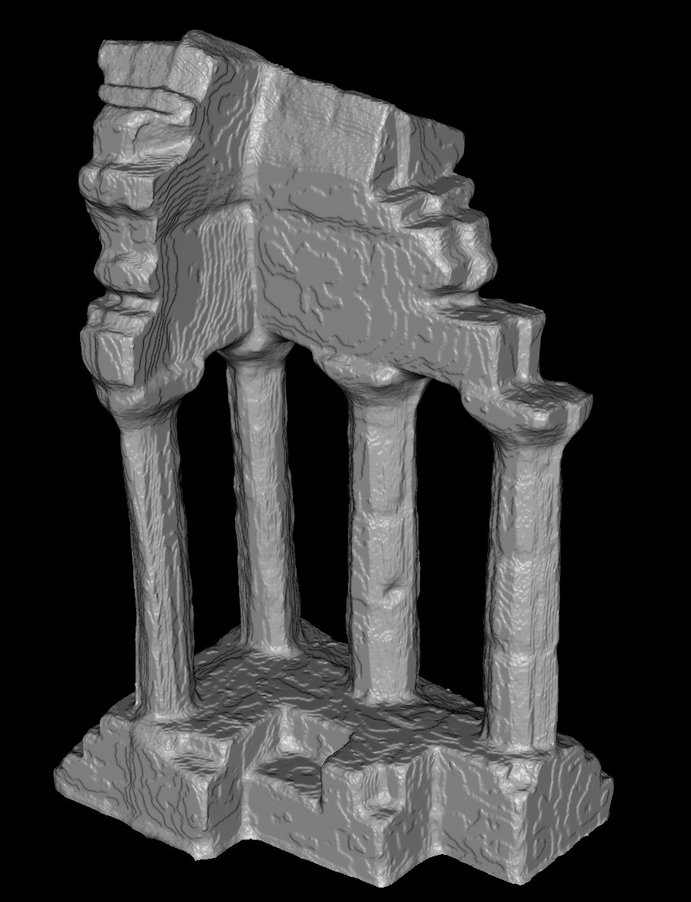} &
 \includegraphics[width=0.13\linewidth]{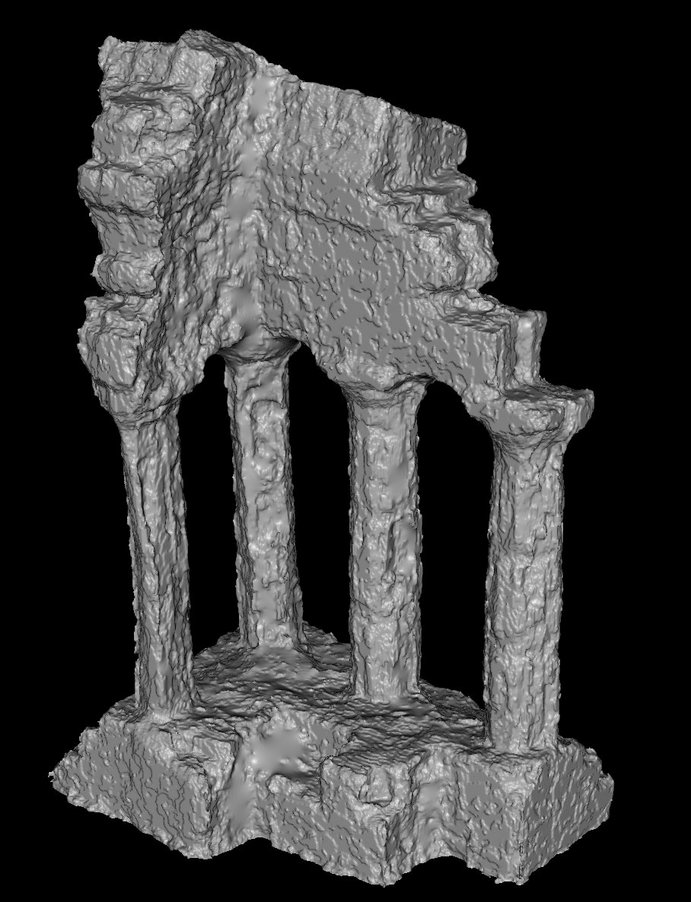}
    \end{tabular}
     \caption{\label{fig:additional_middlebury} Rendering of Middlebury results.}
 \end{center}
 \end{figure*}

  \begin{figure*}
   \begin{center}
   \begin{tabular}{cc}
   Providence & Catania\\
   \includegraphics[width=0.4\linewidth]{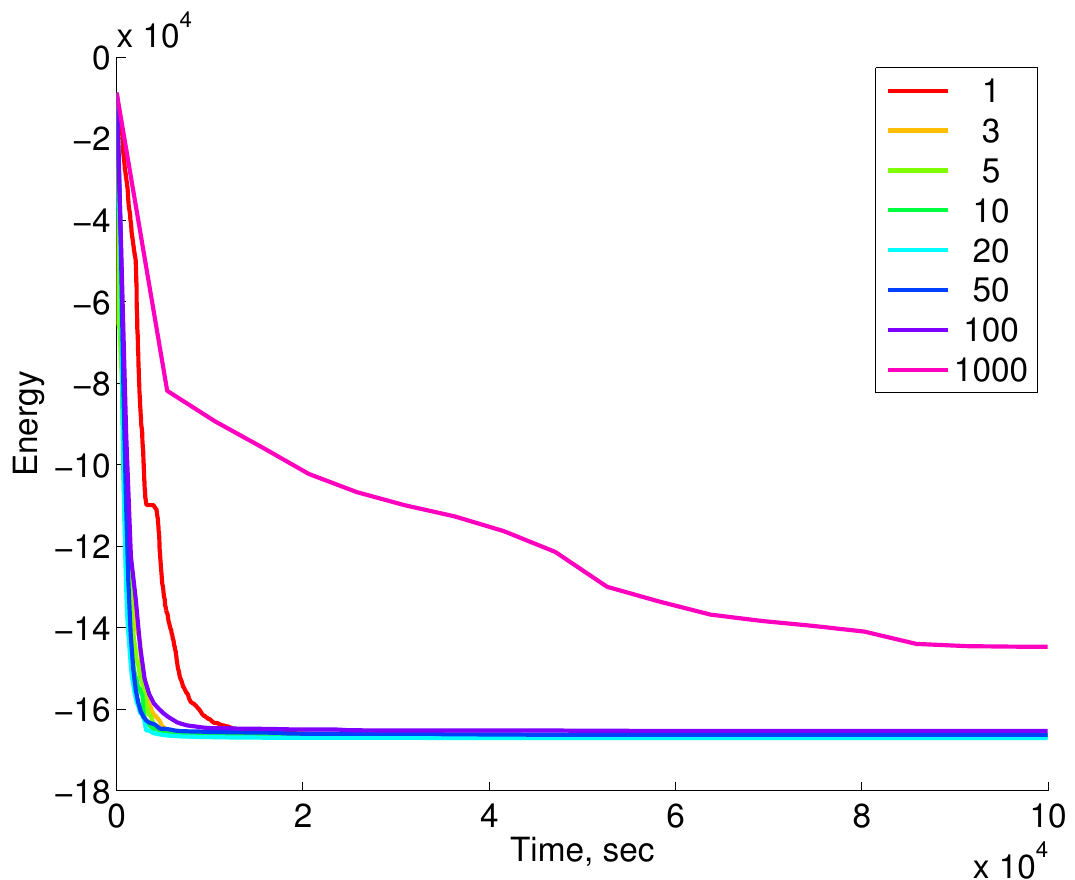} &
   \includegraphics[width=0.4\linewidth]{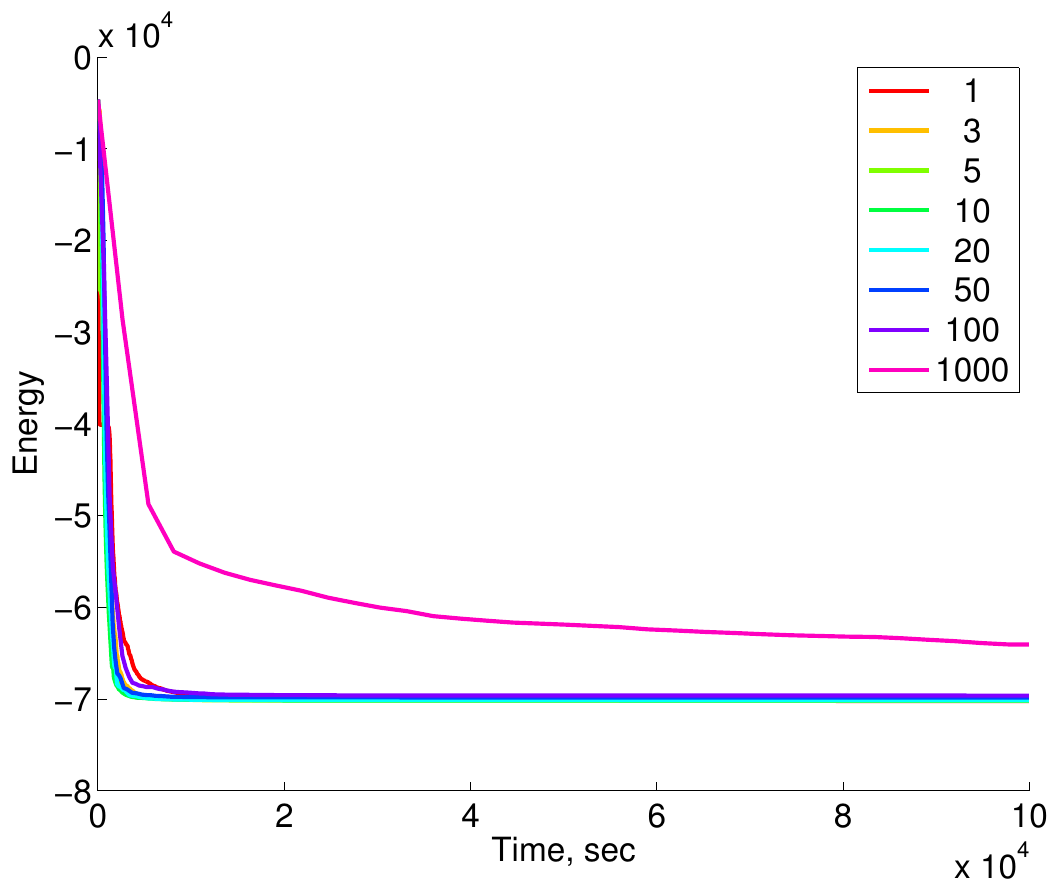}
   \end{tabular}
   \vspace{2mm}
   \caption{\label{fig:switches} Evolution of the energy over time for different numbers of iterations the convex minimization algorithm is run in between the execution of the majorization step.}
   \end{center}
   \end{figure*}
   
\begin{figure*}
   \begin{center}
      \includegraphics[width=0.4\linewidth]{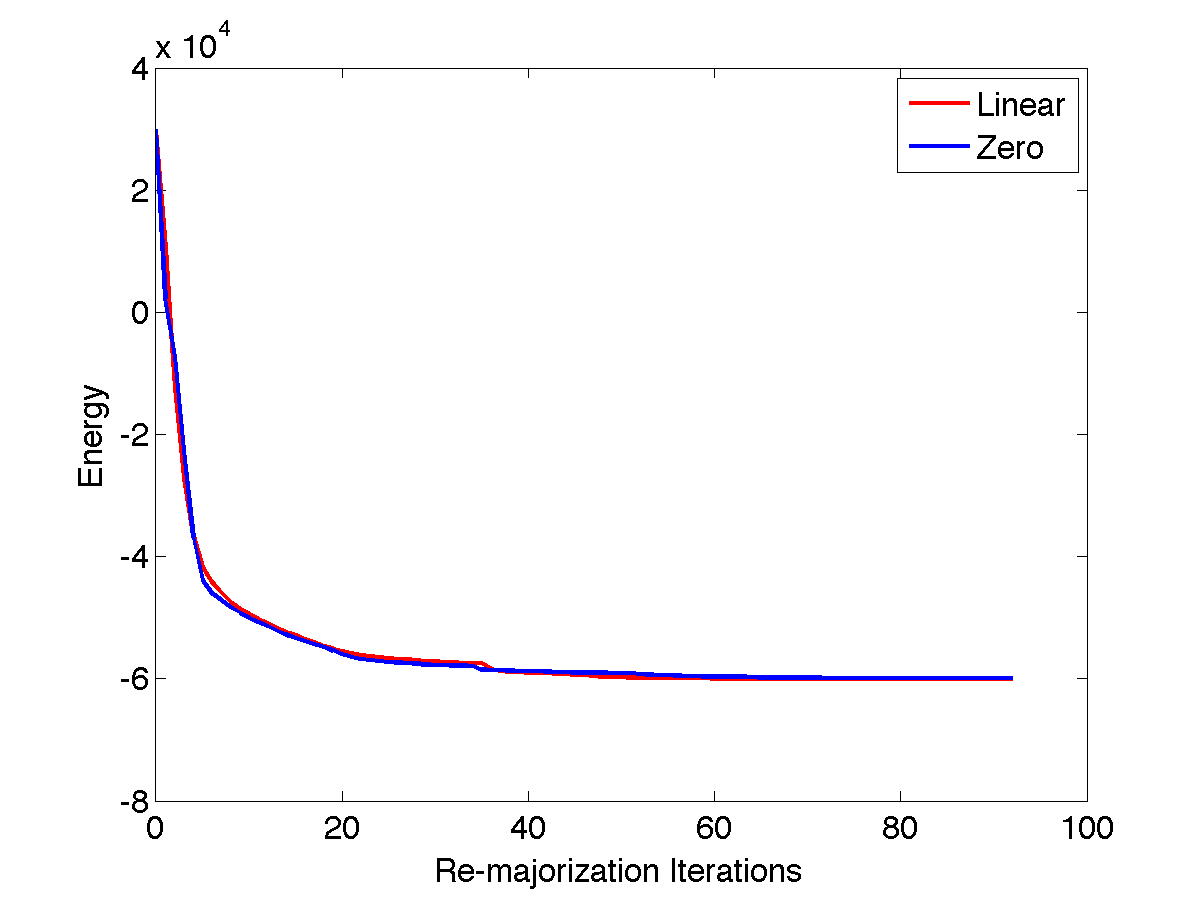}
   \vspace{2mm}
   \caption{\label{fig:tie} Evolution of the energy over iterations for two different re-majorization strategies. "Linear" means that the tie case is handled with the linear branch, "zero" means that constant branch with $0$ value is taken.}
   \end{center}
   \end{figure*}

\end{appendices}

\end{document}